%% file: main.tex
\documentclass{article}

\usepackage{titletoc}
\usepackage[page, header, toc, page]{appendix}
\usepackage{microtype}
\usepackage{graphicx}
\usepackage{subcaption}
\usepackage{booktabs}
\usepackage{enumitem}

\setlist{nolistsep}
\setlist{leftmargin=*} 

\usepackage{hyperref}
\hypersetup{
	colorlinks = true,
	citecolor  = darkblue,
	linkcolor  = darkblue,
	filecolor  = darkblue,
	urlcolor   = darkblue,
}

\usepackage[accepted]{icml2026}

\usepackage{amsmath}
\usepackage{amssymb}
\usepackage{mathtools}
\usepackage{amsthm}
\usepackage{mathabx}

\input{math_commands}

\usepackage[capitalize,noabbrev]{cleveref}

\theoremstyle{plain}
\newtheorem{theorem}{Theorem}
\newtheorem{proposition}{Proposition}
\newtheorem{lemma}{Lemma}

\theoremstyle{definition}
\newtheorem{definition}{Definition}

\theoremstyle{remark}
\newtheorem{remark}{Remark}

\mathtoolsset{showonlyrefs=true}
\icmltitlerunning{Zeroth-Order Optimization at the Edge of Stability}

\begin{document}

\twocolumn[
  \icmltitle{Zeroth-Order Optimization at the Edge of Stability}

  \icmlsetsymbol{equal}{*}

  \begin{icmlauthorlist}
    \icmlauthor{Minhak Song}{KAIST,KRAFTON}
    \icmlauthor{Liang Zhang}{ETHZ,MPI}
    \icmlauthor{Bingcong Li}{ETHZ}
    \icmlauthor{Niao He}{ETHZ}
    \icmlauthor{Michael Muehlebach}{MPI}
    \icmlauthor{Sewoong Oh}{UW}
  \end{icmlauthorlist}
  
  \icmlaffiliation{KAIST}{KAIST}
  \icmlaffiliation{KRAFTON}{KRAFTON}
  \icmlaffiliation{ETHZ}{ETH Zurich}
  \icmlaffiliation{MPI}{Max Planck Institute for Intelligent Systems}
  \icmlaffiliation{UW}{University of Washington}

  \icmlcorrespondingauthor{Minhak Song}{minhaksong@kaist.ac.kr}

  \icmlkeywords{zeroth-order optimization, edge of stability, mean square linear stability, step size, implicit bias}

  \vskip 0.3in
]

\printAffiliationsAndNotice{}

\begin{abstract}
Zeroth-order (ZO) methods are widely used when gradients are unavailable or prohibitively expensive, including black-box learning and memory-efficient fine-tuning of large models, yet their optimization dynamics in deep learning remain underexplored. In this work, we provide an explicit step size condition that exactly captures the (mean-square) linear stability of a family of ZO methods based on the standard two-point estimator. Our characterization reveals a sharp contrast with first-order (FO) methods: whereas FO stability is governed solely by the largest Hessian eigenvalue, mean-square stability of ZO methods depends on the entire Hessian spectrum. Since computing the full Hessian spectrum is infeasible in practical neural network training, we further derive tractable stability bounds that depend only on the largest eigenvalue and the Hessian trace. Empirically, we find that full-batch ZO methods operate at the \emph{edge of stability}: ZO-GD, ZO-GDM, and ZO-Adam consistently stabilize near the predicted stability boundary across a range of deep learning training problems. Our results highlight an implicit regularization effect specific to ZO methods, where large step sizes primarily regularize the Hessian trace, whereas in FO methods they regularize the top eigenvalue.
\end{abstract}

\section{Introduction}\label{sec:intro}

\begin{figure}[tbh]
    \centering
    \includegraphics[width=\linewidth]{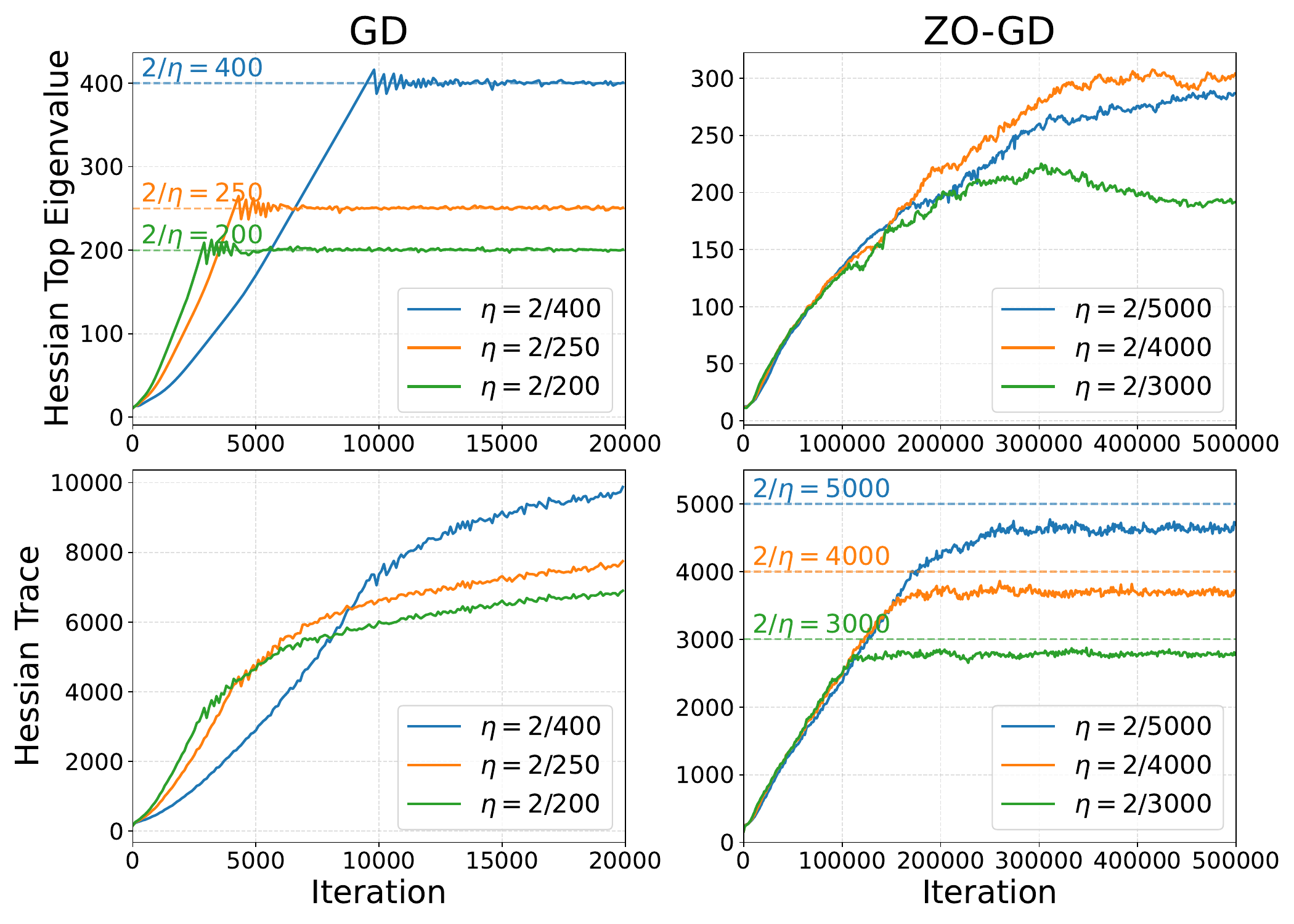}
    \caption{\textbf{EoS behaviors of FO and ZO methods are captured by different spectral quantities of the Hessian.} We train full-batch GD (left) and ZO-GD (right) with varying step sizes $\eta$ on a CNN for CIFAR-10. For GD, the largest eigenvalue of the Hessian $\lambda_{\max}(\mH_t)$ stabilizes near $2/\eta$. For ZO-GD, the trace of the Hessian $\Tr(\mH_t)$ instead stabilizes slightly below $2/\eta$.}\vspace{-18pt}
    \label{fig:gd_vs_zogd}
\end{figure}

\begin{figure*}
    \centering
    \includegraphics[width=\linewidth]{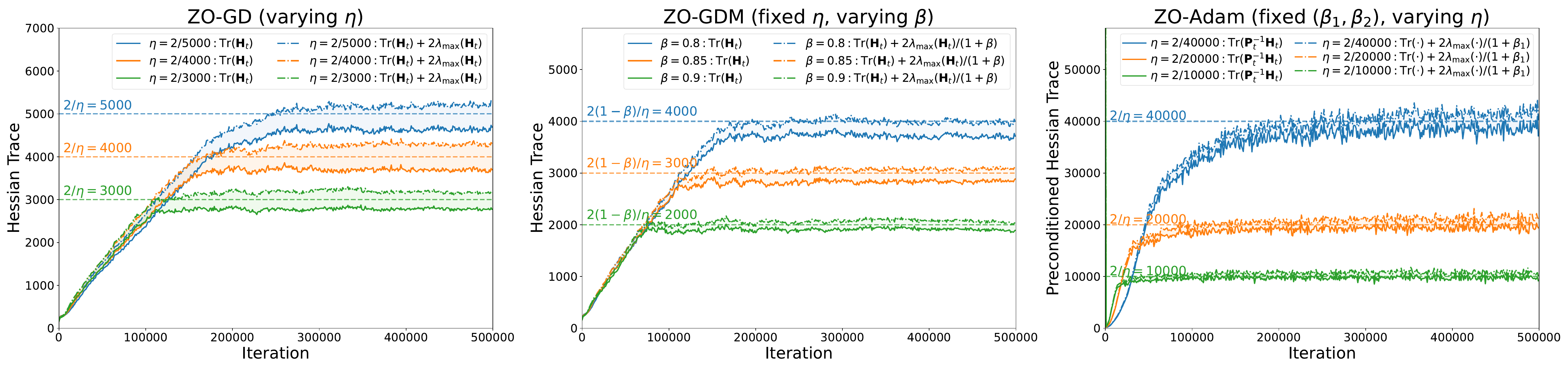}
    \caption{{\bf Zeroth-order methods operate at the mean-square edge of stability.} 
    We train full-batch ZO methods on a CNN for CIFAR-10 and track the curvature terms defining the mean-square stability interval from \cref{sec:stability:zo}. Across all panels, each color denotes one run; the solid curve is the lower-band term, the dash-dotted curve is the upper-band term, and the dashed line is the predicted stability threshold.
    {\bf Left} (ZO-GD, varying $\eta$): lower $\Tr(\mH_t)$, upper $\Tr(\mH_t)+2\lambda_{\max}(\mH_t)$, threshold $2/\eta$ in Eq.~\eqref{eq:eos-tracked-zogd}.
    {\bf Middle} (ZO-GDM, varying $\beta$ with fixed $\eta=10^{-4}$): lower $\Tr(\mH_t)$, upper $\Tr(\mH_t)+\frac{2}{1+\beta}\lambda_{\max}(\mH_t)$, threshold $2(1-\beta)/\eta$  in Eq.~\eqref{eq:eos-tracked-zogdm}.
    {\bf Right} (ZO-Adam, varying $\eta$ with fixed $(\beta_1,\beta_2)=(0.9,0.999)$): lower $\Tr(\mP_t^{-1}\mH_t)$, upper $\Tr(\mP_t^{-1}\mH_t)+\frac{2}{1+\beta_1}\lambda_{\max}(\mP_t^{-1}\mH_t)$, threshold $2/\eta$  in Eq.~\eqref{eq:eos-tracked-zoadam}.
    Across all methods, the threshold stays within (or very close to) the stability interval throughout training, indicating mean-square EoS behavior.}
\vspace{-15pt}
\label{fig:main}
\end{figure*}

Zeroth-order (ZO) optimization methods, which rely only on function evaluations, are widely used when gradients are unavailable, unreliable, or expensive to compute. Such a setting arises in black-box learning, derivative-free control, and increasingly in modern large-model pipelines, where memory and systems constraints can make backpropagation costly. Recent work shows that ZO methods based on two-point function evaluations can fine-tune large language models (LLMs) with competitive accuracy while substantially reducing memory usage and compute overhead~\citep{malladi2023finetuning,zhang2024revisting}. Despite their growing practical relevance, the training dynamics of ZO methods in deep learning remain far less understood than those of first-order (FO) optimizers.

For FO optimization in deep learning, one prominent empirical phenomenon is the \emph{edge of stability} (EoS). In full-batch gradient descent (GD) with step size $\eta$ on a quadratic objective $f_{\mathrm{quad}}(\vx)=\tfrac{1}{2}\vx^\top \mH\vx$, the iterates diverge when $\eta>2/\lambda_{\max}(\mH)$. In neural network training, however, optimization often remains well-behaved even at step sizes beyond this local quadratic threshold. Along the training trajectory $\{\vx_t\}$, the top eigenvalue of the Hessian $\lambda_{\max}(\mH_t)$ increases early in training and then stabilizes near the threshold $2/\eta$ over long horizons~\citep{cohen2021gradient}. This phenomenon has motivated a growing line of work aimed at understanding its mechanism and its connections to stability, curvature, and implicit regularization in deep learning optimization dynamics~\citep{ahn2022understanding,arora2022understanding,damian2023selfstabilization,cohen2025understanding}.

In this work, we ask whether a similar phenomenon arises in \emph{zeroth-order} training. At first glance, this is far from obvious: ZO methods update parameters by drawing random search directions at every iteration, and their stability cannot be understood through the deterministic arguments typically used to explain FO dynamics and stability. 

As a preliminary experiment, \cref{fig:gd_vs_zogd} compares full-batch (first-order) GD and zeroth-order gradient descent (ZO-GD) with varying step sizes $\eta$. For GD, the top Hessian eigenvalue $\lambda_{\max}(\mH_t)$ stabilizes near $2/\eta$, consistent with the behavior predicted by standard EoS theory. For ZO-GD, however, $\lambda_{\max}(\mH_t)$ does not exhibit the same trend; instead, perhaps surprisingly, the Hessian trace $\Tr(\mH_t)$ stabilizes slightly below $2/\eta$. This suggests that ZO training may be governed by a different stability mechanism than FO training, and that ZO methods may exhibit their own EoS phenomenon in deep learning.

These observations motivate the following fundamental questions: ($i$) \emph{do ZO methods operate at the edge of stability in neural network training}, and ($ii$) if so, \emph{which curvature-related quantity governs their stability}?

We introduce a mean-square linear stability theory for ZO methods to investigate these questions. In \cref{sec:stability}, we provide an exact step size characterization of mean-square linear stability under the linearized dynamics for a family of ZO optimizers, including ZO-GD and its momentum and preconditioned variants. Unlike FO methods, whose stability is governed solely by the largest Hessian eigenvalue, mean-square stability of ZO methods depends on the entire Hessian spectrum. Moreover, momentum affects stability in opposite ways: increasing $\beta$ enlarges the stable regime for GD with momentum (GDM), but shrinks it for ZO-GD with momentum (ZO-GDM). Since computing the full spectrum is typically infeasible, we also derive tractable bounds that depend only on the Hessian trace and the largest eigenvalue (summarized in \cref{tab:stability-comparison}).

Technically, our analysis introduces a cone-preserving linear covariance operator with a rank-one global coupling term to capture the dynamics of the ZO second-moment recursion. This reduction allows one to  characterize its spectral radius using the Krein--Rutman Theorem and derive the explicit mean-square stability conditions. 

In \cref{sec:eos}, we empirically find that full-batch ZO methods operate at the \emph{mean-square} edge of stability across architectures and tasks: ZO methods (ZO-GD, ZO-GDM, and ZO-Adam) consistently stabilize near the predicted mean-square stability boundary. Notably, this behavior is governed primarily by trace-based curvature quantities, providing new insight into how large step sizes implicitly bias ZO training toward solutions with small (preconditioned) Hessian trace.

\begin{table}[t]
\caption{Summary of linear stability conditions for FO and ZO methods under the linearized dynamics (\cref{def:linearized-dynamics}). 
For FO methods of GD, GDM, and Adam, we report the exact critical step size $\eta^\star$, and the mean and mean-square thresholds coincide. 
For ZO methods, we report lower and upper bounds on the mean-square critical step size $\eta^\star_{\mathrm{ms}}$; the critical step size in the mean, $\eta^\star_{\mathrm{mean}}$, matches the corresponding FO threshold. 
For Adam and ZO-Adam, the reported conditions correspond to the \emph{frozen-preconditioner} variants and are governed by the spectrum of the preconditioned Hessian $\mP^{-1}\mH$. 
See \cref{sec:stability} for details.}
\label{tab:stability-comparison}
\centering
\footnotesize
\setlength{\tabcolsep}{4pt}
\renewcommand{\arraystretch}{1.25}
\begin{tabular}{l|l}
\hline
\textbf{Method} & \textbf{Linear Stability Condition} \\
\hline\hline
\addlinespace[2pt]

GD
& $\eta^\star_{\mathrm{mean}}=\eta^\star_{\mathrm{ms}}=\dfrac{2}{\lambda_{\max}(\mH)}$ \\

GDM~{\tiny \citep{cohen2021gradient}}
& $\eta^\star_{\mathrm{mean}}=\eta^\star_{\mathrm{ms}}=\dfrac{2(1+\beta)}{\lambda_{\max}(\mH)}$ \\

Adam~{\tiny \citep{cohen2022adaptive}}
& $\eta^\star_{\mathrm{mean}}=\eta^\star_{\mathrm{ms}}=\dfrac{2(1+\beta_1)}{(1-\beta_1)\lambda_{\max}(\mP^{-1}\mH)}$ \\

\addlinespace[1pt]
\hline
\addlinespace[2pt]

ZO-GD~{\tiny (\cref{thm:zo-gd})}
& \begin{tabular}[t]{@{}l@{}}
$\eta^\star_{\mathrm{ms}} \le \dfrac{2}{\Tr(\mH)}$ \\
$\eta^\star_{\mathrm{ms}} \ge \dfrac{2}{\Tr(\mH)+2\lambda_{\max}(\mH)}$
\end{tabular} \\

\addlinespace[1pt]
\hline
\addlinespace[2pt]

ZO-GDM~{\tiny (\cref{thm:zo-gdm})}
& \begin{tabular}[t]{@{}l@{}}
$\eta^\star_{\mathrm{ms}} \le \dfrac{2(1-\beta)}{\Tr(\mH)}$ \\
$\eta^\star_{\mathrm{ms}} \ge \dfrac{2(1-\beta)}{\Tr(\mH)+\frac{2\lambda_{\max}(\mH)}{1+\beta}}$
\end{tabular} \\

\addlinespace[1pt]
\hline
\addlinespace[2pt]

ZO-Adam~{\tiny (\cref{thm:frozen-zo-adam})}
& \begin{tabular}[t]{@{}l@{}}
$\eta^\star_{\mathrm{ms}} \le \dfrac{2}{\Tr(\mP^{-1}\mH)}$ \\
$\eta^\star_{\mathrm{ms}} \ge \dfrac{2}{\Tr(\mP^{-1}\mH)+\frac{2\lambda_{\max}(\mP^{-1}\mH)}{1+\beta_1}}$
\end{tabular} \\

\addlinespace[1pt]
\hline
\end{tabular}
\end{table}

\section{Related work}
\label{sec:related}

{\bf Zeroth-order optimization.}
ZO methods optimize objectives using only function evaluations, typically via the standard two-point estimator along a random direction. Empirically, \citet{malladi2023finetuning} report that ZO methods can fine-tune LLMs with performance comparable to FO methods while achieving up to $12\times$ memory and up to $2\times$ GPU-hour reduction, highlighting ZO optimization as a promising approach for memory-efficient fine-tuning. ZO methods have also been applied to adversarial robustness~\citep{chen2017zoo,andriushchenko2020square}, reinforcement learning~\citep{salimans2017evolution}, private fine-tuning~\citep{zhang2024dpzero} and distributed learning~\citep{fang2022communication,qin2024federated,xu2024fwdllm}, where gradients may be noisy, unavailable, or expensive to compute or communicate.

On the theory side, most prior work studies ZO methods through the lens of classical (non)convex optimization~\citep{jamieson2012query,duchi2015,shamir2017optimal,wang2018stochastic,malladi2023finetuning,zhang2024dpzero}, emphasizing convergence under small step sizes and smoothness assumptions. Notably, \citet{zhang2025zerothorder} study the implicit bias of ZO-GD under smooth convex objectives and show that it favors solutions with small Hessian trace.

{\bf Edge of stability. }
Recent empirical work shows that FO methods often train near instability. In particular, \citet{cohen2021gradient} identify the edge of stability (EoS) in full-batch GD, where $\lambda_{\max}(\mH_t)$ grows early in training and then equilibrates near $2/\eta$. This observation has motivated extensive follow-up work on the mechanisms and implications of EoS~\citep{ahn2022understanding,arora2022understanding,wang2022analyzing,damian2023selfstabilization,song2023trajectory,zhu2023understanding,yoo2025understanding,cohen2025understanding}. EoS-type behavior has also been studied for momentum and adaptive optimizers~\citep{cohen2022adaptive}, mini-batch stochastic gradient descent (SGD)~\citep{lee2023a,andreyev2024edge}, and other families of FO optimizers, including sharpness-aware minimization~\citep{foret2021sharpnessaware,long2024sharpness} and schedule-free methods~\citep{defazio2024road,song2025through}.

{\bf Dynamical stability analysis of optimizers.}
A growing body of work studies optimization methods through the lens of dynamical stability. Recent work analyzes \emph{linear stability} by examining the behavior of the linearized dynamics under a local quadratic approximation, extending beyond GD to momentum methods~\citep{muehlebach2021optimization} and stochastic optimization. For SGD, prior analyses have characterized linear stability condition in the mean-square sense~\citep{wu2018how,granziol2022learning,velikanov2023a}, for higher moments~\citep{ma2021on}, and in probability~\citep{ziyin2023type}. Most closely related to our setting, \citet{mulayoff2024exact} derive the exact mean-square stability threshold of mini-batch SGD and show that it is monotonically non-decreasing in the batch size. Our work also studies mean-square linear stability, but for ZO methods, where stochasticity arises from the estimator directions even under full-batch training.

\section{Preliminaries}
\label{sec:prelim}

We consider the optimization problem
\begin{align}\label{eq:objective}
    \min_{\vx\in \R^d} \;f(\vx)\;,
\end{align}
for a loss function $f:\R^d\to \R$ and a model parameter $\vx$, and study the dynamics of zeroth-order (ZO) optimization methods. ZO methods  iteratively update a sequence of iterates $\{\vx_t\}_{t\ge 0}$ based solely on function evaluations without evaluating any gradients. 
We analyze the three most popular types of ZO methods: ZO-GD, ZO-GDM, and ZO-Adam.  

{\bf ZO Gradient Descent (ZO-GD)} replaces the gradient $\nabla f(\vx_t)$ in the GD update with a gradient estimate $\widehat{\nabla} f(\vx_t)$:
\begin{align}
    \vx_{t+1} \;=\; \vx_t - \eta\,\widehat{\nabla} f(\vx_t)\;,
    \label{eq:zo-gd}
\end{align}
for a step size $\eta>0$. 
We consider the standard two-point estimator~\citep{nesterov2017random}, defined  as
\begin{align}
\widehat{\nabla} f(\vx_t) \;:=\;
 \frac{f(\vx_t+\mu \vu_t) - f(\vx_t-\mu \vu_t)}{2\mu} \cdot \vu_t \;,
\label{eq:two-point-estimator}
\end{align}
where $\vu_t \overset{\mathrm{i.i.d.}}{\sim} \gN(\vzero, \mI)$ and $\mu>0$ is a smoothing parameter.
In general, $\widehat{\nabla} f(\vx_t)$ is a biased estimator of $\nabla f(\vx_t)$, with bias that vanishes as $\mu\to 0$.

{\bf ZO Gradient Descent with Momentum (ZO-GDM)} combines Polyak momentum terms with gradient estimates: 
\begin{align}
    \vm_{t+1} &= \beta \vm_t + \widehat{\nabla} f(\vx_t)\;,
    \label{eq:zo-gdm-m}
    \\
    \vx_{t+1} &= \vx_t - \eta \vm_{t+1}\;,
    \label{eq:zo-gdm-x}
\end{align}
for a step size $\eta>0$ and a momentum parameter $\beta\in[0,1)$. 

{\bf ZO-Adam} combines adaptive moment estimation (Adam) with gradient estimates: 
\begin{align}
    \vm_{t+1} &= \beta_1 \vm_t + (1-\beta_1)\widehat{\nabla} f(\vx_t)\;,
    \label{eq:zo-adam-m}
    \\
    \vx_{t+1} &= \vx_t - \eta \mP_{t+1}^{-1}\vm_{t+1}\;,
    \label{eq:zo-adam-x}
\end{align}
for  step size $\eta>0$, momentum parameters $\beta_1,\beta_2\in[0,1)$, and $\epsilon>0$, 
where $\mP_{t+1}$ is a preconditioner defined using an exponential moving average (EMA) of squared (estimated) gradients:
\begin{align}
    \bm \nu_{t+1}
    &=
    \beta_2 \bm \nu_t + (1-\beta_2)\,\widehat{\nabla} f(\vx_t) \odot \widehat{\nabla} f(\vx_t)\;,
    \label{eq:zo-adam-v}
    \\
    \mP_{t+1}
    &=
    (1-\beta_1^{t+1})
    \left[
    \mathrm{diag}\left(\sqrt{\frac{\bm \nu_{t+1}}{1-\beta_2^{t+1}}}\right)
    +
    \epsilon \mI
    \right]\;,
    \label{eq:zo-adam-P}
\end{align}
and $\odot$ denotes element-wise multiplication.
This form is equivalent to the standard bias-corrected Adam update, written as a preconditioned momentum step.

In practical neural network training, the short-term stability behavior of Adam is often well approximated by a \emph{frozen-preconditioner} variant, in which the preconditioner is held fixed at its current value~\citep{cohen2022adaptive}. Motivated by this, we also consider the corresponding ZO analogue. 

{\bf Frozen ZO-Adam} uses a fixed  preconditioner $\mP\succ 0$ with a step size $\eta>0$: 
\begin{align}
    \vm_{t+1} &= \beta_1 \vm_t + (1-\beta_1)\widehat{\nabla} f(\vx_t)\;,
    \label{eq:frozen-zo-adam-m}
    \\
    \vx_{t+1} &= \vx_t - \eta \mP^{-1}\vm_{t+1}\;,
    \label{eq:frozen-zo-adam-x}
\end{align}
and a momentum parameter $\beta_1\in [0,1)$.
This optimizer is introduced purely for theoretical analysis and serves as the ZO analogue of \emph{Frozen Adam}, which was used to study the \emph{adaptive edge of stability} of Adam~\citep{cohen2022adaptive}.

\section{Linear stability analysis}
\label{sec:stability}

Directly analyzing the full dynamics of optimizers in deep learning is typically intractable. Instead, we adopt the standard dynamical systems approach of studying stability near a local minimizer via the \emph{linearized dynamics}. Exponential stability of the linearized dynamics implies local stability of the corresponding nonlinear dynamics near the equilibrium, which justifies linear stability analysis as a principled tool~\citep{khalil2002nonlinear}.

\begin{definition}[Linearized dynamics]
\label{def:linearized-dynamics}
Let $\vx^\star$ be a local minimizer of $f$, and assume that $f$ is twice differentiable in a neighborhood of $\vx^\star$. Let $\mH := \nabla^2 f(\vx^\star)$ be the Hessian at $\vx^\star$. 
The \emph{linearized dynamics} of an optimizer around $\vx^\star$ are the dynamics obtained by applying the optimizer to the quadratic Taylor approximation of $f$ at $\vx^\star$,
\begin{align}
    f_{\mathrm{quad}}(\vx)
    :=
    f(\vx^\star) + \frac{1}{2}(\vx-\vx^\star)^\top \mH (\vx-\vx^\star)\;.
    \label{eq:quad-model}
\end{align}
Throughout, we assume $\mH\succeq 0$ and $\mH\neq \vzero$. Let $\lambda_1 \ge \cdots \ge \lambda_d \ge 0$ denote the eigenvalues of $\mH$, and define $\lambda_{\max}(\mH):=\lambda_1$ and $\Tr(\mH):=\sum_{i=1}^d \lambda_i$.
\end{definition}

We next formalize the notion of linear stability for the resulting (possibly stochastic) optimizer dynamics.

\begin{definition}[Linear stability]
\label{def:linear-stability}
Let $\vx^\star$ be as in Definition~\ref{def:linearized-dynamics}, and let $\{\vx_t\}_{t\ge 0}$ denote the iterates
generated by an optimizer applied to $f_{\mathrm{quad}}$.

We say the optimizer is \emph{linearly stable in the mean} if
\begin{align*}
    \sup_{t\ge 0} \big\|\E[\vx_t-\vx^\star]\big\| < \infty
    \quad \text{for every initialization } \vx_0\in\R^d.
\end{align*}
We say the optimizer is \emph{mean-square linearly stable} if
\begin{align*}
    \sup_{t\ge 0} \E\big[\|\vx_t-\vx^\star\|^2\big] < \infty
    \quad \text{for every initialization } \vx_0\in\R^d.
\end{align*}
Mean-square linear stability implies linear stability in the mean by Jensen’s inequality. For deterministic optimizers, the expectation is redundant, and the two notions coincide.
\end{definition}

Our goal is to theoretically characterize the 
\emph{critical step size} that guarantees linear stability of ZO methods and empirically connect it to the edge of stability phenomena. 

\begin{definition}[Critical step size]
\label{def:critical-stepsize}
Consider an optimizer with step size $\eta>0$ applied to $f_{\mathrm{quad}}$, producing iterates $\{\vx_t\}_{t\ge 0}$.
The \emph{critical step size in the mean} is defined as
\begin{align}
    \eta_{\mathrm{mean}}^\star
    \!\!:=\!
    \sup\Big\{
        \eta>0: \forall\vx_0\in\R^d,\
        \!\sup_{t\ge 0}\big\|\E[\vx_t-\vx^\star]\big\| \!<\! \infty
    \Big\},
    \label{eq:eta-mean-star}
\end{align}
and the \emph{critical step size in the mean-square} is defined as
\begin{align}
    \eta_{\mathrm{ms}}^\star
    \!:=\!
    \sup\Big\{
        \eta>0: \forall\vx_0\in\R^d,\
        \!\sup_{t\ge 0}\E\big[\|\vx_t-\vx^\star\|^2\big] \!<\! \infty\!
    \Big\}.
    \label{eq:eta-ms-star}
\end{align}
\end{definition}
Next, we  review known stability thresholds for FO methods (\cref{sec:stability:fo}) and present our main results for ZO methods (\cref{sec:stability:zo}).
All proofs are deferred to Appendix~\ref{appendix:proof}.

\subsection{Linear stability of first-order methods}
\label{sec:stability:fo}

The first-order (FO) counterparts of the ZO optimizers in \cref{sec:prelim}, namely GD, GDM, and Frozen Adam, are deterministic, so their mean and mean-square linear stability conditions coincide. A key takeaway is that FO linear stability is governed solely by the top eigenvalue of the (preconditioned) Hessian, as summarized below.
 
\begin{proposition}[Stability of GD]\label{prop:gd}
The critical step size of GD is $\ \eta^\star_{\mathrm{mean}} = \eta^\star_{\mathrm{ms}}=2 / \lambda_{\max}(\mH)$.
\end{proposition}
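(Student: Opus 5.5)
The plan is to write GD on the quadratic model $f_{\mathrm{quad}}$ of \cref{def:linearized-dynamics} as a linear recursion in the error $\ve_t := \vx_t-\vx^\star$ and then diagonalize. Since $\nabla f_{\mathrm{quad}}(\vx)=\mH(\vx-\vx^\star)$, the GD update $\vx_{t+1}=\vx_t-\eta\nabla f_{\mathrm{quad}}(\vx_t)$ gives
\begin{align}
    \ve_{t+1} = (\mI-\eta\mH)\,\ve_t, \qquad \ve_t = (\mI-\eta\mH)^t \ve_0 .
\end{align}
GD is deterministic, so $\E[\ve_t]=\ve_t$ and $\E\|\ve_t\|^2=\|\ve_t\|^2$. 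Both conditions in \cref{def:linear-stability} therefore reduce to $\sup_t\|\ve_t\|<\infty$ for every $\ve_0$, and hence $\eta^\star_{\mathrm{mean}}=\eta^\star_{\mathrm{ms}}$.

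Next, take an orthonormal eigenbasis $\{\vv_i\}$ of the symmetric matrix $\mH$ with eigenvalues $\lambda_i\ge 0$. Writing $\ve_0=\sum_i c_i\vv_i$, we get
\begin{align}
    \|\ve_t\|^2=\sum_{i=1}^d c_i^2(1-\eta\lambda_i)^{2t}.
\end{align}
This is bounded for every $\ve_0$ exactly when $|1-\eta\lambda_i|\le 1$ for all $i$. Because $\eta>0$ and $\lambda_i\ge0$, the condition is equivalent to $\eta\lambda_i\le 2$ for all $i$, i.e.\ $\eta\le 2/\lambda_1$. The assumption $\mH\neq\vzero$ ensures $\lambda_1>0$, so this threshold is finite.

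For the converse, if $\eta>2/\lambda_{\max}(\mH)$, take $\ve_0=\vv_1$. Then $\|\ve_t\|=|1-\eta\lambda_1|^t$ with $|1-\eta\lambda_1|>1$, so the norm diverges. Hence the set of stable step sizes is exactly $(0,2/\lambda_{\max}(\mH)]$, and its supremum is $2/\lambda_{\max}(\mH)$.

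There is no real obstacle here. The one point requiring care is the boundary case $\eta=2/\lambda_1$: there $|1-\eta\lambda_1|=1$, so the iterates oscillate but stay bounded. This case does not affect the supremum in \cref{def:critical-stepsize}.
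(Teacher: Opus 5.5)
Your proof is correct. It is the standard argument the paper implicitly relies on: GD is deterministic, so the mean and mean-square conditions coincide, and diagonalizing $\mI-\eta\mH$ shows that boundedness for all initializations holds exactly when $\eta \le 2/\lambda_{\max}(\mH)$, so the supremum is $2/\lambda_{\max}(\mH)$. The paper states this classical fact without writing out a proof (its appendix treats only the zeroth-order methods), and your eigenbasis computation, including the remark on the boundary case, supplies it.
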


The stability condition of GD with Momentum (GDM) was established in Theorem~2 of \citet{cohen2021gradient}.

\begin{proposition}[Stability of GDM]\label{prop:gdm}
The critical step size of GDM is $\ \eta^\star_{\mathrm{mean}} = \eta^\star_{\mathrm{ms}}=2(1+\beta) / \lambda_{\max}(\mH)$.
\end{proposition}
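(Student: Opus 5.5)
Since GDM is deterministic, the mean and mean-square notions coincide, so it suffices to find the largest $\eta$ for which the iterates stay bounded for every initialization. On $f_{\mathrm{quad}}$ with $\ve_t := \vx_t-\vx^\star$, the updates \eqref{eq:zo-gdm-m}--\eqref{eq:zo-gdm-x} with the exact gradient $\mH\ve_t$ read $\vm_{t+1}=\beta\vm_t+\mH\ve_t$ and $\ve_{t+1}=\ve_t-\eta\vm_{t+1}$. Eliminating $\vm$ via $\vm_t=(\ve_{t-1}-\ve_t)/\eta$ gives the heavy-ball recursion
\begin{align}
\ve_{t+1} = \big((1+\beta)\mI-\eta\mH\big)\ve_t - \beta\,\ve_{t-1}.
\end{align}
Diagonalizing $\mH=\mQ\,\mathrm{diag}(\lambda_i)\mQ^\top$ decouples this into independent scalar second-order recursions
\begin{align}
z_{t+1}=(1+\beta-\eta\lambda_i)z_t-\beta z_{t-1},
\end{align}
one for each eigendirection. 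The initial momentum $\vm_0$, e.g.\ $\vzero$, fixes $(z_0,z_1)$. Any pair can be realized by choosing $\vx_0$ and $\vm_0$; with $\vm_0=\vzero$ only the direction $\vx_0$ is free, and I will handle that case explicitly.

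For each $i$, the characteristic polynomial is $p(r)=r^2-(1+\beta-\eta\lambda_i)r+\beta$. By the Jury/Schur--Cohn conditions for a monic quadratic $r^2+a r+b$, both roots lie strictly inside the unit disk iff $|b|<1$ and $|a|<1+b$. Here $b=\beta\in[0,1)$, so the condition becomes $|1+\beta-\eta\lambda_i|<1+\beta$. This is equivalent to $0<\eta\lambda_i<2(1+\beta)$.

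For $\lambda_i=0$ the roots are $1$ and $\beta$, so the component stays bounded (it is constant plus a decaying term). Hence if $\eta<2(1+\beta)/\lambda_{\max}(\mH)$, every mode is bounded and the iterates are stable. If $\eta>2(1+\beta)/\lambda_1$, then $p(-1)=2(1+\beta)-\eta\lambda_1<0$ while $p(r)\to+\infty$ as $r\to-\infty$, so there is a real root $r<-1$. Initializing along the top eigenvector with $\vm_0=\vzero$ excites this root generically. To guarantee it, I check that the initial condition is not an eigenvector of the companion matrix for the other root; this holds for a suitable $\vx_0$, or alternatively one can choose $\vm_0$ freely. The iterates then diverge. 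Taking the supremum gives $\eta^\star_{\mathrm{mean}}=\eta^\star_{\mathrm{ms}}=2(1+\beta)/\lambda_{\max}(\mH)$.

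The main obstacle is the careful root-location argument together with the edge cases: the zero eigenvalue, where there is a root at $1$ but the mode is still bounded, and the requirement that divergence be excited from the allowed initializations. The boundary $\eta=2(1+\beta)/\lambda_1$ does not affect the supremum.
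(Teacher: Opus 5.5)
Your proof is correct and takes the standard route behind the result the paper cites (Theorem~2 of \citet{cohen2021gradient}; the paper gives no proof of its own): you reduce GDM to the heavy-ball recursion, decouple it along the eigenvectors of $\mH$, and apply the Jury/Schur--Cohn test to $r^2-(1+\beta-\eta\lambda_i)r+\beta$. The one step you assert rather than verify is that the root $r_1<-1$ is actually excited. Your fallback of choosing $\vm_0$ freely is not available, because the paper fixes $\vm_0=\vzero$. The check is nonetheless immediate: setting $z_{-1}:=z_0$ makes the recursion hold at $t=0$ (it reproduces $z_1=(1-\eta\lambda_1)z_0$), so $z_t=c_1r_1^{t+1}+c_2r_2^{t+1}$ for $t\ge -1$. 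Then $c_1=0$ would force $c_2(1-r_2)=0$, which is impossible for $z_0\neq 0$ since $|r_2|=\beta/|r_1|<1$.
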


The stability condition of Frozen Adam was established in Lemma~2 and Proposition~1 of \citet{cohen2022adaptive}.

\begin{proposition}[Stability of Frozen Adam]\label{prop:frozen-adam}
The critical step size of Frozen Adam is 
\begin{align}
\eta^\star_{\mathrm{mean}}=\eta^\star_{\mathrm{ms}}=\frac{2(1+\beta_1)}{(1-\beta_1)\lambda_{\max}(\mP^{-1}\mH)}\;.
\end{align}
\end{proposition}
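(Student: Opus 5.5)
Since Frozen Adam is deterministic on $f_{\mathrm{quad}}$, the expectations in \cref{def:linear-stability} are trivial and $\eta^\star_{\mathrm{mean}}=\eta^\star_{\mathrm{ms}}$ automatically. It therefore suffices to find the supremal step size for which every trajectory of the linear recursion stays bounded. Write $\ve_t:=\vx_t-\vx^\star$, so that $\nabla f_{\mathrm{quad}}(\vx_t)=\mH\ve_t$. The Frozen Adam updates then read $\vm_{t+1}=\beta_1\vm_t+(1-\beta_1)\mH\ve_t$ and $\ve_{t+1}=\ve_t-\eta\mP^{-1}\vm_{t+1}$. The plan is to reduce this to GDM on a transformed quadratic and then invoke \cref{prop:gdm}.

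The reduction is a change of variables. Since $\mP\succ0$, set $\vy_t:=\mP^{1/2}\ve_t$, $\vn_t:=\mP^{-1/2}\vm_t$ and $\tilde\mH:=\mP^{-1/2}\mH\mP^{-1/2}\succeq0$. The recursion becomes
\[
\vn_{t+1}=\beta_1\vn_t+(1-\beta_1)\tilde\mH\vy_t,\qquad \vy_{t+1}=\vy_t-\eta\,\vn_{t+1}.
\]
Rescaling $\vn_t$ by $(1-\beta_1)$ turns this into exactly GDM with momentum $\beta=\beta_1$ and step size $\tilde\eta=\eta(1-\beta_1)$ on the quadratic with Hessian $\tilde\mH$. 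Boundedness is preserved under this invertible linear change of coordinates. Moreover, $\tilde\mH$ is similar to $\mP^{-1}\mH$, so $\lambda_{\max}(\tilde\mH)=\lambda_{\max}(\mP^{-1}\mH)$, and $\tilde\mH\neq\vzero$.

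\cref{prop:gdm} then states that the GDM iterates stay bounded for all initializations iff $\tilde\eta<2(1+\beta_1)/\lambda_{\max}(\tilde\mH)$, with the supremum attained at equality. Translating back gives $\eta^\star=2(1+\beta_1)/\big((1-\beta_1)\lambda_{\max}(\mP^{-1}\mH)\big)$.

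One subtlety concerns initialization. The paper's definition quantifies over all $\vx_0$, whereas the momentum typically starts at $\vm_0=\vzero$. I would verify that the argument behind \cref{prop:gdm} also applies from $\vm_0=\vzero$. To do this, diagonalize $\tilde\mH=\mathbf{Q}\boldsymbol{\Lambda}\mathbf{Q}^\top$, which decouples the system into $2\times2$ blocks with characteristic polynomial $z^2-(1+\beta-\tilde\eta\lambda_i)z+\beta=0$. By the Jury conditions ($|\beta|<1$ and $|1+\beta-\tilde\eta\lambda_i|<1+\beta$), both roots lie strictly inside the unit disk iff $0<\tilde\eta\lambda_i<2(1+\beta)$. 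If $\tilde\eta\lambda_1>2(1+\beta)$, there is a root of modulus greater than $1$. This root is excited from $\vm_0=\vzero$ with $\ve_0$ along the top eigenvector, because the position component of the eigenvector is nonzero, so divergence follows.

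Zero eigenvalues give a root $z=1$ whose mode stays bounded: the position remains constant and the momentum decays. This does not affect boundedness and is the only degenerate case to check. I expect this eigenvector/initialization detail to be the only real obstacle; the rest is the direct substitution above.
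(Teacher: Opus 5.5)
Your proof is correct and follows essentially the paper's route. The paper defers this proposition to Lemma~2 and Proposition~1 of Cohen et al.\ (2022), and its own proof of \cref{thm:frozen-zo-adam} uses exactly your change of variables $\mP^{1/2}\vx$, $\mP^{-1/2}\vm/(1-\beta_1)$, $\tilde\eta=(1-\beta_1)\eta$ to reduce to GDM on $\mP^{-1/2}\mH\mP^{-1/2}$.

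One small tightening in your initialization check: the unstable mode is excited from $(x_0,0)$ because $(1,0)^\top$ is not an eigenvector of the $2\times2$ block when $\lambda_1>0$ (equivalently, the \emph{left} eigenvector has a nonzero position component). The nonzero position component of the right eigenvector, which is the fact you cite, is instead what makes the divergence show up in $\vx_t$ and not only in $\vm_t$.
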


\subsection{Linear stability of zeroth-order methods}
\label{sec:stability:zo}

The inherent stochasticity in ZO updates fundamentally changes the linear stability as shown in \cref{fig:gd_vs_zogd}. 
However, the \emph{mean} dynamics of a ZO method matches that of its FO counterpart, and $\eta^\star_{\mathrm{mean}}$ coincides with the FO critical step size presented in  \cref{sec:stability:fo}.
This is due to the fact that  under the quadratic model $f_{\mathrm{quad}}$ in \cref{def:linearized-dynamics}, the two-point estimator is \emph{unbiased}:
$    \E\big[\widehat{\nabla} f_{\mathrm{quad}}(\vx_t)\big] = \nabla f_{\mathrm{quad}}(\vx_t)$.  
The intricacy of ZO linear stability is only captured by the \emph{mean-square} stability.
In particular, we show that 
$\eta^\star_{\mathrm{ms}}$ for ZO-GD, ZO-GDM, and Frozen ZO-Adam, depends on the entire eigen spectrum of the (preconditioned) Hessian and is dominated by its trace value.  

\begin{theorem}[Stability of ZO-GD]\label{thm:zo-gd}
For ZO-GD,
$\eta^\star_{\mathrm{mean}}=2/\lambda_{\max}(\mH)$, and the mean-square critical step size $\eta^\star_{\mathrm{ms}}$ is the unique $\eta>0$ satisfying
\begin{align}
    \eta \lambda_{\max}(\mH) < 1
    \quad\text{and}\quad
    \sum_{i=1}^d \frac{\eta \lambda_i}{2(1-\eta\lambda_i)} = 1\;,
\end{align}
which admits the bounds
\begin{align}
    \frac{2}{\Tr(\mH)+2\lambda_{\max}(\mH)} \le \eta^\star_{\mathrm{ms}} \le \frac{2}{\Tr(\mH)}\;.
\label{eq:bound_zo-gd}
\end{align}
\end{theorem}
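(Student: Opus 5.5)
We work with the error $\ve_t := \vx_t-\vx^\star$ on $f_{\mathrm{quad}}$. For a quadratic, the two-point estimator is exact along the search direction: $\widehat{\nabla} f_{\mathrm{quad}}(\vx_t) = \vu_t\vu_t^\top \mH \ve_t$, independently of $\mu$. Hence ZO-GD becomes the random linear recursion
\begin{align}
\ve_{t+1} = (\mI - \eta\,\vu_t\vu_t^\top \mH)\,\ve_t .
\end{align}
Taking expectations and using $\E[\vu_t\vu_t^\top]=\mI$ gives $\E\ve_{t+1} = (\mI-\eta\mH)\E\ve_t$. This is exactly GD, so $\eta^\star_{\mathrm{mean}}=2/\lambda_{\max}(\mH)$ by \cref{prop:gd}.

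For the mean-square part, set $\mSigma_t := \E[\ve_t\ve_t^\top]$ and use the Gaussian fourth-moment identity $\E[\vu\vu^\top \mA \vu\vu^\top] = \mA+\mA^\top+\Tr(\mA)\mI$. This gives the linear covariance recursion
\begin{align}
\mSigma_{t+1} = \mSigma_t - \eta(\mH\mSigma_t+\mSigma_t\mH) + \eta^2\big(2\mH\mSigma_t\mH + \Tr(\mH\mSigma_t\mH)\mI\big)
=: \mathcal{T}(\mSigma_t).
\end{align}
We diagonalize $\mH=\mQ\mLambda\mQ^\top$ and work in the eigenbasis. The diagonal entries $s_i := (\mQ^\top\mSigma_t\mQ)_{ii}$ then evolve in a closed system:
\begin{align}
s_i^{+} = (1-\eta\lambda_i)^2 s_i + \eta^2\lambda_i^2 s_i + \eta^2\sum_j \lambda_j^2 s_j .
\end{align}
The off-diagonal entries evolve independently with multipliers $(1-\eta\lambda_i)(1-\eta\lambda_j)+2\eta^2\lambda_i\lambda_j$, and there is no $\Tr$ coupling for them. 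Since $\E\|\ve_t\|^2=\sum_i s_i$ and $|\Sigma_{ij}|\le\sqrt{s_is_j}$, mean-square stability is equivalent to boundedness of the diagonal system for every initialization with $s\ge 0$.

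The diagonal system is $\vs^{+} = (\mD + \eta^2 \vone \vw^\top)\vs$ with $\mD=\mathrm{diag}(a_i)$, $a_i := (1-\eta\lambda_i)^2+\eta^2\lambda_i^2$, and $w_i:=\lambda_i^2$. This is the entrywise nonnegative matrix with a rank-one coupling that the paper calls cone-preserving. By Perron--Frobenius (the finite-dimensional Krein--Rutman theorem), its spectral radius $\rho$ is an eigenvalue with a nonnegative eigenvector. Boundedness for all $\vs_0\ge0$ then amounts to $\rho<1$, with the boundary case $\rho=1$ excluded by taking the supremum.

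\textbf{Characterizing $\rho<1$.} Coordinates with $\lambda_i=0$ have $a_i=1$ but do not feed back into the coupling, so they stay constant and bounded; we restrict to $\lambda_i>0$. There, $\rho<1$ holds iff $a_i<1$ for all $i$ and the secular equation $1=\eta^2\sum_i w_i/(\rho-a_i)$ has no root with $\rho\ge1$. The function $g(r)=\eta^2\sum_i \lambda_i^2/(r-a_i)$ is decreasing on $r>\max_i a_i$, so this reduces to $g(1)<1$. Now $a_i<1$ is equivalent to $2\eta^2\lambda_i^2<2\eta\lambda_i$, i.e.\ $\eta\lambda_i<1$. Moreover $1-a_i = 2\eta\lambda_i(1-\eta\lambda_i)$, so
\begin{align}
g(1) = \sum_i \frac{\eta\lambda_i}{2(1-\eta\lambda_i)} .
\end{align}
The function $h(\eta):=\sum_i \eta\lambda_i/(2(1-\eta\lambda_i))$ is continuous and strictly increasing on $(0,1/\lambda_{\max})$, tends to $0$ as $\eta\to0$, and tends to $\infty$ as $\eta\to 1/\lambda_{\max}$. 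So it crosses $1$ at a unique point, which is $\eta^\star_{\mathrm{ms}}$. The stable set is exactly $(0,\eta^\star_{\mathrm{ms}})$: for $\eta\ge 1/\lambda_{\max}$, either $a_1\ge1$ with a positive coupling, or $g(r)\to\infty$ as $r\downarrow a_1$, and in both cases instability follows.

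\textbf{Bounds.} For the upper bound, $1/(1-\eta\lambda_i)\ge1$ gives $h(\eta)\ge \eta\Tr(\mH)/2$. Hence $h(\eta^\star)=1$ forces $\eta^\star\le 2/\Tr(\mH)$.

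For the lower bound, set $\eta_0 = 2/(\Tr(\mH)+2\lambda_1)$; it suffices to show $h(\eta_0)\le1$. First, $\eta_0\lambda_i\le \eta_0\lambda_1 < 1$. Then
\begin{align}
h(\eta_0)\le \frac{\eta_0\Tr(\mH)}{2(1-\eta_0\lambda_1)} = \frac{\Tr(\mH)}{\Tr(\mH)+2\lambda_1-2\lambda_1}=1 ,
\end{align}
and monotonicity of $h$ gives $\eta^\star\ge\eta_0$.

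\textbf{Main obstacle.} The delicate step is the exact equivalence between mean-square stability and $\rho<1$ of the diagonal operator. This requires justifying that off-diagonal modes never grow faster than the diagonal ones, via $|\Sigma_{ij}|\le\sqrt{s_is_j}$ since $\mSigma_t\succeq0$. It also requires handling the boundary, zero-eigenvalue, and non-generic cases in the secular-equation argument.
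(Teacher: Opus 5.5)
Your proof is correct and reaches the same condition by a more direct route than the paper. The paper never analyzes ZO-GD on its own. It proves \cref{thm:zo-gdm} for general $\beta$, using:
\begin{itemize}
\item $2\times2$ covariance blocks on the cone $(\sS^2_+)^d$ and a Krein--Rutman eigenvector;
\item the split $\gT=\gM+s(\cdot)\,\bar{\mQ}$;
\item the Jury criterion for the local $3\times3$ maps (\cref{lem:local});
\item a dual-infeasibility lemma (\cref{lem:dual_infeasibility}) to force $\rho(\gM_i)<\rho(\gT)$;
\item a Neumann-series comparison in the fixed-point identity $r=s\bigl((\mathrm{Id}-\tfrac{1}{r}\gM)^{-1}\bar{\mQ}\bigr)$.
\end{itemize}
ZO-GD is then read off at $\beta=0$.

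You instead use the fact that at $\beta=0$ the blocks collapse to scalars. Perron--Frobenius then replaces Krein--Rutman, and the paper's fixed-point identity becomes your explicit secular equation $\eta^2\sum_i\lambda_i^2/(r-a_i)=1$. Each piece of the paper's machinery has a simple counterpart in your argument:
\begin{itemize}
\item the Jury criterion becomes $a_i<1\iff\eta\lambda_i<1$;
\item the Neumann comparison becomes monotonicity of $g$;
\item the dual-infeasibility lemma becomes the one-line observation that the Perron vector $v$ (on coordinates with $\lambda_i>0$) satisfies $(\rho-a_i)v_i=\eta^2\sum_j\lambda_j^2v_j>0$, hence $\rho>\max_i a_i$.
\end{itemize}
You should state this last fact explicitly. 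It is what guarantees that $\rho$ is the root of the secular equation on $(\max_i a_i,\infty)$ rather than one of the $a_i$. It also settles the case $\eta\lambda_{\max}\ge1$ in one line. Your route is shorter, fully elementary, and characterizes $\rho$ explicitly for every $\eta$. The paper's heavier machinery is what carries over to momentum and the frozen preconditioner, where the local maps are genuinely $3\times3$ and no scalar reduction is available.

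Two corrections are needed.

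First, your claim that coordinates with $\lambda_i=0$ ``stay constant'' is false. They do not feed the coupling, but they do receive it: $s_{i,t+1}=s_{i,t}+\eta^2\sum_j\lambda_j^2 s_{j,t}$. They stay bounded only because, when the positive-eigenvalue subsystem has spectral radius below one, the input $\sum_j\lambda_j^2 s_{j,t}$ decays geometrically and is summable. This is exactly how the paper treats the index set $Z$. It also bears on your claim that the stable set is exactly $(0,\eta^\star_{\mathrm{ms}})$:
\begin{itemize}
\item for singular $\mH$, at $\eta=\eta^\star_{\mathrm{ms}}$ these coordinates grow linearly for generic initializations, so the claim holds;
\item for $\mH\succ0$, the point $\eta^\star_{\mathrm{ms}}$ is itself stable, because the matrix is entrywise positive and its Perron root $1$ is simple and strictly dominant, so the claim fails.
\end{itemize}
Neither case affects the supremum.

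Second, the ``main obstacle'' you flag is not one. The diagonal system is closed and $\E\|\vx_t-\vx^\star\|^2=\sum_i s_{i,t}$ exactly, so the off-diagonal entries never need to be controlled. Incidentally, their multiplier is $(1-\eta\lambda_i)(1-\eta\lambda_j)+\eta^2\lambda_i\lambda_j$, without the factor $2$.
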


\begin{remark}[Computing $\eta^\star_{\mathrm{ms}}$]
\label{rem:zogd-computation}
If the full spectrum $\{\lambda_i\}_{i=1}^d$ were available, $\eta^\star_{\mathrm{ms}}$ in \cref{thm:zo-gd}
can be computed by solving
$\sum_{i=1}^d ({\eta \lambda_i}/{2(1-\eta\lambda_i)})=1$
over $\eta\in(0,1/\lambda_{\max}(\mH))$.
In practice, we instead track the bounds \eqref{eq:bound_zo-gd}, which depend only on $\Tr(\mH)$ and $\lambda_{\max}(\mH)$ and can be estimated efficiently during training, which is critical for large models.

\end{remark}

\begin{theorem}[Stability of ZO-GDM]\label{thm:zo-gdm}
For ZO-GDM,
$\eta^\star_{\mathrm{mean}}=2(1+\beta)/\lambda_{\max}(\mH)$, and  the mean-square critical step size $\eta^\star_{\mathrm{ms}}$ is the unique $\eta>0$ satisfying
\begin{align}
    \eta \lambda_{\max}(\mH) < 1-\beta^2
    \; \text{and}\;
    \sum_{i=1}^d \frac{\eta\lambda_i}{2(1-\beta)\bigl(1-\frac{\eta\lambda_i}{1-\beta^2}\bigr)} = 1\;,
\end{align}
which admits the bounds
\begin{align}
    \frac{2(1-\beta)}{\Tr(\mH)+\frac{2\lambda_{\max}(\mH)}{1+\beta}} \le \eta^\star_{\mathrm{ms}} \le \frac{2(1-\beta)}{\Tr(\mH)}\;.
\label{eq:bound_zo-gdm}
\end{align}
\end{theorem}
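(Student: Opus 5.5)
We will work directly with the linearized dynamics. Set $\vx^\star=\vzero$ without loss of generality, and rotate coordinates to the eigenbasis of $\mH=\mathrm{diag}(\lambda_1,\dots,\lambda_d)$; this is harmless because $\gN(\vzero,\mI)$ is rotation invariant. On $f_{\mathrm{quad}}$ the two-point estimator \eqref{eq:two-point-estimator} is exact: $\widehat\nabla f(\vx_t)=\vu_t\vu_t^\top\mH\vx_t$, independently of $\mu$. Eliminating $\vm$ from \eqref{eq:zo-gdm-m}--\eqref{eq:zo-gdm-x} (with $\vm_0=\vzero$, i.e.\ $\vx_{-1}=\vx_0$) gives
\begin{align}
\vx_{t+1}=(1+\beta)\vx_t-\beta\vx_{t-1}-\eta\mH\vx_t-\eta\boldsymbol{\xi}_t,\qquad \boldsymbol{\xi}_t:=(\vu_t\vu_t^\top-\mI)\mH\vx_t .
\end{align}
Since $\E[\boldsymbol{\xi}_t\mid\vx_t]=\vzero$, the mean obeys exactly the GDM recursion, so $\eta^\star_{\mathrm{mean}}=2(1+\beta)/\lambda_{\max}(\mH)$ by \cref{prop:gdm}. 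For the second moment we use the Gaussian identity $\E[(\vu\vu^\top-\mI)\vv\vv^\top(\vu\vu^\top-\mI)]=\vv\vv^\top+\|\vv\|^2\mI$, which gives
\begin{align}
\E[\boldsymbol{\xi}_t\boldsymbol{\xi}_t^\top\mid\vx_t]=\mH\vx_t\vx_t^\top\mH+(\vx_t^\top\mH^2\vx_t)\mI .
\end{align}

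Next we close the second-moment recursion. For each coordinate $i$ we track $s_t^{(i)}:=(\E[x_{t,i}^2],\E[x_{t,i}x_{t-1,i}],\E[x_{t-1,i}^2])$. Because the diagonal of the noise covariance is $\lambda_i^2x_{t,i}^2+\sum_j\lambda_j^2x_{t,j}^2$, this $3d$-dimensional system is closed. The resulting linear operator $\mathcal T$ is block-diagonal plus a rank-one global coupling: block $i$ is the second-moment map of the scalar AR(2) recursion $y_{t+1}=a_iy_t-\beta y_{t-1}+\text{noise}$, where $a_i=1+\beta-\eta\lambda_i$ and the multiplicative noise has variance $\eta^2\lambda_i^2y_t^2$. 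The coupling injects $\eta^2\sum_j\lambda_j^2\E[x_{t,j}^2]$ into every coordinate's variance. Off-diagonal second moments are driven by the diagonal ones, so they do not affect the threshold; we would check this separately. Mean-square stability is then equivalent to $\rho(\mathcal T)<1$ for $\eta$ below the mean threshold, and the full $\E\|\vx_t\|^2=\sum_i\E[x_{t,i}^2]$ is recovered from the blocks.

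The key step, and the main obstacle, is characterizing $\rho(\mathcal T)<1$ exactly. The operator preserves the cone of second-moment vectors arising from PSD covariances, which are not entrywise nonnegative because of the cross term. On this cone we invoke Krein--Rutman (or Perron--Frobenius after a change of variables) to get a positive eigenvector for $\rho(\mathcal T)$. The standard small-gain/resolvent argument for ``stable block operator plus rank-one positive perturbation'' then gives
\begin{align}
\rho(\mathcal T)<1 \iff \text{each block is stable and } \eta^2\sum_j\lambda_j^2V_j<1,
\end{align}
where $V_j$ is the stationary variance of block $j$ driven by unit additive noise.

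Finally we compute $V_j$ via the stationary AR(2) variance formula $V=\sigma^2(1+\beta)/\bigl((1-\beta)((1+\beta)^2-a^2)\bigr)$, together with $(1+\beta)^2-a_j^2=\eta\lambda_j(2(1+\beta)-\eta\lambda_j)$. Solving the fixed-point equation that includes the multiplicative term gives $V_j=(1+\beta)/\bigl(2\eta\lambda_j(1-\beta^2-\eta\lambda_j)\bigr)$. Block stability is exactly $\eta\lambda_j<1-\beta^2$, and the coupling condition becomes
\begin{align}
\sum_i\frac{\eta\lambda_i}{2(1-\beta)\bigl(1-\frac{\eta\lambda_i}{1-\beta^2}\bigr)}<1 .
\end{align}
The left-hand side is continuous and strictly increasing on $(0,(1-\beta^2)/\lambda_{\max})$, tends to $0$ at $0$, and blows up at the right endpoint. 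Hence there is a unique root, and it equals $\eta^\star_{\mathrm{ms}}$. For the bounds:
\begin{itemize}
\item \emph{Upper bound.} Each term is at least $\eta\lambda_i/(2(1-\beta))$, so the root satisfies $\eta\Tr(\mH)\le2(1-\beta)$.
\item \emph{Lower bound.} Each denominator factor $1-\eta\lambda_i/(1-\beta^2)$ is at least $1-\eta\lambda_1/(1-\beta^2)$, so the sum is at most $\eta\Tr(\mH)/\bigl(2(1-\beta)(1-\eta\lambda_1/(1-\beta^2))\bigr)$. Setting this to $1$ and solving gives the stated lower bound $2(1-\beta)/\bigl(\Tr(\mH)+2\lambda_{\max}(\mH)/(1+\beta)\bigr)$.
\end{itemize}
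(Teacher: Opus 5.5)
The overall architecture matches the paper's: diagonal second moments, a block-diagonal map plus a rank-one coupling, Krein--Rutman, a resolvent/secular equation, and the unit-noise block variance $V_j$, which coincides with the paper's $\gamma_j$. Your $(x_t,x_{t-1})$ parametrization is just the linear change of variables $\eta m_t=x_{t-1}-x_t$.

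The genuine gap is the pair of claims that mean-square stability is equivalent to $\rho(\mathcal T)<1$, and that block $j$ is stable exactly when $\eta\lambda_j<1-\beta^2$. Both fail when $\mH$ is singular. The theorem allows this case ($\lambda_d\ge 0$), and it is typical for the neural-network Hessians that motivate it. If $\lambda_j=0$, block $j$ is the deterministic AR(2) map with $a_j=1+\beta$, whose characteristic roots are $1$ and $\beta$. Its second-moment map then has eigenvalues $1,\beta,\beta^2$, so $\rho(\mathcal T)\ge 1$ for every $\eta>0$, and $V_j$ is undefined. Yet the dynamics can be mean-square stable. For $\beta=0$ and $\mH=\mathrm{diag}(1,0)$ one has $\E[x_{t+1,1}^2]=(1-2\eta+3\eta^2)\E[x_{t,1}^2]$ and $\E[x_{t+1,2}^2]=\E[x_{t,2}^2]+\eta^2\E[x_{t,1}^2]$. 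These stay bounded for $\eta<2/3$ even though $\rho(\mathcal T)=1$. So your first equivalence would make every step size unstable, while your block criterion wrongly certifies the null blocks as stable. The right formula comes out only because these two errors cancel: you implicitly read $\eta^2\lambda_j^2V_j$ as $0$. The missing step is the paper's reduction to strictly positive eigenvalues. Null coordinates do not enter the coupling, so the coordinates with $\lambda_i>0$ form an autonomous subsystem $\mathcal T_P$ to which the spectral characterization applies. The null coordinates must then be shown bounded separately: their homogeneous map is power-bounded, and the injected variance $\eta^2\sum_j\lambda_j^2\E[x_{t,j}^2]$ is summable once $\rho(\mathcal T_P)<1$.

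Even for $\mH\succ 0$, the usable statements are that $\rho(\mathcal T)<1$ gives boundedness and $\rho(\mathcal T)>1$ gives divergence. The latter needs two ingredients you do not supply.

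\emph{Reachability.} Since $\vm_0=\vzero$ forces $x_{-1}=x_0$, the initial second-moment state is a degenerate boundary point of the cone. You must show it excites the Krein--Rutman mode; for example, after two steps the state lies in the interior of the cone and hence dominates a positive multiple of the Perron eigenvector.

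\emph{Strict instability above the root.} To identify the supremum you need $\rho(\mathcal T)>1$ for every $\eta$ above the root, not merely $\rho(\mathcal T)\ge 1$. If all blocks are stable and $S>1$, your resolvent argument gives a root $r>1$ of the secular equation. If $\eta\lambda_{\max}>1-\beta^2$, cone monotonicity gives $\rho(\mathcal T)\ge\rho(\gM_1)>1$. But at $\eta\lambda_{\max}=1-\beta^2$ exactly, the block has spectral radius $1$, the resolvent is unavailable at $r=1$, and $\rho(\mathcal T)=1$ must be ruled out separately. The paper does this with its dual-infeasibility lemma. An adjoint Perron eigenvector of $\gM_i$ pairs strictly positively with $\mQ$. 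Hence the eigen-equation $(\mathrm{Id}-\tfrac1r\gM_i)\mW_i^\star=\tfrac{s^\star}{r}\mQ$ with $s^\star>0$ forces $\rho(\gM_i)<r$.

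Finally, the direction $0<\eta\lambda_j<1-\beta^2\Rightarrow\rho(\gM_j)<1$ is only asserted, since the AR(2) stationary-variance formula presupposes stationarity. The paper proves it with the Jury criterion on the $3\times 3$ block.
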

\begin{proof}[Proof sketch]
We analyze the recursions of the second-moment matrices $\E[\vx_t\vx_t^\top]$, $\E[\vx_t \vm_t^\top]$, and $\E[\vm_t \vm_t^\top]$. Using the Isserlis' Theorem to evaluate Gaussian fourth moments, we obtain a linear recursion that can be expressed as a cone-preserving linear operator on a product cone. Mean-square stability is then equivalent to this operator having spectral radius smaller than one. We characterize this spectral radius using \cref{thm:cov-operator} in Appendix~\ref{appendix:spectral_analysis}, which leverages the Krein--Rutman Theorem and leads to the explicit mean-square stability condition.
\end{proof}

\begin{theorem}[Stability of Frozen ZO-Adam]\label{thm:frozen-zo-adam}
Let $\tilde\lambda_1 \ge \tilde\lambda_2 \ge \cdots \ge \tilde\lambda_d \ge 0$ denote the eigenvalues of the preconditioned Hessian $\mP^{-1}\mH$.
For Frozen ZO-Adam,  $\eta^\star_{\mathrm{mean}}=2(1+\beta_1)/ \bigl((1-\beta_1)\lambda_{\max}(\mP^{-1}\mH)\bigr)$. Assuming $\mP\mH = \mH\mP$, the mean-square critical step size $\eta^\star_{\mathrm{ms}}$ is the unique $\eta>0$ satisfying 
\begin{align}
    \eta \lambda_{\max}(\mP^{-1}\mH) < 1 + \beta_1
    \;\text{and}\;
    \sum_{i=1}^d \frac{\eta\tilde \lambda_i}{2\bigl(1-\frac{\eta\tilde \lambda_i}{1+\beta_1}\bigr)} = 1\;,
\end{align}
and it admits the bounds 
\begin{align}
    \hspace*{-10pt}\frac{2}{\Tr(\mP^{-1}\mH)+\frac{2\lambda_{\max}(\mP^{-1}\mH)}{1+\beta_1}} \le \eta^\star_{\mathrm{ms}} \le \frac{2}{\Tr(\mP^{-1}\mH)}.
\label{eq:bound_zo-adam}
\end{align}
\end{theorem}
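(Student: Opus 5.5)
We place $\vx^\star=\vzero$. On $f_{\mathrm{quad}}$ the two-point estimator is exact in $\mu$: $\widehat\nabla f(\vx_t)=\vu_t\vu_t^\top\mH\vx_t$. Its mean is $\mH\vx_t$, so $\E[\vx_t]$ follows exactly the Frozen Adam recursion. The claim about $\eta^\star_{\mathrm{mean}}$ then follows directly from \cref{prop:frozen-adam}.

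For the mean-square part we use the assumption $\mP\mH=\mH\mP$. It gives an orthogonal $\mQ$ that diagonalizes both matrices: $\mP=\mQ\,\mathrm{diag}(p_i)\mQ^\top$ and $\mH=\mQ\,\mathrm{diag}(h_i)\mQ^\top$, with $\tilde\lambda_i=h_i/p_i$. Since $\mQ^\top\vu_t\sim\gN(\vzero,\mI)$, we may rotate and assume $\mP$ and $\mH$ are diagonal. We then write $\widehat\nabla f=\mH\vx+\vw$ with $\vw=(\vu\vu^\top-\mI)\mH\vx$. Isserlis' theorem gives $\E[\vu\vu^\top\mS\vu\vu^\top]=2\mS+\Tr(\mS)\mI$ for symmetric $\mS$. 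Hence the noise has conditional covariance $\mH\mX\mH+\Tr(\mH\mX\mH)\mI$, where $\mX=\vx\vx^\top$.

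In the diagonal basis, per coordinate the update reads $m_i'=\beta_1 m_i+(1-\beta_1)(h_ix_i+w_i)$ and $x_i'=x_i-\eta p_i^{-1}m_i'$. Consider the diagonal second moments $s_i=(\E x_i^2,\E x_im_i,\E m_i^2)$. They obey a closed recursion $s_i'=\mA_is_i+\vb_i\,\sum_j h_j^2\E x_j^2$, where:
\begin{itemize}
\item $\mA_i$ is the deterministic $2\times2$ propagation lifted to symmetric second moments, plus the extra term $h_i^2\E x_i^2$ coming from the $\mH\mX\mH$ part of the noise covariance;
\item $\vb_i=(1-\beta_1)^2\bigl(\eta^2p_i^{-2},\,-\eta p_i^{-1},\,1\bigr)$ comes from the $\Tr(\mH\mX\mH)\mI$ part.
\end{itemize}
The full operator $\mathcal T=\mathrm{diag}(\mA_i)+\vb\ve^\top$, with $\ve$ picking $h_j^2\E x_j^2$, preserves the product cone of PSD $2\times2$ blocks. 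It is exactly the structure covered by \cref{thm:cov-operator}. Off-diagonal moments $\E x_ix_j$ etc.\ satisfy decoupled linear recursions with no injected trace term, and are dominated via Cauchy--Schwarz by the diagonal ones. So mean-square stability is equivalent to $\rho(\mathcal T)<1$.

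By \cref{thm:cov-operator}, $\rho(\mathcal T)<1$ iff $\rho(\mA_i)<1$ for all $i$ and $\sum_i \ve_i^\top(\mI-\mA_i)^{-1}\vb_i<1$. The main computational step is the $3\times3$ analysis of $\mA_i$. I will solve the linear system $(\mI-\mA_i)s=\vb_i$ explicitly via the stationary equations for $\E x^2,\E xm,\E m^2$. I expect the result to be
\begin{align}
h_i^2\,\ve^\top(\mI-\mA_i)^{-1}\vb_i=\frac{\eta\tilde\lambda_i}{2\bigl(1-\frac{\eta\tilde\lambda_i}{1+\beta_1}\bigr)}.
\end{align}
The $p_i^{-2}h_i^2$ scaling is what turns $h_i$ into $\tilde\lambda_i$. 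Positivity of this solution, together with a Perron--Frobenius-type argument on the cone, should identify $\rho(\mA_i)<1$ with $\eta\tilde\lambda_i<1+\beta_1$. I expect this to be the hardest part: confirming that the doubled-variance per-coordinate system has exactly this threshold, and tracking the $(1-\beta_1)$ factors correctly. I would first sanity-check it at $\beta_1=0$, where the formula should reduce to \cref{thm:zo-gd}.

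Finally, on $(0,(1+\beta_1)/\tilde\lambda_1)$ each summand is continuous and strictly increasing in $\eta$ (for $\tilde\lambda_i>0$). The sum goes from $0$ to $\infty$, so the root is unique and equals $\eta^\star_{\mathrm{ms}}$. For the bounds write $L=\tilde\lambda_1/(1+\beta_1)$.
\begin{itemize}
\item Each summand is at least $\eta\tilde\lambda_i/2$. So at the root, $\eta\Tr(\mP^{-1}\mH)/2\le1$, which gives the upper bound.
\item Each denominator is at least $2(1-\eta L)$, so the sum is at most $\eta\Tr(\mP^{-1}\mH)/\bigl(2(1-\eta L)\bigr)$. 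At $\eta_0=2/(\Tr(\mP^{-1}\mH)+2L)$ this bound equals exactly $1$. Hence the sum is at most $1$ at $\eta_0$, and monotonicity gives $\eta^\star_{\mathrm{ms}}\ge\eta_0$.
\end{itemize}
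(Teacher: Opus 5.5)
Your rotation, the noise covariance $\mH\mX\mH+\Tr(\mH\mX\mH)\mI$, your injection vectors $\vb_i$, the expected local value $\frac{\eta\tilde\lambda_i}{2(1-\eta\tilde\lambda_i/(1+\beta_1))}$, the mean part, and the final monotonicity and bound arguments all check out. The step that fails is ``mean-square stability is equivalent to $\rho(\mathcal T)<1$.''

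\textbf{Zero eigenvalues.} The theorem allows $\tilde\lambda_d=0$, and singular Hessians are the typical case. If $\tilde\lambda_i=0$, then $h_i=0$. The element equal to $(1,0,0)$ in block $i$ and zero elsewhere is fixed by $\mA_i$ and annihilated by $\ve$, so it is an eigenvector of $\mathcal T$ with eigenvalue $1$. Hence $\rho(\mathcal T)\ge 1$ for every $\eta$. Your chain would then conclude that the dynamics is never mean-square stable, and the identification $\rho(\mA_i)<1\iff\eta\tilde\lambda_i<1+\beta_1$ is false at $\tilde\lambda_i=0$. This is why \cref{thm:cov-operator} assumes $\lambda_d>0$. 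What is missing is the paper's reduction to strictly positive eigenvalues:
\begin{itemize}
\item the blocks with $\tilde\lambda_i>0$ form an autonomous subsystem, because the coupling $\sum_j h_j^2\E x_j^2$ only sees them;
\item the spectral criterion is applied to that restricted operator;
\item when its spectral radius is below one, the coupling sequence is summable, so the zero-eigenvalue blocks stay bounded. They are driven through a power-bounded map with eigenvalues $1,\beta_1,\beta_1^2$, so they need not decay.
\end{itemize}

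\textbf{Instability direction.} The asserted equivalence also skips this direction. From $\rho>1$ you must produce an admissible initialization with unbounded $\E\|\vx_t\|^2$. Since $\vm_0=\vzero$, the initial blocks are only of the form $\mathrm{diag}(x_{i,0}^2,0)$, not arbitrary cone elements. You therefore need an argument that the injected noise excites the Perron mode. Nothing is needed at $\rho=1$ itself: ``$\rho<1\Rightarrow$ stable'' and ``stable $\Rightarrow\rho\le 1$'' already determine the supremum.

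\textbf{Local threshold.} This is the other missing piece; the paper gets it from the Jury test on the cubic characteristic polynomial. Your positivity route works if made precise. Work in the variables $(p_ix_i,\eta m_i)$ with $x:=(1-\beta_1)\eta\tilde\lambda_i$, and let $\mQ$ be the paper's $\bigl[\begin{smallmatrix}1&-1\\-1&1\end{smallmatrix}\bigr]$, not your rotation. Then
\[
(\mathrm{Id}-\gM_i)^{-1}\mQ=\frac{1}{1-\beta_1^2-x}\bigl[\begin{smallmatrix}\frac{1+\beta_1}{2x}&-\frac12\\-\frac12&1\end{smallmatrix}\bigr],
\]
which is positive definite for $0<x<1-\beta_1^2$. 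Call it $\mY_i$. Pairing with a Krein--Rutman eigenvector $Y\succeq 0$ of $\gM_i^*$ gives $(1-\rho(\gM_i))\langle Y,\mY_i\rangle=\langle Y,\mQ\rangle\ge 0$. Since $\det(\mathrm{Id}-\gM_i)=2x(1-\beta_1^2-x)\neq 0$ excludes $\rho=1$, you get $\rho(\gM_i)<1$. For $x\ge 1-\beta_1^2$ the same determinant is $\le 0$, which forces a real eigenvalue $\ge 1$. Note that ``positivity'' must mean definiteness of the whole $2\times 2$ block, not positivity of its $(1,1)$ entry.

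\textbf{A shortcut.} You can bypass all of this. With $\vz_t=\mP\vx_t$ and $\hat\vm_t=\vm_t/(1-\beta_1)$, Frozen ZO-Adam is literally ZO-GDM on $\tfrac12\vz^\top\mP^{-1}\mH\vz$, with step $(1-\beta_1)\eta$ and momentum $\beta_1$. Here $\mP^{-1}\mH$ is symmetric PSD by commutativity, and $\E\|\vx_t\|^2\asymp\E\|\vz_t\|^2$. So \cref{thm:zo-gdm}, which already contains the zero-eigenvalue reduction, the instability argument, and the local analysis, gives the statement by substitution. In your coordinates this is the rescaling that turns $\vb_i$ into $(1-\beta_1)^2\eta^2(1,-1,1)$ and your coupling into $\sum_j\tilde\lambda_j^2\E(p_jx_j)^2$. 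That rescaling is what makes your claim ``exactly the structure of \cref{thm:cov-operator}'' literally true.
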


\begin{remark}[Commutativity assumption]
\label{rem:commute}
The condition $\mP\mH=\mH\mP$ ensures that $\mP^{-1}\mH$ is diagonalizable in the same eigenbasis as $\mH$, which allows the mean-square dynamics to decouple across eigendirections and enables a tractable spectral analysis. Without commutativity, the second-moment recursion generally couples different eigenspaces, and obtaining an explicit stability characterization becomes substantially less tractable. 
Empirically, in neural network training with ZO-Adam, we observe that $\mP_t$ and $\mH_t$ are nearly commuting: the relative commutator Frobenius norm $\|\mP_t\mH_t-\mH_t\mP_t\|_F/\|\mP_t\mH_t\|_F$ decreases from $0.8$–$0.9$ at initialization to below $0.05$ and remains below $0.05$ throughout training (see Appendix~\ref{appendix:commutativity}).
\end{remark}

Taken together, \cref{thm:zo-gd,thm:zo-gdm,thm:frozen-zo-adam} provide exact mean-square stability characterizations under the linearized dynamics. In the next section, we use the corresponding upper and lower bounds in \eqref{eq:bound_zo-gd}, \eqref{eq:bound_zo-gdm}, and \eqref{eq:bound_zo-adam} to empirically test whether ZO methods operate near the mean-square edge of stability during neural network training.

Our main theory focuses on the Gaussian symmetric two-point estimator in \eqref{eq:two-point-estimator}, which is the standard estimator used in MeZO-style LLM fine-tuning~\citep{malladi2023finetuning}. Appendix~\ref{appendix:estimator_choice} shows that the same covariance-operator framework extends naturally to other estimator choices, including forward finite differences, non-Gaussian directions, and multi-query averages.

\section{Zeroth-order optimization operates at the mean-square edge of stability}
\label{sec:eos}

Building on the mean-square linear stability theory in \cref{sec:stability:zo}, we empirically show that full-batch ZO methods on neural networks operate at the \emph{mean-square} edge of stability (EoS): the training dynamics stabilizes near the predicted mean-square linear stability boundary.

\subsection{Tracking mean-square stability during training}
\label{sec:eos:tracking}

Let $\mH_t := \nabla^2 f(\vx_t)$ denote the loss Hessian along the training trajectory. For each ZO optimizer, \cref{sec:stability:zo} provides explicit mean-square stability conditions under the linearized dynamics, together with computable lower and upper bounds on the corresponding stability threshold that depend only on the trace and top eigenvalue of the relevant curvature matrix. 
In large-scale neural network training, however, evaluating the {\em exact} mean-square stability condition is typically infeasible, since it requires the full spectrum of the Hessian (or the preconditioned Hessian for Adam-style methods). We therefore estimate only the trace and top eigenvalue during training, and use them to form tractable lower and upper bounds.
Concretely, 
for the purpose of visualizing these dynamic conditions between $\mH_t$ and the step size $\eta$, we present our results in the following format:
\begin{align}
    \text{(lower term)}
    \;\le\;
    \text{(stability threshold)}
    \;\le\;
    \text{(upper term)}, 
    \label{eq:eos-tracked-inequality}
\end{align}
where the stability threshold depends on the step size $\eta$ and does not change over iterations and the upper and lower terms depend on the spectrum of $\mH_t$ (e.g., \cref{fig:main}). 
We say the training operates near the mean-square EoS when the stability threshold remains within, or very close to, this interval for a sustained portion of training.  

{\bf ZO-GD.}
From \eqref{eq:bound_zo-gd},
we track mean-square stability via
\begin{align}
    \Tr(\mH_t)
    \;\le\;
    \frac{2}{\eta}
    \;\le\;
    \Tr(\mH_t) + 2\lambda_{\max}(\mH_t).
    \label{eq:eos-tracked-zogd}
\end{align}

{\bf ZO-GDM.}
From \eqref{eq:bound_zo-gdm},
we track mean-square stability via
\begin{align}
    \Tr(\mH_t)
    \;\le\;
    \frac{2(1-\beta)}{\eta}
    \;\le\;
    \Tr(\mH_t) + \frac{2\lambda_{\max}(\mH_t)}{1+\beta}.
    \label{eq:eos-tracked-zogdm}
\end{align}

{\bf ZO-Adam.}
For ZO-Adam, the bounds depend on  the preconditioner  $\mP_t$  at iteration $t$. 
From \eqref{eq:bound_zo-adam} in \cref{thm:frozen-zo-adam}, we track mean-square stability via
\begin{align}
    \hspace*{-12pt}\Tr(\mP_t^{-1}\mH_t)
    \le
    \frac{2}{\eta}
    \le
    \Tr(\mP_t^{-1}\mH_t)
    + \frac{2\lambda_{\max}(\mP_t^{-1}\mH_t)}{1+\beta_1}.
    \label{eq:eos-tracked-zoadam}
\end{align}

\subsection{Experimental setup}

We consider an image classification task on a subset of CIFAR-10 and train ZO methods using the squared loss. We evaluate three representative vision architectures: a CNN, a ResNet, and a Vision Transformer (ViT). Unless stated otherwise, we use full-batch training and a constant step size to match the linearized stability theory in \cref{sec:stability}.

{\bf Optimizers and hyperparameters.}
We study ZO-GD, ZO-GDM, and ZO-Adam, all using the standard two-point estimator with default smoothing parameter $\mu=10^{-3}$. During training, we log curvature statistics every $1{,}000$ iterations by estimating the top Hessian eigenvalue using power iteration and the Hessian trace using Hutchinson's estimator.
Additional experimental details are provided in Appendix~\ref{appendix:exp_details}.

\subsection{Main experiments}

In \cref{fig:main}, we train ZO methods on the CNN and track the stability intervals in \eqref{eq:eos-tracked-zogd}, \eqref{eq:eos-tracked-zogdm}, and \eqref{eq:eos-tracked-zoadam}. Across all three optimizers, we observe a consistent mean-square EoS pattern: after an initial phase of progressive sharpening, the tracked curvature terms adjust and stabilize so that the stability threshold remains within, or very close to, the corresponding intervals. As shown in \cref{fig:main_resnet} and \cref{fig:main_vit}, we observe the same behavior when training a ResNet and a ViT.

To test whether this behavior is specific to vision, we also train LSTM and Mamba sequence models on the synthetic sorting task described in \citet{karpathy2020mingpt}, following the experimental setup used by \citet[Appendix~B.3]{cohen2025understanding}. These experiments exhibit the same qualitative mean-square EoS behavior; full plots are provided in Appendix~\ref{appendix:sequence_sorting}.

Specifically, ($i$) for ZO-GD, the threshold $2/\eta$ remains close to the interval $[\Tr(\mH_t),\, \Tr(\mH_t)+2\lambda_{\max}(\mH_t)]$ across step sizes; ($ii$) for ZO-GDM, the threshold $2(1-\beta)/\eta$ remains close to $[\Tr(\mH_t),\, \Tr(\mH_t)+\frac{2}{1+\beta}\lambda_{\max}(\mH_t)]$ across momentum values; and ($iii$) for ZO-Adam, the threshold $2/\eta$ remains close to the corresponding preconditioned interval in \eqref{eq:eos-tracked-zoadam}. In all cases, the \emph{trace} of the (preconditioned) Hessian provides the dominant stability signal throughout training.

\begin{figure}[tb]
    \centering
    \includegraphics[width=0.8\linewidth]{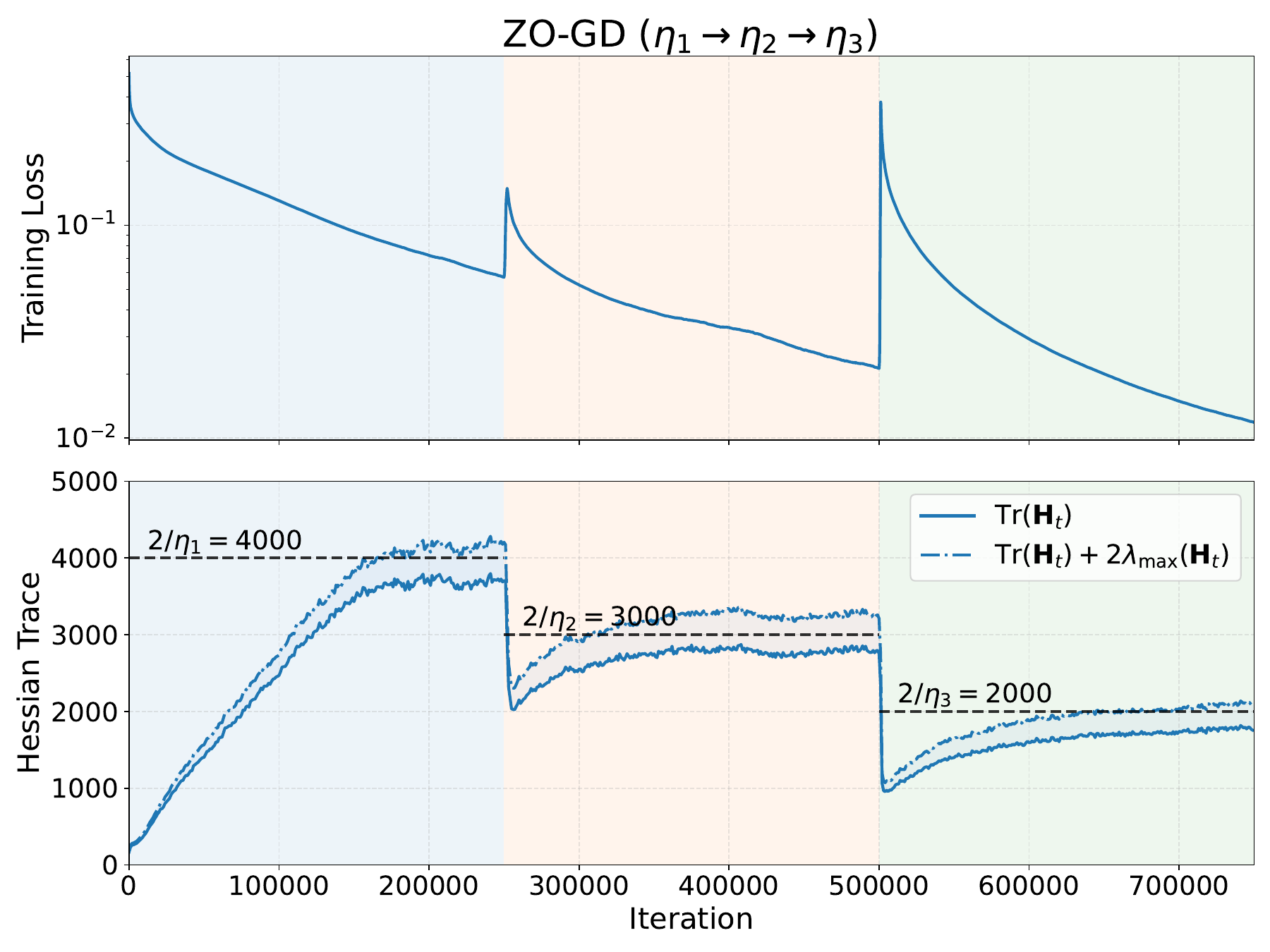}
    \caption{\textbf{Catapult dynamics in ZO-GD.}  
    We train ZO-GD on CNN and increase the step size midway through training (from $\eta_1$ to $\eta_2$ and then to $\eta_3$). \emph{Top:} the training loss exhibits a pronounced spike after each step size increase, consistent with catapult dynamics. \emph{Bottom:} the Hessian trace $\Tr(\mH_t)$ drops sharply during the catapult phase and then rises again, re-equilibrating near the new stability threshold $2/\eta$.}
    \label{fig:cnn_catapult}
\end{figure}

\subsection{Additional experiments}

{\bf Catapult dynamics. }
In \cref{fig:cnn_catapult}, we train ZO-GD and increase the step size midway through training (from $\eta_1$ to $\eta_2$ and then to $\eta_3$). We observe a pronounced spike in the training loss after each increase, consistent with the \emph{catapult} dynamics~\citep{lewkowycz2020large}. Immediately after the step size increase, the new step size temporarily exceeds the mean-square stability critical step size at the current iterate, so the dynamics become locally unstable and the loss increases sharply. At the same time, the Hessian trace drops rapidly below the new threshold and then rises again, re-equilibrating near the new threshold.

{\bf Effect of the smoothing parameter $\mu$.}
In \cref{fig:cnn_sweep_mu}, we train ZO-GD with fixed step size and vary the smoothing parameter $\mu$ in the two-point estimator.
For moderate and small $\mu$, the tracked stability terms increase early in training and then stabilize near the threshold $2/\eta$, consistent with mean-square EoS behavior.
For larger $\mu$, both $\Tr(\mH_t)$ and $\Tr(\mH_t)+2\lambda_{\max}(\mH_t)$ saturate at substantially smaller values and remain far below $2/\eta$, indicating that training does not approach the predicted mean-square stability boundary.
Overall, mean-square EoS persists across a broad range of practically relevant smoothing levels, while overly large smoothing suppresses curvature growth. We attribute this phenomenon to implicit bias in \cref{sec:discussion}.  

{\bf Beyond full-batch: mini-batch ZO-SGD.}
Although our main results focus on full-batch ZO methods, we include a preliminary mini-batch experiment in \cref{fig:cnn_sweep_bs}.
Compared to full-batch ZO-GD, mini-batch ZO-SGD converges to significantly flatter regimes. 

\begin{figure}[tb]
    \centering
    \includegraphics[width=0.8\linewidth]{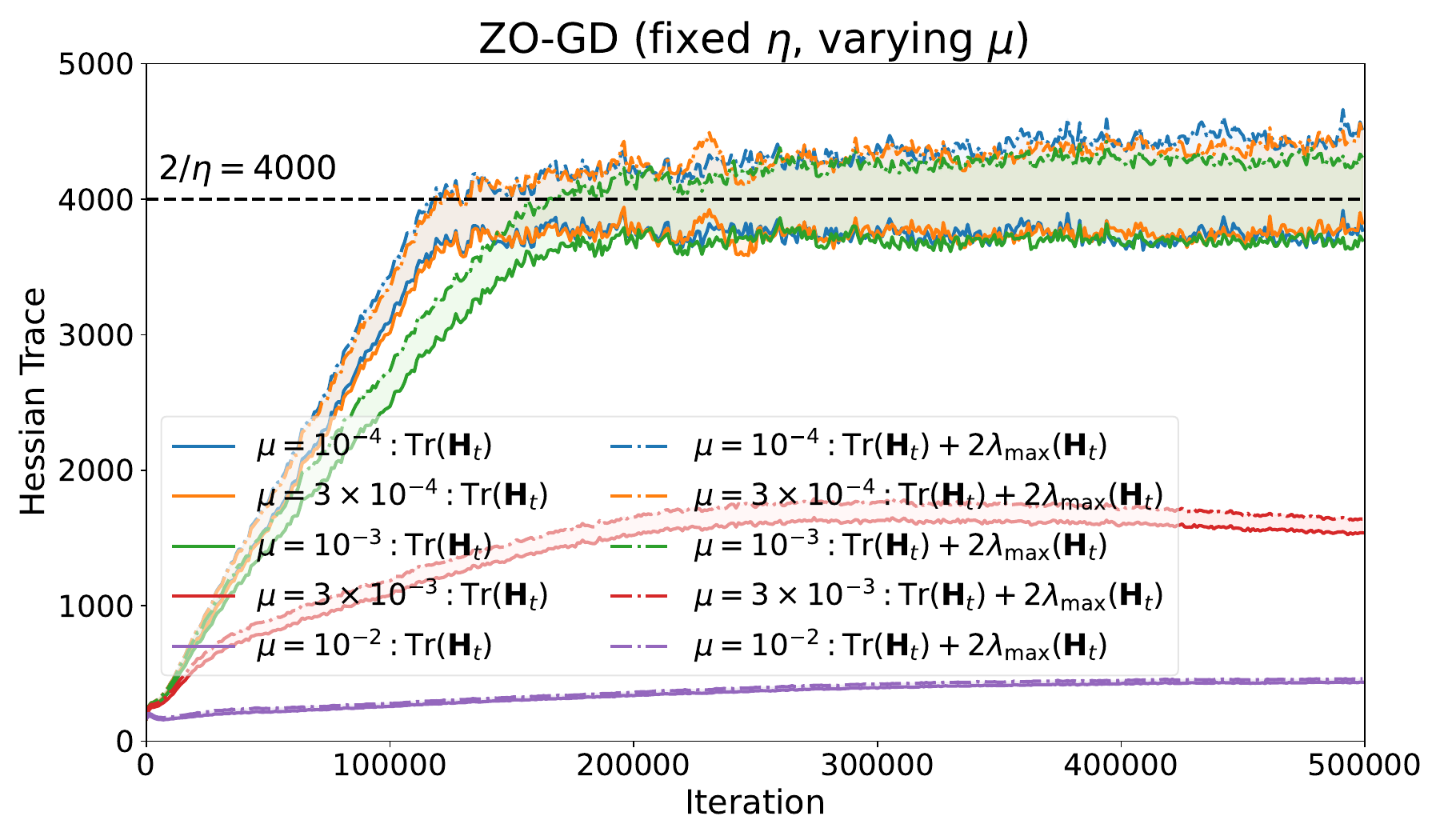}
    \caption{\textbf{Effect of the smoothing parameter $\mu$.}
    We train ZO-GD on a CNN with a fixed step size and vary the smoothing parameter $\mu$ in the two-point estimator. For moderate and small smoothing ($\mu \le 10^{-3}$), ZO-GD operates at the mean-square EoS. For larger smoothing ($\mu \ge 3\times 10^{-3}$), ZO-GD no longer reaches the EoS threshold and instead trains in a lower-curvature regime with a smaller Hessian trace.}
    \label{fig:cnn_sweep_mu}
    \vspace{-5pt}
\end{figure}

\begin{figure}[tb]
    \centering
    \includegraphics[width=0.8\linewidth]{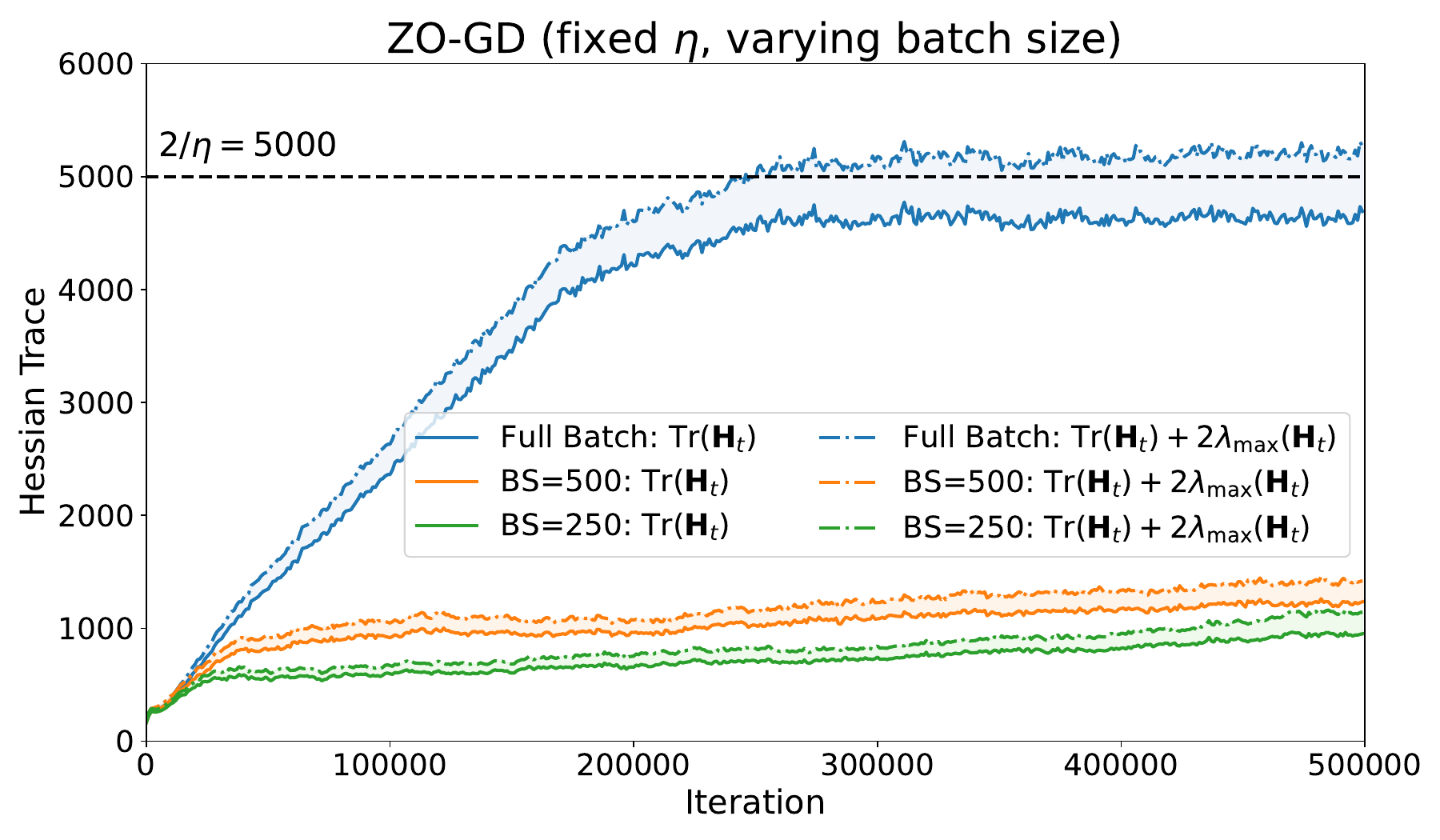}
    \caption{\textbf{Effect of batch size in mini-batch ZO-SGD.}
    We train mini-batch ZO-SGD on a CNN with a fixed step size and vary the batch size. Compared to full-batch ZO-GD, mini-batch ZO-SGD trains in a lower-curvature regime with a smaller Hessian trace.
    }
    \label{fig:cnn_sweep_bs}
    \vspace{-10pt}
\end{figure}

\begin{figure*}
    \centering
    \includegraphics[width=\linewidth]{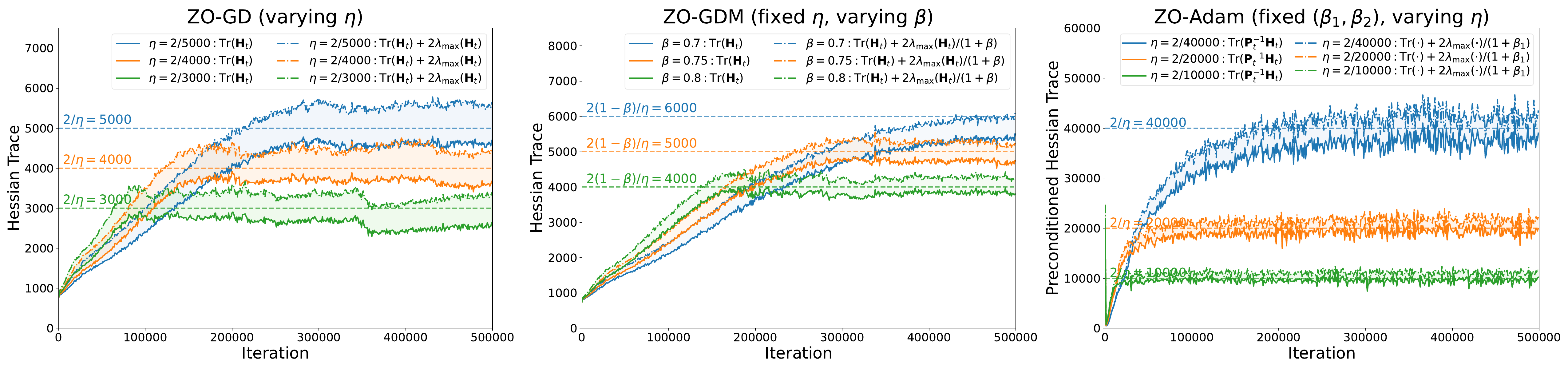}
    \caption{\textbf{Mean-square EoS for full-batch ZO methods on ResNet.}
We train full-batch ZO-GD, ZO-GDM, and ZO-Adam on ResNet20 for CIFAR-10 and track the corresponding mean-square stability bounds and threshold in \cref{sec:eos:tracking}.}
\label{fig:main_resnet}
\end{figure*}

\begin{figure*}
    \centering
    \includegraphics[width=\linewidth]{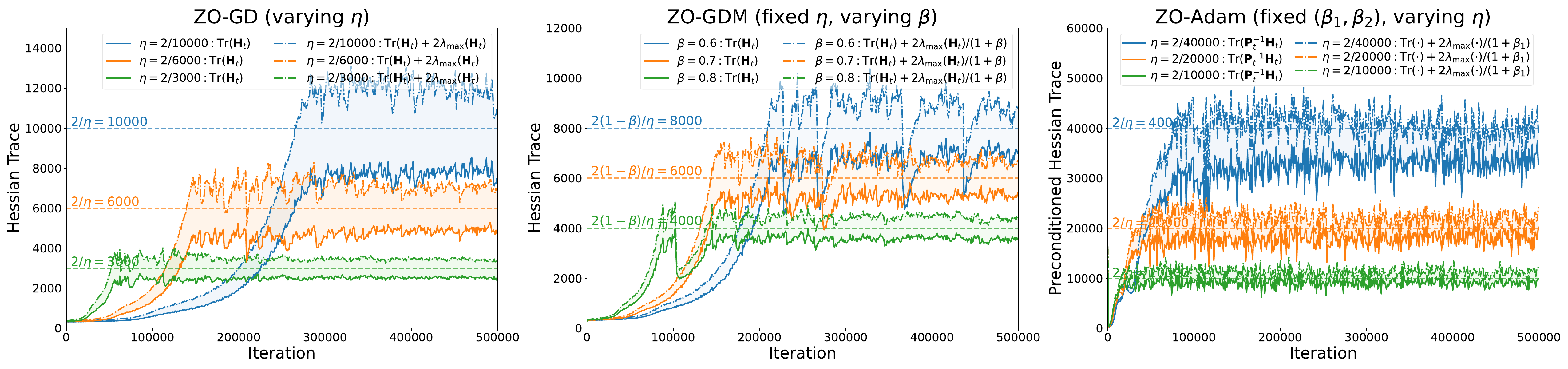}
    \caption{\textbf{Mean-square EoS for full-batch ZO methods on Vision Transformer.}
We train full-batch ZO-GD, ZO-GDM, and ZO-Adam on a Vision Transformer for CIFAR-10 and track the corresponding mean-square stability bounds and threshold in \cref{sec:eos:tracking}.}
\vspace{-2mm}
\label{fig:main_vit}
\end{figure*}

\section{Discussion}
\label{sec:discussion}

In this section, we discuss implications of our mean-square stability theory and empirical mean-square EoS results, and highlight several directions for future work.

{\bf Why mean-square stability is the relevant notion for ZO dynamics. }
In ZO optimization, randomness persists even in full-batch training due to  random perturbation directions. As a result, stability cannot be assessed solely through the mean trajectory; $\E[\vx_t]$ may remain bounded even when fluctuations grow and dominate the behavior of the iterates.
Mean-square stability captures this effect by directly controlling the second moment $\E\|\vx_t-\vx^\star\|^2$, while remaining analyzable under the linearized dynamics and yielding explicit step size conditions. Other notions of stability are also meaningful, such as stability of higher moments or tail-probability bounds. Nevertheless, our experiments suggest that the curvature quantities appearing in the mean-square stability conditions closely track the stability behavior observed during ZO neural network training.

{\bf Curvature quantities that govern ZO stability. }
For FO methods under the linearized dynamics, stability depends only on the largest eigenvalue of the (preconditioned) Hessian. In contrast, our results show that mean-square stability of ZO methods depends on the full Hessian spectrum. This dependence appears explicitly in the exact stability conditions, and it is reflected in the computable bounds through both trace and top-eigenvalue terms. A practically important regime is when $\Tr(\mH)$ dominates $\lambda_{\max}(\mH)$, in which case these bounds become tight and the trace term largely sets the stability scale. Empirically, this matches our empirical observations: the Hessian trace (or the preconditioned trace for ZO-Adam) closely tracks the relevant stability threshold throughout training, while $\lambda_{\max}(\mH_t)$ can be less informative for ZO dynamics.
 
\paragraph{Momentum reshapes ZO stability differently from FO stability.}
Momentum provides a concrete example showing that ZO stability is not a direct analogue of FO stability. 
For GD with momentum (GDM), the linearized stability threshold increases with $\beta$, corresponding to stable training at sharper curvature levels, with $\lambda_{\max}(\mH_t)\approx 2(1+\beta)/\eta$ at the EoS. 
For ZO-GDM, our mean-square conditions imply the opposite dependence: increasing $\beta$ shrinks the stable regime and reduces the corresponding stability scale. Empirically, this is consistent with ZO-GDM operating in lower-curvature regimes as $\beta$ increases, with $\Tr(\mH_t)\approx 2(1-\beta)/\eta$ at the mean-square EoS. 

Intuitively, in FO-GDM, momentum damps deterministic oscillations along sharp directions, which enlarges the stable step-size region. In ZO-GDM, by contrast, momentum accumulates not only the gradient signal but also the random-direction estimator noise. Since ZO stability is governed by second moments, this extra accumulated noise makes the dynamics less stable and causes increasing $\beta$ to shrink the stable region.

A similar contrast appears for adaptive methods. For (frozen) Adam, increasing $\beta_1$ increases the stability threshold, with $\lambda_{\max}(\mP_t^{-1}\mH_t)\approx 2(1+\beta_1)/((1-\beta_1)\eta)$. For ZO-Adam, by comparison, our experiments suggest that $\Tr(\mP_t^{-1}\mH_t)\approx 2/\eta$ at the mean-square EoS, which is independent of $\beta_1$ (see \cref{fig:zo-adam_sweep_beta} and Appendix~\ref{appendix:zo-adam_beta}).
Understanding how these effects translate into practical benefits (or tradeoffs) of momentum in ZO training is an interesting open question.

\paragraph{Effect of the smoothing parameter $\mu$ and trace-related implicit bias.}
The two-point estimator introduces smoothing, controlled by $\mu$, that changes both the bias and the noise structure of the ZO update.
\citet{zhang2025zerothorder} connect such ZO optimization to an implicit preference for small-trace regions, formalized as approximately minimizing 
\[
f_{\mu}(\vx)
\;:=\;
f(\vx)
+
\frac{\mu^2}{2}\Tr\!\big(\nabla^2 f(\vx)\big),
\]
up to higher-order terms.
Under this perspective, $\mu$ directly modulates a curvature-dependent bias in the effective objective, and large $\mu$ can prevent the dynamics from approaching the mean-square stability threshold predicted by the unsmoothed linearized model.
A systematic theory that jointly captures ($i$) the mean-square stability constraint and ($ii$) the effect of smoothing bias on the effective landscape is an interesting direction for future work.

\paragraph{Beyond full-batch: mini-batch ZO methods.}
Our analysis focuses on full-batch ZO dynamics, where the only randomness comes from the estimator directions.
In practical settings, mini-batching introduces an additional noise source through stochastic sampling of data points.
A complete stability theory for mini-batch ZO training would need to incorporate both estimator noise and sampling noise, and quantify how these two sources interact in the second-moment recursion.
Recent work gives sharp mean-square stability thresholds for mini-batch SGD~\citep{mulayoff2024exact}.
Deriving analogous results for mini-batch ZO methods would clarify whether mean-square EoS persists under data subsampling, and which curvature quantities control stability in that regime.

\section{Conclusion}
\label{sec:conclusion}

We developed a mean-square linear stability theory for zeroth-order (ZO) optimization methods based on the standard two-point estimator, including ZO-GD, ZO-GDM, and Adam-style preconditioned variants.
We derived exact step size characterizations for mean-square stability via linearization, showing that ZO stability depends on the full spectrum of the (preconditioned) Hessian and admits computable bounds in terms of the  trace and top eigenvalue.

Guided by these results, we empirically studied full-batch ZO training on standard neural network architectures (CNN, ResNet, and ViT) and found consistent evidence that ZO methods operate at the mean-square edge of stability: the curvature quantities governing the theoretical stability threshold adapt during training and stabilize near the predicted boundary.
Across methods, the stability behavior is driven primarily by trace-based curvature terms, providing a concrete mechanism through which large step sizes implicitly regularize ZO training dynamics.

Our results position mean-square stability as a principled framework for analyzing and predicting ZO optimization behavior in deep learning.
A natural next step is to use this mean-square EoS perspective to extend the central-flow framework of \citet{cohen2025understanding} to zeroth-order methods, and to empirically verify whether the resulting flow description captures ZO training dynamics across architectures and tasks.
It is also important to extend the theory to mini-batch ZO methods.
Another important direction is to understand how stability constraints interact with optimization efficiency and generalization in practice.

\section*{Acknowledgements}

MS thanks Alex Damian and Jeremy Cohen for insightful discussion.
LZ gratefully acknowledges funding by the Max Planck ETH Center for Learning Systems.
BL and NH are supported by Swiss National Science Foundation (SNSF) Starting Grant.
MM thanks the German Research Foundation for the support. SO acknowledges funding by NSF grants 2112471, 2229876, and 2505865. 

\section*{Impact Statement}

This paper presents work whose goal is to advance the field of machine learning. There are many potential societal consequences of our work, none which we feel must be specifically highlighted here.

\bibliography{ref}
\bibliographystyle{icml2026}

\clearpage
\appendix
\onecolumn
\section{Proofs for the mean-square stability analysis of zeroth-order methods}\label{appendix:proof}

In this section, we provide the proofs of the mean-square linear stability results in \cref{sec:stability:zo}. Our analysis treats zeroth-order gradient descent with momentum (ZO-GDM) as the canonical zeroth-order method. The remaining algorithms studied in the paper are handled as special cases or extensions: ZO-GD corresponds to the specialization $\beta=0$, while Frozen ZO-Adam is analyzed by modifying the covariance recursion under an additional commutativity assumption.

The core of the analysis proceeds in two steps. First, we derive an explicit linear recursion for the second-moment quantities of the ZO-GDM iterates under the linearized dynamics (Appendix~\ref{appendix:zo-gdm_second-moment}). This recursion induces a linear operator acting on a product space of covariance matrices. Second, we characterize mean-square linear stability by analyzing the spectral radius of this operator (Appendix~\ref{appendix:spectral_analysis}). A key technical ingredient is that the operator preserves a natural cone and includes a rank-one global coupling term, which allows its spectral radius to be characterized via the Krein--Rutman Theorem.

We first derive the second-moment recursion for ZO-GDM and express it in operator form. We then develop the spectral analysis of the resulting covariance operator and use it to prove \cref{thm:zo-gdm}. \cref{thm:zo-gd} follows immediately as the special case $\beta=0$, and \cref{thm:frozen-zo-adam} is proved by adapting the analogous argument to the frozen preconditioned setting.

The proof of \cref{thm:zo-gdm} is provided in Appendix~\ref{appendix:proof_zo-gdm}, and the proof of \cref{thm:frozen-zo-adam} is provided in Appendix~\ref{appendix:proof_frozen-zo-adam}.

\subsection{Second-moment recursion and covariance operator for ZO-GDM}
\label{appendix:zo-gdm_second-moment}

In this subsection, we derive the linear recursion governing the second-moment dynamics of ZO-GDM under the linearized dynamics.
Following \cref{def:linearized-dynamics}, we consider applying ZO-GDM to the quadratic objective
\[
f_{\mathrm{quad}}(\vx) = \tfrac{1}{2}\vx^\top \mH \vx,
\]
where $\mH \succeq 0$ and $\mH \neq \vzero$.

Recall that the ZO-GDM updates are given by
\begin{align}
\vm_{t+1} &= \beta \vm_t + \widehat{\nabla} f_{\mathrm{quad}}(\vx_t), \\
\vx_{t+1} &= \vx_t - \eta \vm_{t+1},
\end{align}
where $\beta \in [0,1)$ with initialization $\vm_0=\vzero$, and $\widehat{\nabla} f_{\mathrm{quad}}(\vx_t)$ denotes the standard two-point estimator
\[
\widehat{\nabla} f_{\mathrm{quad}}(\vx_t) = \frac{f_{\mathrm{quad}}(\vx_t+\mu \vu_t)-f_{\mathrm{quad}}(\vx_t-\mu \vu_t)}{2\mu}\,\vu_t, \qquad \vu_t \sim \mathcal{N}(0,I_d).
\]
Under the quadratic model, this estimator admits the explicit form
\[
\widehat{\nabla} f_{\mathrm{quad}}(\vx_t) = (\vu_t \vu_t^\top)\mH\vx_t,
\]
and is unbiased, i.e., $\E[\widehat{\nabla} f_{\mathrm{quad}}(\vx_t)] = \mH\vx_t$.

Let $\mH=\mU\bm\Lambda\mU^\top$ with $\bm\Lambda=\mathrm{diag}(\lambda_1,\ldots,\lambda_d)$ and orthogonal $\mU$. Defining rotated variables $\bar{\vx}_t=\mU^\top\vx_t$, $\bar{\vm}_t=\mU^\top\vm_t$, and $\bar{\vu}_t=\mU^\top\vu_t$, rotational invariance implies $\bar{\vu}_t\sim\mathcal N(\vzero,\mI_d)$ i.i.d., and the dynamics in the rotated coordinates take the same form with $\mH$ replaced by $\bm\Lambda$. Hence, without loss of generality, we assume $\mH=\bm\Lambda$ in what follows.

We now state a lemma that reduces the ZO-GDM second-moment dynamics to a linear covariance operator, which will be the basis for the spectral analysis in subsequent subsections.

\begin{lemma}[Second-moment recursion and covariance operator for ZO-GDM]\label{lem:zgdm-cov-operator}
Consider ZO-GDM applied to the quadratic model $f_{\mathrm{quad}}(\vx)=\tfrac{1}{2}\vx^\top \bm\Lambda \vx$ with $\bm\Lambda=\mathrm{diag}(\lambda_1,\ldots,\lambda_d)\neq\vzero$, step size $\eta>0$, and momentum $\beta\in[0,1)$.
For each iteration $t$ and coordinate $i=1,\ldots,d$, define the $2\times2$ covariance block
\[
\mW_{i,t}
:=
\begin{bmatrix}
\E[x_{i,t}^2] & \E[\eta x_{i,t}m_{i,t}] \\
\E[\eta x_{i,t}m_{i,t}] & \E[\eta^2 m_{i,t}^2]
\end{bmatrix},
\]
where $\vx_t=(x_{1,t},\ldots,x_{d,t})\in\R^d$ and $\vm_t=(m_{1,t},\ldots,m_{d,t})\in\R^d$.
Then the covariance blocks satisfy the recursion
\[
\mW_{i,t+1} = \mA_i \mW_{i,t} \mA_i^\top + \eta^2\!\left( \lambda_i^2(\mW_{i,t})_{11} + \sum_{j=1}^d \lambda_j^2(\mW_{j,t})_{11} \right)\mQ,
\]
where
\[
\mA_i :=
\begin{bmatrix}
1-\eta\lambda_i & -\beta \\
\eta\lambda_i & \beta
\end{bmatrix},
\qquad
\mQ :=
\begin{bmatrix}
1 & -1 \\
-1 & 1
\end{bmatrix}.
\]
\end{lemma}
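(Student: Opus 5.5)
The plan is to write the ZO-GDM update coordinatewise as a deterministic linear map plus a conditionally mean-zero noise term. The noise covariance is computed with Isserlis' Theorem, and expectations are then taken. The recursion closes on the diagonal $2\times 2$ blocks because the noise variance turns out to involve only the quantities $\E[x_{j,t}^2]$.

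\textbf{Step 1: linear-plus-noise form.} Let $\gF_t$ be the sigma-algebra generated by $\vu_0,\ldots,\vu_{t-1}$, so that $(\vx_t,\vm_t)$ is $\gF_t$-measurable and $\vu_t$ is independent of $\gF_t$. By the explicit form of the estimator, $\vg_t := \widehat{\nabla} f_{\mathrm{quad}}(\vx_t) = \vu_t\vu_t^\top\bm\Lambda\vx_t$. Its $i$-th coordinate is
\[
g_{i,t} = u_{i,t}\sum_{j=1}^d u_{j,t}\lambda_j x_{j,t}.
\]
Write $g_{i,t} = \lambda_i x_{i,t} + \xi_{i,t}$, where $\E[\xi_{i,t}\mid\gF_t]=0$ by unbiasedness.

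Set $\vz_{i,t} := (x_{i,t},\, \eta m_{i,t})^\top$. The updates then read
\[
\eta m_{i,t+1} = \beta\,\eta m_{i,t} + \eta\lambda_i x_{i,t} + \eta\xi_{i,t},
\qquad
x_{i,t+1} = x_{i,t} - \beta\,\eta m_{i,t} - \eta\lambda_i x_{i,t} - \eta\xi_{i,t}.
\]
In vector form this is
\[
\vz_{i,t+1} = \mA_i\vz_{i,t} + \eta\,\xi_{i,t}\,(-1,\,1)^\top,
\]
which matches the stated $\mA_i$. Moreover $(-1,1)^\top(-1,1) = \mQ$.

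\textbf{Step 2: second moments.} Since $\mW_{i,t} = \E[\vz_{i,t}\vz_{i,t}^\top]$, expanding the outer product gives three kinds of terms. The term $\mA_i\vz_{i,t}\vz_{i,t}^\top\mA_i^\top$ contributes $\mA_i\mW_{i,t}\mA_i^\top$ after taking expectations. The cross terms vanish after conditioning on $\gF_t$, because $\xi_{i,t}$ has conditional mean zero and $\vz_{i,t}$ is $\gF_t$-measurable. The remaining term is $\eta^2\E[\xi_{i,t}^2]\,\mQ$. It therefore suffices to show
\[
\E[\xi_{i,t}^2\mid\gF_t] = \lambda_i^2 x_{i,t}^2 + \sum_{j=1}^d \lambda_j^2 x_{j,t}^2 .
\]
Taking the outer expectation then produces exactly $\lambda_i^2(\mW_{i,t})_{11} + \sum_j\lambda_j^2(\mW_{j,t})_{11}$.

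\textbf{Step 3: Gaussian fourth moment.} This is the only real computation. Since $\E[\xi_{i,t}^2\mid\gF_t] = \E[g_{i,t}^2\mid\gF_t] - \lambda_i^2x_{i,t}^2$, I need
\[
\E[g_{i,t}^2\mid\gF_t] = \sum_{j,k}\lambda_j\lambda_k x_{j,t}x_{k,t}\,\E[u_i^2u_ju_k].
\]
By Isserlis' Theorem, $\E[u_i^2u_ju_k] = \delta_{jk} + 2\delta_{ij}\delta_{ik}$: the value is $3$ when $i=j=k$, $1$ when $j=k\neq i$, and $0$ otherwise. Hence
\[
\E[g_{i,t}^2\mid\gF_t] = \sum_j\lambda_j^2x_{j,t}^2 + 2\lambda_i^2x_{i,t}^2.
\]
Subtracting the squared mean $\lambda_i^2 x_{i,t}^2$ yields the claimed expression.

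No off-diagonal quantities $\E[x_{i,t}x_{j,t}]$ with $i\neq j$ survive. This is why the recursion closes on the diagonal blocks $\{\mW_{i,t}\}_{i=1}^d$ alone, and it is the point I would check most carefully. The rotation to $\mH=\bm\Lambda$ was already justified before the lemma. The initialization $\vm_0=\vzero$ plays no role in the recursion itself.
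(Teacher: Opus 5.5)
Your proof is correct and follows essentially the same route as the paper: both compute the one-step second moment using independence of $\vu_t$ from $(\vx_t,\vm_t)$ and Isserlis' Theorem, and both observe that only the diagonal moments $\E[x_{j,t}^2]$ enter the coupling term. The difference is organizational. The paper first derives full-matrix recursions for $\E[\vx_t\vx_t^\top]$, $\E[\eta\vx_t\vm_t^\top]$, $\E[\eta^2\vm_t\vm_t^\top]$ via $\E[\vu\vu^\top\mA\vu\vu^\top]=\mA+\mA^\top+\Tr(\mA)\mI$ and then reads off diagonal entries, whereas your split $g_{i,t}=\lambda_i x_{i,t}+\xi_{i,t}$ makes the $\mA_i\mW_{i,t}\mA_i^\top+(\cdot)\,\mQ$ block structure immediate.
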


\begin{proof}[Proof of \cref{lem:zgdm-cov-operator}]
We consider with the augmented state $(\vx_t,\eta\vm_t)\in\R^{2d}$, for which the ZO-GDM update
can be written as
\begin{align}
\begin{bmatrix}
\vx_{t+1}\\
\eta \vm_{t+1}
\end{bmatrix}
=
\begin{bmatrix}
\mI-\eta\vu_t\vu_t^\top\bm\Lambda & -\beta\mI\\
\eta\vu_t\vu_t^\top\bm\Lambda & \beta\mI
\end{bmatrix}
\begin{bmatrix}
\vx_t\\
\eta \vm_t
\end{bmatrix}.
\end{align}

To track second moments, define
\begin{align}
\mX_t:=\E[\vx_t\vx_t^\top],\qquad
\mY_t:=\E[\eta^2\vm_t\vm_t^\top],\qquad
\mC_t:=\E[\eta\vx_t\vm_t^\top].
\end{align}
Using independence of $\vu_t$ from $(\vx_t,\vm_t)$ and applying
Lemma~\ref{lemma:gaussian_identity} (Isserlis's theorem) to evaluate the Gaussian fourth moments,
we obtain the following closed recursion:
\begin{align}
\mX_{t+1}
&=
\mX_t
- \eta (\bm\Lambda \mX_t + \mX_t \bm\Lambda)
+ \eta^2 \bigl(2\bm\Lambda \mX_t \bm\Lambda
+ \Tr(\bm\Lambda \mX_t \bm\Lambda)\mI\bigr)
- \beta (\mC_t + \mC_t^\top)
+ \beta\eta (\bm\Lambda \mC_t + \mC_t^\top \bm\Lambda)
+ \beta^2 \mY_t, \\
\mY_{t+1}
&=
\eta^2 \bigl(2\bm\Lambda \mX_t \bm\Lambda
+ \Tr(\bm\Lambda \mX_t \bm\Lambda)\mI\bigr)
+ \eta\beta (\bm\Lambda \mC_t + \mC_t^\top \bm\Lambda)
+ \beta^2 \mY_t, \\
\mC_{t+1}
&=
\eta \mX_t \bm\Lambda
- \eta^2 \bigl(2\bm\Lambda \mX_t \bm\Lambda
+ \Tr(\bm\Lambda \mX_t \bm\Lambda)\mI\bigr)
+ \beta \mC_t
- \beta\eta (\bm\Lambda \mC_t + \mC_t^\top \bm\Lambda)
- \beta^2 \mY_t.
\end{align}

Since $\bm\Lambda$ is diagonal, the coordinates decouple except through the scalar coupling
term
\[
\Tr(\bm\Lambda \mX_t \bm\Lambda)
=
\sum_{j=1}^d \lambda_j^2 (\mX_t)_{jj}.
\]
Taking diagonal entries, for each coordinate $i=1,\ldots,d$, we obtain
\begin{align}
(\mX_{t+1})_{ii}
&=
(1-2\eta\lambda_i+2\eta^2\lambda_i^2)(\mX_t)_{ii}
+\beta^2(\mY_t)_{ii}
+2\beta(-1+\eta\lambda_i)(\mC_t)_{ii}
+\eta^2\sum_{j=1}^d \lambda_j^2(\mX_t)_{jj}, \\
(\mY_{t+1})_{ii}
&=
2\eta^2\lambda_i^2(\mX_t)_{ii}
+\beta^2(\mY_t)_{ii}
+2\beta\eta\lambda_i(\mC_t)_{ii}
+\eta^2\sum_{j=1}^d \lambda_j^2(\mX_t)_{jj}, \\
(\mC_{t+1})_{ii}
&=
(\eta\lambda_i-2\eta^2\lambda_i^2)(\mX_t)_{ii}
-\beta^2(\mY_t)_{ii}
+\beta(1-2\eta\lambda_i)(\mC_t)_{ii}
-\eta^2\sum_{j=1}^d \lambda_j^2(\mX_t)_{jj}.
\end{align}

Recalling that
\[
(\mX_t)_{ii}=\E[x_{i,t}^2],\qquad
(\mY_t)_{ii}=\E[\eta^2 m_{i,t}^2],\qquad
(\mC_t)_{ii}=\E[\eta x_{i,t}m_{i,t}],
\]
the above relations can be grouped into the $2\times2$ covariance block
$\mW_{i,t}$ defined in the statement of the lemma.
A direct calculation then gives
\[
\mW_{i,t+1}
=
\mA_i \mW_{i,t} \mA_i^\top
+
\eta^2\!\left(
\lambda_i^2(\mW_{i,t})_{11}
+
\sum_{j=1}^d \lambda_j^2(\mW_{j,t})_{11}
\right)\mQ,
\]
with
\[
\mA_i :=
\begin{bmatrix}
1-\eta\lambda_i & -\beta \\
\eta\lambda_i & \beta
\end{bmatrix},
\qquad
\mQ :=
\begin{bmatrix}
1 & -1 \\
-1 & 1
\end{bmatrix}.
\]
This completes the proof.
\end{proof}
Lemma~\ref{lem:zgdm-cov-operator} shows that the ZO-GDM second-moment dynamics are governed by a closed linear recursion over the covariance blocks $\{\mW_{i,t}\}_{i=1}^d$.
In particular,
\[
\mathbb{E}\|\vx_t\|^2 = \sum_{i=1}^d (\mW_{i,t})_{11},
\]
so mean-square linear stability is equivalent to uniform boundedness of the quantity $\sum_{i=1}^d (\mW_{i,t})_{11}$ over iterations.

Collecting the blocks into the product space of $2\times2$ symmetric matrices $(\mathbb{S}^2)^d$, the mapping
$\{\mW_{i,t}\}_{i=1}^d \mapsto \{\mW_{i,t+1}\}_{i=1}^d$ can be viewed as the action of a linear operator $\gT:(\mathbb{S}^2)^d \to (\mathbb{S}^2)^d$, parameterized by the step size $\eta$, momentum $\beta$, and the eigenvalues $\lambda_1,\ldots,\lambda_d$.
In the next subsection, we analyze the spectral properties of this covariance operator and derive explicit conditions under which its spectral radius is smaller than one.

\subsection{Spectral analysis of the covariance operator}\label{appendix:spectral_analysis}

In this subsection, we analyze the spectral properties of the covariance operator induced by the ZO-GDM second-moment recursion derived in Lemma~\ref{lem:zgdm-cov-operator} (Appendix~\ref{appendix:zo-gdm_second-moment}).
That recursion is governed by a linear operator $\gT:(\mathbb{S}^2)^d\to(\mathbb{S}^2)^d$ acting on the covariance blocks $\{\mW_{i,t}\}_{i=1}^d$, and in particular controls $\mathbb{E}\|\vx_t\|^2 = \sum_{i=1}^d (\mW_{i,t})_{11}$.
We therefore analyze the spectral radius of $\gT$.

The operator $\gT$ preserves the cone $(\mathbb{S}^2_+)^d$ and contains a rank-one coupling term across coordinates, allowing us to invoke the Krein--Rutman theorem.
The following theorem gives an exact characterization of the spectral radius of $\gT$, which forms the
technical core of the mean-square stability analysis for ZO-GDM.

\begin{theorem}[Spectral characterization of ZO-GDM covariance operator]
\label{thm:cov-operator}
Let $\gX := (\mathbb{S}^2)^d$ and $\gK := (\mathbb{S}^2_+)^d$.
Fix $\eta>0$, $\beta\in[0,1)$, and $\lambda_{\max}=\lambda_1\ge \lambda_2\ge \cdots \ge \lambda_d>0$.
Define the linear operator $\gT:\gX\to\gX$ blockwise by
\[
(\gT(\mW))_i
=
\mA_i \mW_i \mA_i^\top
+
\eta^2\!\left(
\lambda_i^2 (\mW_i)_{11}
+
\sum_{j=1}^d \lambda_j^2 (\mW_j)_{11}
\right)\mQ,
\qquad i=1,\ldots,d,
\]
where
\[
\mA_i :=
\begin{bmatrix}
1-\eta\lambda_i & -\beta \\
\eta\lambda_i & \beta
\end{bmatrix},
\qquad
\mQ :=
\begin{bmatrix}
1 & -1 \\
-1 & 1
\end{bmatrix}.
\]
Define the scalar function
\[
S(\eta,\beta)
:=
\sum_{i=1}^d
\frac{\eta \lambda_i}{2(1-\beta)\left(1-\tfrac{\eta\lambda_i}{1-\beta^2}\right)}.
\]

Then the operator $\gT$ satisfies the following properties.

\begin{enumerate}
\item[(a)] \textbf{(Leading eigenvalue in the cone)}
The operator $\gT$ preserves the cone $\gK$, i.e., $\gT(\gK)\subseteq\gK$.
Moreover, $\gT$ has an eigenvalue equal to its spectral radius $\rho(\gT)$,
with an associated eigenvector
$\mW^\star=(\mW_1^\star,\ldots,\mW_d^\star)\in\gK\setminus\{\vzero\}$
satisfying
\[
\rho(\gT)\ge \eta^2\sum_{i=1}^d \lambda_i^2>0
\qquad\text{and}\qquad
\sum_{i=1}^d (\mW_i^\star)_{11}>0.
\]

\item[(b)] \textbf{(Critical case)}
\[
\rho(\gT)=1
\quad\Longleftrightarrow\quad
\eta\lambda_{\max}<1-\beta^2
\;\;\text{and}\;\;
S(\eta,\beta)=1.
\]

\item[(c)] \textbf{(Subcritical case)}
\[
\rho(\gT)<1
\quad\Longleftrightarrow\quad
\eta\lambda_{\max}<1-\beta^2
\;\;\text{and}\;\;
S(\eta,\beta)<1.
\]
\end{enumerate}
\end{theorem}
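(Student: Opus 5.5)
Part (a) follows from cone preservation plus the Krein--Rutman theorem, and parts (b)--(c) from a scalar reduction of the eigen-equation through the rank-one coupling.

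\textbf{Cone preservation.} Each map $\mW_i\mapsto\mA_i\mW_i\mA_i^\top$ is a congruence, so it preserves $\mathbb{S}^2_+$. The coupling coefficient $\lambda_i^2(\mW_i)_{11}+\sum_j\lambda_j^2(\mW_j)_{11}$ is nonnegative on $\gK$, and $\mQ\succeq0$. Hence $\gT(\gK)\subseteq\gK$. Since $\gK$ is a closed, solid, pointed cone in a finite-dimensional space, the finite-dimensional Krein--Rutman (Perron--Frobenius) theorem gives an eigenvector $\mW^\star\in\gK\setminus\{\vzero\}$ with eigenvalue $\rho(\gT)$.

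\textbf{Lower bound on $\rho(\gT)$ and $\sum_i(\mW^\star_i)_{11}>0$.} Write $\gT=\gA+\gC$, where $\gA$ is the blockwise congruence and $\gC$ is the coupling part; both preserve $\gK$. Test on $\mE=(\ve_1\ve_1^\top,\dots,\ve_1\ve_1^\top)$. Then $\gT(\mE)\succeq_{\gK}\eta^2(\sum_j\lambda_j^2)\mQ$ blockwise, and iterating this cone inequality (using monotonicity of $\gT$ and $(\mQ)_{11}=1$) gives $\rho(\gT)\ge\eta^2\sum_j\lambda_j^2$. For the eigenvector, suppose $\sum_i(\mW^\star_i)_{11}=0$. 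Since each $(\mW^\star_i)_{11}\ge0$ on the cone, all of them vanish, and PSD then forces $\mW_i^\star=\mathrm{diag}(0,w_i)$. The coupling vanishes, and $\mA_i\,\mathrm{diag}(0,w_i)\mA_i^\top=w_i\beta^2(1,-1)^\top(1,-1)$ has $(1,1)$ entry $\beta^2w_i$, which must equal $\rho\cdot 0$. So either $\beta=0$ (and then $\gT(\mW^\star)=0$, contradicting $\rho>0$), or $w_i=0$ (contradicting $\mW^\star\ne0$).

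\textbf{Scalar reduction for (b),(c).} For $r>0$, consider the resolvent equation. Set $s:=\sum_j\lambda_j^2(\mW_j)_{11}$; the eigen-equation $r\mW_i=\mA_i\mW_i\mA_i^\top+\eta^2(\lambda_i^2(\mW_i)_{11}+s)\mQ$ is a $2\times2$ Lyapunov-type equation per block. Define $g_i(r)$ as the $(1,1)$ entry of $\sum_{k\ge0}r^{-k-1}\mathcal{L}_i^k(\mQ)$, where $\mathcal{L}_i(\mM)=\mA_i\mM\mA_i^\top+\eta^2\lambda_i^2\mM_{11}\mQ$. This series converges iff $\rho(\mathcal{L}_i)<r$. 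In that case $(\mW_i)_{11}=\eta^2 s\,g_i(r)$, and consistency yields the secular equation
\[
F(r):=\eta^2\sum_i\lambda_i^2 g_i(r)=1 .
\]
$F$ is positive and strictly decreasing in $r$, so $\rho(\gT)$ is the largest root, provided it exceeds $\max_i\rho(\mathcal{L}_i)$. Consequently, $\rho(\gT)\le1$ (resp. $<1$) iff $\max_i\rho(\mathcal{L}_i)<1$ and $F(1)\le1$ (resp. $<1$), with equality giving the critical case.

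\textbf{Main obstacle: the per-block computation.} Two things must be established. First, $\rho(\mathcal{L}_i)<1$ iff $\eta\lambda_i<1-\beta^2$. Second, $\eta^2\lambda_i^2g_i(1)=\eta\lambda_i/\bigl(2(1-\beta)(1-\eta\lambda_i/(1-\beta^2))\bigr)$, so that $F(1)=S(\eta,\beta)$. My plan is to parametrize $\mathbb{S}^2$ by three coordinates and solve the linear system $(\mathcal{I}-\mathcal{L}_i)\mM=\mQ$ explicitly. The stationary second moments can be read off as those of the scalar recursion $y_{t+1}=(1+\beta-\eta\lambda_i)y_t-\beta y_{t-1}+\xi_t$, noting $\mathcal{L}_i$ is the second-moment map of this recursion with multiplicative noise variance $\eta^2\lambda_i^2$. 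Its $3\times3$ determinant factorizes, and Schur--Cohn/Jury criteria applied to the characteristic cubic give the threshold $\eta\lambda_i<1-\beta^2$. One also needs to check that when some block is unstable, $\rho(\gT)\ge\rho(\mathcal{L}_i)\ge1$; this holds by comparison, since $\gT\succeq\mathcal{L}_i$ on the cone restricted to block $i$. The $\lambda_d>0$ assumption keeps all $g_i$ well defined.
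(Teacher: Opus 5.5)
The gap is in the direction $(\Rightarrow)$ of (b). Your architecture is otherwise the paper's. Your secular equation $F(r)=1$ is the paper's fixed-point identity $r=s\bigl((\mathrm{Id}-\tfrac1r\gM)^{-1}\bar{\mQ}\bigr)$ rewritten via $(\mathrm{Id}-\tfrac1r\gM)^{-1}=r\,(r\,\mathrm{Id}-\gM)^{-1}$. Monotonicity in $r$ comes from the same Neumann series, and your per-block facts are Lemma~\ref{lem:local}.

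The departure is that you replace Lemma~\ref{lem:dual_infeasibility} with the comparison $\rho(\gT)\ge\max_i\rho(\mathcal L_i)$. Comparison is only non-strict, but your scalar reduction needs $\rho(\gT)>\max_i\rho(\mathcal L_i)$; your proviso ``provided it exceeds'' is never discharged. From $\rho(\gT)=1$ you get only $\max_i\rho(\mathcal L_i)\le 1$, i.e.\ $\eta\lambda_{\max}\le 1-\beta^2$. At $\eta\lambda_{\max}=1-\beta^2$ the characteristic cubic of block $1$ satisfies $p(1)=2\eta\lambda_1(1-\beta^2-\eta\lambda_1)=0$, so $\rho(\mathcal L_1)=1$ exactly. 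There $F$ is defined only on $(1,\infty)$, and nothing in your argument rules out $\rho(\gT)=1$. So the strict inequality asserted in (b) is unproven.

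Your claim that the series defining $g_i(r)$ converges \emph{iff} $\rho(\mathcal L_i)<r$ would cover this case, but its ``only if'' half is not automatic. A Neumann series applied to the single vector $\mQ$, let alone its $(1,1)$ entry, can converge at $r=\rho(\mathcal L_i)$ if $\mQ$ has no component along the leading spectral direction. Showing that it does is exactly the nontrivial step you skipped.

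The fix in your notation is as follows. Let $\mY_i\succeq 0$, $\mY_i\neq 0$, be a Krein--Rutman eigenvector of the adjoint, $\mathcal L_i^*(\mY_i)=\rho(\mathcal L_i)\mY_i$; this exists because the adjoint preserves the self-dual cone $\mathbb S^2_+$. Pair block $i$ of the eigen-equation $\rho(\gT)\mW_i^\star=\mathcal L_i(\mW_i^\star)+\eta^2 s^\star\mQ$ with $\mY_i$:
\[
\bigl(\rho(\gT)-\rho(\mathcal L_i)\bigr)\langle \mY_i,\mW_i^\star\rangle=\eta^2 s^\star\langle \mY_i,\mQ\rangle .
\]
Strictness then follows once $\langle\mY_i,\mQ\rangle>0$. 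If instead $\langle\mY_i,\mQ\rangle=0$, then $\mY_i$ is a positive multiple $a$ of the all-ones matrix. In that case $\mathcal L_i^*(\mY_i)=\mA_i^\top\mY_i\mA_i=a\,e_1e_1^\top$, since $\mA_i^\top(1,1)^\top=e_1$. This is not proportional to the all-ones matrix, a contradiction. This is precisely the paper's Lemma~\ref{lem:dual_infeasibility}.

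With $\rho(\gT)>\max_i\rho(\mathcal L_i)$ established, all four implications in (b) and (c) follow from your secular-equation argument. Alternatively, at $\eta\lambda_1=1-\beta^2$ you could use the explicit $3\times 3$ resolvent. Write $g_1(r)=N(r)/p(r)$ with $N(1)=1+\beta\neq 0$ and $p(1)=0$. Then $F(r)\to\infty$ as $r\downarrow 1$, and since $F(r)\to 0$ as $r\to\infty$, there is an eigenvalue strictly above $1$.
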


\begin{proof}[Proof of \cref{thm:cov-operator}]
   We work on the product space $\gX := (\mathbb{S}^2)^d$ equipped with the product cone $\gK := (\mathbb{S}^2_+)^d$ and the induced partial order $\preceq$.

    \paragraph{Proof of (a).}
    Let $\mW=(\mW_1,\ldots,\mW_d)\in \gK$, i.e., $\mW_i\succeq 0$ for all $i=1,\ldots,d$. 
    Then $\mA_i \mW_i \mA_i^\top \succeq 0$ and $(\mW_i)_{11}\ge 0$, so $\sum_{j=1}^d \lambda_j^2 (\mW_j)_{11}\ge 0$. 
    Hence $(\gT(\mW))_i\succeq 0$ for all $i$, and therefore $\gT(\gK)\subseteq\gK$. 
    
    Since $\gX$ is finite dimensional and $\gK$ is a closed, convex, pointed cone with nonempty interior, the Krein--Rutman theorem (Lemma~\ref{lemma:Krein--Rutman}) implies that $\gT$ has an eigenvalue equal to its spectral radius $\rho(\gT)$ with a corresponding eigenvector $\mW^\star\in\gK\setminus\{\vzero\}$.
    
    To lower bound $\rho(\gT)$, define $\widebar{\mQ}:=(\mQ,\ldots,\mQ)\in\gK\setminus\{\vzero\}$.
    For each $i$,
    \[
    (\gT(\widebar{\mQ}))_i
    =
    \mA_i \mQ \mA_i^\top
    +
    \eta^2\!\left(\lambda_i^2+\sum_{j=1}^d\lambda_j^2\right)\mQ
    \succeq
    \eta^2\!\left(\sum_{j=1}^d\lambda_j^2\right)\mQ.
    \]
    Thus $\gT(\widebar{\mQ})\succeq c\,\widebar{\mQ}$ with $c:=\eta^2\sum_{j=1}^d\lambda_j^2>0$.
    Iterating gives $\gT^t(\widebar{\mQ})\succeq c^t\widebar{\mQ}$ for all $t\ge1$.
    Fixing any norm on $\gX$ and applying Gelfand’s formula, we obtain
    \[
    \rho(\gT)\ge c=\eta^2\sum_{j=1}^d\lambda_j^2.
    \]

    Finally, we show $\sum_i(\mW_i^\star)_{11}>0$.
    If $(\mW_i^\star)_{11}=0$ for all $i$, then $\mW_i^\star=\begin{bsmallmatrix}0&0\\0&y_i\end{bsmallmatrix}$
    with $y_i\ge0$.
    But then
    \[
    \rho(\gT) \mW^\star_i
    = 
    (\gT(\mW^\star))_i
    =
    \mA_i\mW_i^\star\mA_i^\top
    =
    \beta^2 y_i\,\mQ\;.
    \]
    Note that $\beta^2 y_i\,\mQ$ has the same value at the $(1,1)$-entry and $(2,2)$-entry.
    Consequently, $\rho(\gT) \mW^\star_i$ should have the same value at the $(1,1)$-entry and $(2,2)$-entry.
    Thus $\rho(\gT) y_i=0$, forcing $y_i=0$ since $\rho(\gT)>0$.
    Hence $\mW^\star=0$, a contradiction.
    This concludes the proof of (a).

    \paragraph{Local-global decomposition.}
    For each $i=1,\ldots,d$, we define the local linear map $\gM_i:\sS^2 \to \sS^2$ by
    \begin{align*}
        \gM_i (\mX) := \mA_i \mX \mA_i^\top + \eta^2 \lambda_i^2(\mX)_{11} \mQ.
    \end{align*}
    Define the global coupling scalar function $s:\gX \to \R$ by
    \begin{align*}
        s(\mW) := \sum_{j=1}^d \eta^2 \lambda_j^2 (\mW_j)_{11}.
    \end{align*}
    Then, the linear operator $\gT: \gX\to \gX$ can be decomposed as below:
    \begin{align}\label{eq:T-decomp}
        (\gT (\mW))_i = \gM_i (\mW_i) + s(\mW) \mQ,\qquad i=1,\ldots,d.
    \end{align}
    Note that each linear map $\gM_i$ is $\sS^2_+$-preserving, i.e., $\gM_i(\sS^2_+) \subseteq \sS^2_+$.
    Define the block-diagonal operator $\gM:\gX\to\gX$ and $\bar \mQ \in \gK$ by
    \begin{align}\label{eq:T-decomp-block}
        (\gM(\mW))_i:=\gM_i(\mW_i)\;, \quad \text{and} \quad \widebar{\mQ}:=(\mQ,\ldots,\mQ)\in\gK\;.
    \end{align}
    Then \eqref{eq:T-decomp} can be written as
    \begin{align}
        \gT(\mW)=\gM(\mW)+s(\mW)\,\widebar{\mQ}\;.
    \end{align}

    \paragraph{Positivity of $s(\mW^\star)$.}
    Let $\mW^\star\in\gK\setminus\{\vzero\}$ denote the leading eigenvector of $\gT$ satisfying $\gT(\mW^\star)=\rho(\gT)\,\mW^\star$ and $\sum_{i=1}^d(\mW_i^\star)_{11}>0$, which exists by \cref{thm:cov-operator}(a). Then, $s(\mW^\star)=\sum_{j=1}^d \eta^2 \lambda_j^2 (\mW_j^\star)_{11}>0$.

    \paragraph{Key lemmas.} We use the following key lemmas to prove (b) and (c): Lemma~\ref{lem:local} and Lemma~\ref{lem:dual_infeasibility}.

    Lemma~\ref{lem:local} shows that $\rho(\gM_i) < 1$ if and only if $\eta \lambda_i < 1-\beta^2$. Moreover, if $\eta\lambda_i<1-\beta^2$, then $\mathrm{Id}-\gM_i$ is invertible, $(\mathrm{Id}-\gM_i)^{-1} (\sS^2_+) \subseteq \sS^2_+$, and $\mY_i := (\mathrm{Id}-\gM_i)^{-1} (\mQ) \in \sS^2_+$ satisfies 
    \begin{align*}
        \gamma_i := (\mY_i)_{11} = \frac{1+\beta}{2\eta \lambda_i (1-\beta^2 - \eta\lambda_i)} .
    \end{align*}

    Lemma~\ref{lem:dual_infeasibility} gives a scaled infeasibility statement: for any $c>0$, if $\rho(c\gM_i)\ge 1$, then for every $\alpha>0$, there does not exist $\mW\succeq 0$ such that
    \begin{align*}
        (\mathrm{Id}-c\gM_i) (\mW) \succeq \alpha \mQ . 
    \end{align*}
    
    \paragraph{Proof of (b).}
    We prove
    \begin{align}
        \rho(\gT)=1\quad\Longleftrightarrow\quad \eta\lambda_{\max}<1-\beta^2\ \text{ and }\ S(\eta,\beta)=1\;.
    \end{align}

    ($\implies$) 
    Assume $\rho(\gT)=1$. By \cref{thm:cov-operator}(a), there exists $\mW^\star\in\gK\setminus\{\vzero\}$ with $\gT(\mW^\star)=\mW^\star$.
    Let $s^\star:=s(\mW^\star)$. Then, $s^\star > 0$ by the positivity of $s(\mW^\star)$.
    From \eqref{eq:T-decomp}, for each $i$,
    \begin{align*}
        \mW_i^\star = (\gT(\mW^\star))_i = \gM_i (\mW_i^\star) + s^\star \mQ ,
    \end{align*}
    or equivalently, 
    \begin{align*}
        (\mathrm{Id} - \gM_i) (\mW_i^\star) = s^\star \mQ.
    \end{align*}
    Since $s^\star>0$, Lemma~\ref{lem:dual_infeasibility} implies $\rho(\gM_i)<1$ for every $i$, and by Lemma~\ref{lem:local}(a),
    \[
    \eta\lambda_i<1-\beta^2\quad\text{for all }i,\qquad\text{hence}\qquad \eta\lambda_{\max}<1-\beta^2.
    \]
    By Lemma~\ref{lem:local}(b), $\mathrm{Id}-\gM_i$ is invertible and $(\mathrm{Id}-\gM_i)^{-1}$ is $\sS_+^2$-preserving, so
    \[
    \mW_i^\star=s^\star(\mathrm{Id}-\gM_i)^{-1}\mQ=s^\star\mY_i,
    \]
    where $\mY_i\succeq 0$ and $\gamma_i:=(\mY_i)_{11}$ is given by Lemma~\ref{lem:local}(c).
    Taking $(1,1)$-entries and plugging into the definition of $s^\star$, we get
    \[
    s^\star=\sum_{i=1}^d \eta^2\lambda_i^2(\mW_i^\star)_{11}
    =\sum_{i=1}^d \eta^2\lambda_i^2\,\gamma_i\, s^\star.
    \]
    Cancelling $s^\star>0$ gives
    \[
    1=\sum_{i=1}^d \eta^2\lambda_i^2\gamma_i.
    \]
    Using Lemma~\ref{lem:local}(c),
    \[
    \eta^2\lambda_i^2\gamma_i
    =\eta^2\lambda_i^2\cdot\frac{1+\beta}{2\eta\lambda_i(1-\beta^2-\eta\lambda_i)}
    =\frac{\eta\lambda_i}{2(1-\beta)\left(1-\frac{\eta\lambda_i}{1-\beta^2}\right)}.
    \]
    Therefore $\sum_{i=1}^d \eta^2\lambda_i^2\gamma_i=S(\eta,\beta)$, and thus we conclude that $S(\eta,\beta)=1$.

    \smallskip
    ($\impliedby$)
    Assume $\eta\lambda_{\max}<1-\beta^2$ and $S(\eta,\beta)=1$.
    Then, $\eta\lambda_i<1-\beta^2$ for all $i$.
    
    By Lemma~\ref{lem:local}(b), the matrices $\mY_i:=(\mathrm{Id}-\gM_i)^{-1}\mQ$ are well-defined and satisfy $\mY_i\succeq 0$.
    Set $\mY:=(\mY_1,\ldots,\mY_d)\in\gK\setminus\{\vzero\}$.
    For each $i$, $(\mathrm{Id}-\gM_i)(\mY_i)=\mQ$, i.e. $\gM_i(\mY_i)=\mY_i-\mQ$.
    Moreover,
    \[
    s(\mY)=\sum_{i=1}^d \eta^2\lambda_i^2(\mY_i)_{11}
    =\sum_{i=1}^d \eta^2\lambda_i^2\gamma_i
    =S(\eta,\beta)=1.
    \]
    Thus \eqref{eq:T-decomp} gives, for each $i$,
    \[
    (\gT(\mY))_i=\gM_i(\mY_i)+s(\mY)\mQ
    =(\mY_i-\mQ)+\mQ=\mY_i,
    \]
    so $\gT(\mY)=\mY$. Hence, $1$ is an eigenvalue of $\gT$ with eigenvector $\mY$, and therefore $\rho(\gT)\ge 1$. Set $r:=\rho(\gT) \ge 1$.
    
By \cref{thm:cov-operator}(a), there exists $\mW^\star\in\gK\setminus\{\vzero\}$ with $\gT(\mW^\star)=\rho(\gT)\mW^\star$.
Set $s^\star:=s(\mW^\star)$. By the positivity of $s(\mW^\star)$, we have $s^\star>0$.
From $\gT(\mW^\star)=r\mW^\star$ and the decomposition \eqref{eq:T-decomp-block}, we obtain
\begin{align}
    r\mW^\star=\gM(\mW^\star)+s^\star \widebar{\mQ},
    \qquad\text{so that}\qquad
    \Bigl(\mathrm{Id}-\tfrac{1}{r}\gM\Bigr)\mW^\star=\tfrac{s^\star}{r}\,\widebar{\mQ}.
\end{align}
Since $\eta\lambda_{\max}<1-\beta^2$, Lemma~\ref{lem:local}(a) implies that $\rho(\gM_i)<1$ for all $i$.
Consequently, $\rho(\gM)=\max_i\rho(\gM_i)<1$. Since $r\ge 1$, we have $\rho(\tfrac{1}{r}\gM)\le\rho(\gM)<1$ and hence $\mathrm{Id}-\tfrac{1}{r}\gM$ is invertible with
\begin{align}
    \Bigl(\mathrm{Id}-\tfrac{1}{r}\gM\Bigr)^{-1}
    =
    \sum_{k=0}^{\infty}\Bigl(\tfrac{1}{r}\gM\Bigr)^k,
\end{align}
where the series converges in operator norm.
Applying this inverse gives
\begin{align}
    \mW^\star=\tfrac{s^\star}{r}\Bigl(\mathrm{Id}-\tfrac{1}{r}\gM\Bigr)^{-1}\widebar{\mQ}.
\end{align}
Applying the nonnegative functional $s(\cdot)$ to both sides and cancelling $s^\star>0$, we obtain
\begin{align}\label{eq:r-fixedpoint}
    r
    =
    s\Bigl(\Bigl(\mathrm{Id}-\tfrac{1}{r}\gM\Bigr)^{-1}\widebar{\mQ}\Bigr)\;.
\end{align}
Using the Neumann series,  we have
\begin{align}
    s\Bigl(\Bigl(\mathrm{Id}-\tfrac{1}{r}\gM\Bigr)^{-1}\widebar{\mQ}\Bigr)
    &=
    \sum_{k=0}^{\infty} r^{-k}\, s(\gM^k(\widebar{\mQ}))
    \le
    \sum_{k=0}^{\infty} s(\gM^k(\widebar{\mQ}))
    =
    s\Bigl((\mathrm{Id}-\gM)^{-1}\widebar{\mQ}\Bigr).
\end{align}
Because $\gM$ is block-diagonal, $(\mathrm{Id}-\gM)^{-1}$ is block-diagonal with blocks $(\mathrm{Id}-\gM_i)^{-1}$, and thus
\begin{align}
    (\mathrm{Id}-\gM)^{-1}\widebar{\mQ}=\bigl((\mathrm{Id}-\gM_1)^{-1}\mQ,\ldots,(\mathrm{Id}-\gM_d)^{-1}\mQ\bigr)=(\mY_1,\ldots,\mY_d)=\mY.
\end{align}
Therefore,
\begin{align}
    s\Bigl((\mathrm{Id}-\gM)^{-1}\widebar{\mQ}\Bigr)=s(\mY)=S(\eta,\beta)=1.
\end{align}
Combining with \eqref{eq:r-fixedpoint} shows that $r\le 1$, and hence $r=1$.
Therefore, we conclude $\rho(\gT)=1$.
This finishes the proof of (b).
    
\paragraph{Proof of (c).}
We prove
\begin{align}
    \rho(\gT)<1
    \quad\Longleftrightarrow\quad
    \eta\lambda_{\max}<1-\beta^2\ \text{ and }\ S(\eta,\beta)<1.
\end{align}

($\implies$)
Assume $\rho(\gT)<1$. Set $r:=\rho(\gT)<1$. By \cref{thm:cov-operator}(a), there exists $\mW^\star\in\gK\setminus\{\vzero\}$ with $\gT(\mW^\star)=r\mW^\star$.
Let $s^\star:=s(\mW^\star)$. Then, $s^\star > 0$ by the positivity of $s(\mW^\star)$.
From \eqref{eq:T-decomp}, for each $i$,
\begin{align*}
    r\mW_i^\star = (\gT(\mW^\star))_i = \gM_i (\mW_i^\star) + s^\star \mQ ,
\end{align*}
or equivalently, 
\begin{align}
    (\mathrm{Id}-\tfrac{1}{r}\gM_i)(\mW_i^\star)=\tfrac{s^\star}{r}\mQ.
\end{align}
Since $\tfrac{s^\star}{r}>0$ and $\mW_i^\star\succeq 0$, Lemma~\ref{lem:dual_infeasibility} with $c=\tfrac{1}{r}$ implies $\rho(\tfrac{1}{r}\gM_i)<1$, and thus $\rho(\gM_i)<r<1$. By Lemma~\ref{lem:local}(a),
\begin{align}
    \eta\lambda_i<1-\beta^2\quad\text{for all }i,
    \qquad\text{hence}\qquad
    \eta\lambda_{\max}<1-\beta^2.
\end{align}
Moreover, since $\rho(\tfrac{1}{r}\gM_i)<1$ and $\gM_i$ is $\sS^2_+$-preserving,
\begin{align}
    \Bigl(\mathrm{Id}-\tfrac{1}{r}\gM_i\Bigr)^{-1}
    =
    \sum_{k=0}^{\infty}\Bigl(\tfrac{1}{r}\gM_i\Bigr)^k
\end{align}
is well defined and $\sS^2_+$-preserving. Therefore
\begin{align}
    \mW_i^\star
    =
    \tfrac{s^\star}{r}\Bigl(\mathrm{Id}-\tfrac{1}{r}\gM_i\Bigr)^{-1}\mQ
    \succeq
    0.
\end{align}
Since this holds for every block, $\rho(\tfrac{1}{r}\gM)<1$, so the block inverse $\bigl(\mathrm{Id}-\tfrac{1}{r}\gM\bigr)^{-1}$ is well defined.
Applying $s(\cdot)$ to the identity $\mW^\star=\tfrac{s^\star}{r}(\mathrm{Id}-\tfrac{1}{r}\gM)^{-1}\widebar{\mQ}$ gives, after cancelling $s^\star>0$,
\begin{align}\label{eq:r-fixedpoint-c}
    r=s\Bigl(\Bigl(\mathrm{Id}-\tfrac{1}{r}\gM\Bigr)^{-1}\widebar{\mQ}\Bigr).
\end{align}
Using the Neumann series and $r<1$, we have
\begin{align}
    s\Bigl(\Bigl(\mathrm{Id}-\tfrac{1}{r}\gM\Bigr)^{-1}\widebar{\mQ}\Bigr)
    &=
    \sum_{k=0}^{\infty} r^{-k}\, s(\gM^k(\widebar{\mQ}))
    \ge
    \sum_{k=0}^{\infty} s(\gM^k(\widebar{\mQ}))
    =
    s\Bigl((\mathrm{Id}-\gM)^{-1}\widebar{\mQ}\Bigr)\;.
\end{align}
Then from \eqref{eq:r-fixedpoint-c}, we obtain
\begin{align}
    r
    \ge
    s\Bigl((\mathrm{Id}-\gM)^{-1}\widebar{\mQ}\Bigr)
    =
    s(\mY)
    =
    S(\eta,\beta).
\end{align}
Since $r<1$, we conclude that $S(\eta,\beta)<1$.

($\impliedby$)
Assume $\eta\lambda_{\max}<1-\beta^2$ and $S(\eta,\beta)<1$.
We prove $\rho(\gT)<1$ by contradiction.
Assume $\rho(\gT)\ge 1$, and let $\mW^\star\in\gK\setminus\{\vzero\}$ satisfy $\gT(\mW^\star)=r\mW^\star$ with $r:=\rho(\gT)\ge 1$.
Set $s^\star:=s(\mW^\star)>0$.
Recall from \eqref{eq:T-decomp-block} that
\begin{align}
    r\mW^\star=\gM(\mW^\star)+s^\star\widebar{\mQ},
    \qquad\text{so}\qquad
    \Bigl(\mathrm{Id}-\tfrac{1}{r}\gM\Bigr)\mW^\star=\tfrac{s^\star}{r}\widebar{\mQ}.
\end{align}
Since $\eta\lambda_{\max}<1-\beta^2$, Lemma~\ref{lem:local}(a) gives $\rho(\gM_i)<1$ for all $i$, hence $\rho(\gM)<1$.
Since $r\ge 1$, we have $\rho(\tfrac{1}{r}\gM)\le \rho(\gM)<1$, so $\mathrm{Id}-\tfrac{1}{r}\gM$ is invertible and
\begin{align}
    \mW^\star=\tfrac{s^\star}{r}\Bigl(\mathrm{Id}-\tfrac{1}{r}\gM\Bigr)^{-1}\widebar{\mQ}.
\end{align}
Applying $s(\cdot)$ and cancelling $s^\star>0$, we obtain
\begin{align}\label{eq:r-fixedpoint-impl}
    r=s\Bigl(\Bigl(\mathrm{Id}-\tfrac{1}{r}\gM\Bigr)^{-1}\widebar{\mQ}\Bigr).
\end{align}
Using the Neumann series, we have
\begin{align}
    s\Bigl(\Bigl(\mathrm{Id}-\tfrac{1}{r}\gM\Bigr)^{-1}\widebar{\mQ}\Bigr)
    &=
    \sum_{k=0}^{\infty} r^{-k}\, s(\gM^k(\widebar{\mQ}))
    \le
    \sum_{k=0}^{\infty} s(\gM^k(\widebar{\mQ}))
    =
    s\Bigl((\mathrm{Id}-\gM)^{-1}\widebar{\mQ}\Bigr).
\end{align}
As in (b), $(\mathrm{Id}-\gM)^{-1}\widebar{\mQ}=\mY=(\mY_1,\ldots,\mY_d)$, hence
\begin{align}
    s\Bigl((\mathrm{Id}-\gM)^{-1}\widebar{\mQ}\Bigr)=s(\mY)=S(\eta,\beta).
\end{align}
Combining with \eqref{eq:r-fixedpoint-impl} yields
\begin{align}
    r \le S(\eta,\beta) < 1,
\end{align}
contradicting $r\ge 1$.
Therefore $\rho(\gT)<1$.
This completes the proof of (c).
\end{proof}

\begin{lemma}\label{lem:local}
For each $i=1,\ldots,d$, define $\gM_i:\sS^2 \to \sS^2$ by
    \begin{align*}
        \gM_i (\mX) := \mA_i \mX \mA_i^\top + \eta^2 \lambda_i^2(\mX)_{11} \mQ\;,
    \end{align*}
where $\mA_i$ and $\mQ$ are defined in \cref{thm:cov-operator}, and $\eta \lambda_i>0$.
It holds that
\begin{enumerate}
    \item [(a)] $\rho(\gM_i) < 1$ if and only if $\eta\lambda_i < 1-\beta^2$
    \item [(b)] If $\eta\lambda_i < 1-\beta^2$, then $\mathrm{Id}-\gM_i$ is invertible and $(\mathrm{Id}-\gM_i)^{-1}$ is $\sS^2_+$-preserving.
    \item [(c)] If $\eta\lambda_i < 1-\beta^2$, define $\mY_i:=(\mathrm{Id}-\gM_i)^{-1} \mQ \in \sS^2_+$ and $\gamma_i:=(\mY_i)_{11}$. Then, $$\gamma_i = \frac{1+\beta}{2\eta\lambda_i (1-\beta^2-\eta\lambda_i)}.$$
\end{enumerate}
\end{lemma}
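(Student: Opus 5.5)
We would prove all three parts by working on $\sS^2$ directly. We identify $\mX=\begin{bsmallmatrix} a & b\\ b& c\end{bsmallmatrix}$ with $(a,b,c)\in\R^3$, so $\gM_i$ becomes an explicit $3\times 3$ matrix depending only on $h:=\eta\lambda_i>0$ and $\beta$. Three structural facts organize the argument.

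First, $\gM_i$ is $\sS^2_+$-preserving. In fact $\gM_i(\mX)=\E[\mA_\xi\mX\mA_\xi^\top]$ for $\mA_\xi=\begin{bsmallmatrix}1-h(1+\xi)&-\beta\\ h(1+\xi)&\beta\end{bsmallmatrix}$ with $\E\xi=0$ and $\E\xi^2=1$, since the noise acts along $(-1,1)^\top h x$. By the Krein--Rutman Theorem (Lemma~\ref{lemma:Krein--Rutman}), $\rho(\gM_i)$ is therefore an eigenvalue with an eigenvector in $\sS^2_+$.

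Second, we use the standard cone criterion for a cone-preserving $\gM$ on a cone with nonempty interior. We have $\rho(\gM)<1$ if and only if there is $\mY\succ 0$ with $\gM(\mY)\prec\mY$. Moreover, $\rho(\gM)\ge 1$ whenever there is $\mX\in\sS^2_+\setminus\{\vzero\}$ with $\gM(\mX)\succeq\mX$; iterating gives $\gM^t\mX\succeq \mX$, and Gelfand's formula then yields $\rho(\gM)\ge 1$.

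Third, we solve the linear system $(\mathrm{Id}-\gM_i)\mY=\mathbf{R}$ in closed form for a general right-hand side $\mathbf{R}$. This is three linear equations in $(a,b,c)$, and we would compute $\det(\mathrm{Id}-\gM_i)$ explicitly. We expect it to factor as a positive multiple of $h(1-\beta)(1-\beta^2-h)$, up to sign conventions. This factorization is the key computational step.

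For (c), we would take $\mathbf{R}=\mQ$ and solve by Cramer's rule. Reading off the $(1,1)$ entry should give $\gamma_i=\frac{1+\beta}{2h(1-\beta^2-h)}$. We would also verify that the full solution $\mY_i$ is positive definite when $h<1-\beta^2$, by checking that $\det\mY_i>0$ and $(\mY_i)_{11}>0$ directly from the formulas.

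For (a), direction ($\Leftarrow$): assume $h<1-\beta^2$. We solve $(\mathrm{Id}-\gM_i)\mY^\varepsilon=\mQ+\varepsilon\mI$ for small $\varepsilon>0$. By continuity in $\varepsilon$ and the positive-definiteness of $\mY_i=\mY^0$, we get $\mY^\varepsilon\succ 0$. At the same time $\gM_i(\mY^\varepsilon)=\mY^\varepsilon-\mQ-\varepsilon\mI\prec\mY^\varepsilon$, so the cone criterion gives $\rho(\gM_i)<1$.

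For (a), direction ($\Rightarrow$): suppose $h\ge 1-\beta^2$ and, for contradiction, $\rho(\gM_i)<1$. Then the Neumann series shows that $(\mathrm{Id}-\gM_i)^{-1}=\sum_k\gM_i^k$ is cone-preserving. Hence $(\mathrm{Id}-\gM_i)^{-1}\mQ\succeq 0$, and in particular its $(1,1)$ entry is nonnegative and finite. At $h=1-\beta^2$ this is impossible, since $\det(\mathrm{Id}-\gM_i)=0$ and the operator is not invertible. For $h>1-\beta^2$, the closed-form $(1,1)$ entry $\frac{1+\beta}{2h(1-\beta^2-h)}$ is negative, which is again a contradiction.

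For (b), invertibility and $\sS^2_+$-preservation of $(\mathrm{Id}-\gM_i)^{-1}$ follow from the same Neumann series once (a) gives $\rho(\gM_i)<1$.

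We expect the main obstacle to be the explicit algebra: obtaining the clean factorization of $\det(\mathrm{Id}-\gM_i)$ and verifying $\mY_i\succ 0$. We would organize this computation in the basis $\{\mQ,\ \mathbf{e}_1\mathbf{e}_1^\top,\ \text{sym}(\mathbf{e}_1\mathbf{1}^\top)\}$. This basis is adapted to the structure of $\mA_i$, whose second row equals the first row's complement: both rows sum to $(1,0)$. That observation should reduce the system to an essentially triangular one.
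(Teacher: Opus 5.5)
Your proposal is correct, but for part (a) it takes a genuinely different route from the paper.

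The paper proves (a) by pure polynomial algebra. It writes $\gM_i$ as the $3\times 3$ matrix $\mK_i$, expands the characteristic cubic, and verifies the four strict Jury conditions. This needs a negative-discriminant argument for one quadratic and an endpoint bound for a concave quadratic on $(0,1-\beta^2)$. Part (b) then follows from $\det(\mI-\mK_i)=2\eta\lambda_i(1-\beta^2-\eta\lambda_i)$ and the Neumann series, and (c) from solving the linear system.

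You instead do the linear solve of (c) first and deduce (a) from positivity, and this works. With $h=\eta\lambda_i$ and any $h\notin\{0,1-\beta^2\}$, the solution is $\mY_i=\frac{1}{1-\beta^2-h}\begin{bsmallmatrix}\frac{1+\beta}{2h} & -\frac12\\ -\frac12 & 1\end{bsmallmatrix}$, and $\det\mY_i=\frac{2(1+\beta)-h}{4h(1-\beta^2-h)^2}>0$. So for $h<1-\beta^2$ your perturbed certificate exists: $\mY^\varepsilon\succ0$ with $\mY^\varepsilon-\gM_i(\mY^\varepsilon)=\mQ+\varepsilon\mI\succ0$, which forces $\rho(\gM_i)<1$. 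For $h>1-\beta^2$, the negative $(1,1)$ entry contradicts cone-preservation of $\sum_k\gM_i^k$. The determinant is actually $2h(1-\beta^2-h)$, with no $(1-\beta)$ factor, but you only use its sign and zero set.

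What your route buys: it replaces the rather opaque Jury inequalities with one linear solve and a $2\times2$ definiteness check. It also makes (a) and (c) two readings of the same computation, in the cone-theoretic spirit of the rest of the appendix. The $\varepsilon\mI$ perturbation is exactly what is needed, because $\mQ$ has rank one and so is not an interior point. The paper's route, in exchange, is self-contained algebra and does not need the Lyapunov-type cone criterion.

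Your Krein--Rutman remark and your $\rho\ge1$ criterion are never actually used. The ($\Rightarrow$) direction also has a shorter proof: when $h\ge1-\beta^2$, the monic characteristic cubic satisfies $p(1)=2h(1-\beta^2-h)\le0$, so it has a real root $\ge1$.
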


\begin{proof}[Proof of \cref{lem:local}]
    Let $\Phi: \sS^2 \to \R^3$ be the isomorphism defined by
    \begin{align*}
        \Phi\left(\begin{bmatrix}
            x_{11} & x_{12}\\
            x_{12} & x_{22}
        \end{bmatrix}\right) =
        \begin{bmatrix}
            x_{11} \\ x_{12} \\ x_{22}
        \end{bmatrix} \in \R^3.
    \end{align*}
    Then, for each $i=1,\ldots,d$, it holds that
    \begin{align*}
        \forall \mX\in \sS^2\quad \Phi(\gM_i(\mX)) = \mK_i \Phi (\mX), \qquad \mK_i =
        \begin{bmatrix}
            1 - 2\eta\lambda_i + 2\eta^2\lambda_i^2 & 2\beta(-1+\eta\lambda_i) & \beta^2 \\
            \eta\lambda_i - 2\eta^2 \lambda_i^2 & \beta(1-2\eta\lambda_i) & -\beta^2 \\
            2\eta^2 \lambda_i^2 & 2\beta\eta\lambda_i & \beta^2
        \end{bmatrix}.
    \end{align*}
    Since $\Phi$ is an isomorphism, $\rho(\gM_i) = \rho(\mK_i)$.
\paragraph{Proof of (a): Exact condition for $\rho(\mK_i) < 1$.}
Fix $i$ and write $x:=\eta\lambda_i> 0$ and $b:=\beta\in[0,1)$.
Under the identification $\Phi:\sS^2\to\R^3$ in the proof, $\rho(\gM_i)=\rho(\mK_i)$.
Let $p(r):=\det(r\mI-\mK_i)=r^3+c_1 r^2+c_2 r+c_3$.
A direct expansion gives
\begin{align}
c_1&=-b^2+2bx-b-2x^2+2x-1, \label{eq:c1_strict_re}\\
c_2&=b\bigl(b^2+b+1-2(b+1)x\bigr), \label{eq:c2_strict_re}\\
c_3&=-b^3. \label{eq:c3_strict_re}
\end{align}

We use the strict Jury criterion for a monic cubic with real coefficients: all roots of
$p(r)=r^3+c_1 r^2+c_2 r+c_3$ lie in $\{ |r|<1\}$ if and only if
\begin{align}
|c_3|<1,\qquad p(1)>0,\qquad 1-c_1+c_2-c_3>0,\qquad 1-c_3^2>|c_2-c_1c_3|\;.
\label{eq:jury_cubic_strict_re}
\end{align}

We verify \eqref{eq:jury_cubic_strict_re} and show that they hold if and only if $0<x<1-b^2$.

First, from \eqref{eq:c3_strict_re}, $|c_3|=b^3<1$, so the first condition in \eqref{eq:jury_cubic_strict_re} holds.

A direct substitution of \eqref{eq:c1_strict_re},\eqref{eq:c2_strict_re}, and \eqref{eq:c3_strict_re} gives
\begin{align}
p(1)=1+c_1+c_2+c_3=2x(1-b^2-x).
\label{eq:p1_factor_re}
\end{align}
Hence, 
\begin{align}
p(1)>0 \quad\Longleftrightarrow\quad 0<x<1-b^2.
\label{eq:p1_equiv_re}
\end{align}

Next, substitution of \eqref{eq:c1_strict_re},\eqref{eq:c2_strict_re}, and \eqref{eq:c3_strict_re} gives
\begin{align}
1-c_1+c_2-c_3
=2\phi(x),
\qquad
\phi(x):=x^2-(b+1)^2x+(b^3+b^2+b+1).
\label{eq:phi_def_re}
\end{align}
The discriminant of $\phi$ is
\begin{align}
\Delta_\phi
&=(b+1)^4-4(b^3+b^2+b+1)
=b^4+2b^2-3
=(b^2-1)(b^2+3)<0
\qquad (b\in[0,1)),
\label{eq:phi_disc_re}
\end{align}
and $\phi$ has leading coefficient $1>0$, so $\phi(x)>0$ for all $x\in\R$.
Therefore $1-c_1+c_2-c_3>0$ holds automatically for all $x\ge 0$ and $b\in[0,1)$.

It remains to check the last condition in \eqref{eq:jury_cubic_strict_re}: $1-c_3^2>|c_2-c_1c_3|$.
Again, by substituting \eqref{eq:c1_strict_re},\eqref{eq:c2_strict_re}, and \eqref{eq:c3_strict_re} gives
\begin{align*}
    (1-c_3^2) - (c_2 - c_1c_3) = 2b^3 x^2 + 2b(1+b)^2(1-b)x + (1+b)(1-b)^3 (1+b+b^2) > 0
\end{align*}
for any $x\ge0$ and $b\in [0,1)$. Moreover, we have
\begin{align*}
    (1-c_3^2) + (c_2 - c_1c_3) = -2b^3 x^2 - 2b(1+b)^2(1-b) x + (1+b)(1-b)(1+b^2)(1+b+b^2):=\psi(x)\;.
\end{align*}
If $b=0$, then $\psi(x)=1>0$. Otherwise, $b\in(0,1)$, and $\psi$ is a concave quadratic and has a global maximum at $x<0$. Hence, if $0<x<1-b^2$, then $\psi(x) > \min\{\psi(0), \psi(1-b^2)\}$. Note that
\begin{align*}
    \psi(0) &= (1+b)(1-b)(1+b^2)(1+b+b^2) > 0\;, \\
    \psi(1-b^2) &= (1+b)(1-b)(1+b+b^2)\bigl[ (1-b)^2+2b^3 \bigr] > 0\;, 
\end{align*}
for all $b\in(0,1)$. Therefore we conclude that all the strict Jury conditions \eqref{eq:jury_cubic_strict_re} hold if and only if $0<x<1-b^2$. Putting $x=\eta\lambda_i$ and $b=\beta$, we conclude
\begin{align}
\rho(\gM_i)=\rho(\mK_i)<1
\quad\Longleftrightarrow\quad
0<\eta\lambda_i<1-\beta^2,
\end{align}
which proves part (a).

\paragraph{Proof of (b).}
Assume that $\eta \lambda_i < 1-\beta^2$. Note that
\begin{align*}
    \det (\mI-\mK_i) = 2\eta\lambda_i (1-\beta^2-\eta\lambda_i) \neq 0,
\end{align*}
so $\mI-\mK_i$ is invertible, and hence $\mathrm{Id}-\gM_i$ is also invertible. According to (a), we have $\rho(\gM_i) < 1$, and
\begin{align*}
    (\mathrm{Id}-\gM_i)^{-1} = \sum_{k=0}^{\infty} \gM_i^k
\end{align*}
converges in operator norm (finite-dimensional Neumann series). Since $\gM_i(\sS^2_+) \subseteq \sS^2_+$, we have $\gM_i^k(\sS^2_+) \subseteq \sS^2_+$ for any $k\geq 0$, and thus
\begin{align*}
    (\mathrm{Id}-\gM_i)^{-1} (\sS^2_+) \subseteq \sS^2_+.
\end{align*}
This concludes the proof of (b).

\paragraph{Proof of (c).}
Assume that $\eta\lambda_i < 1-\beta^2$ and define $\mY_i:=(\mathrm{Id} - \gM_i)^{-1} \mQ \in \sS^2_+$. In $\R^3$, we have
\begin{align*}
    \begin{bmatrix}
        (\mY_i)_{11} \\ (\mY_i)_{12} \\ (\mY_i)_{22}
    \end{bmatrix} = \Phi(\mY_i) = (\mI-\mK_i)^{-1} \Phi(\mQ) = \begin{bmatrix}
            2\eta\lambda_i - 2\eta^2\lambda_i^2 & 2\beta(1-\eta\lambda_i) & -\beta^2 \\
            -\eta\lambda_i + 2\eta^2 \lambda_i^2 & 1-\beta(1-2\eta\lambda_i) & \beta^2 \\
            -2\eta^2 \lambda_i^2 & -2\beta\eta\lambda_i & 1-\beta^2
        \end{bmatrix}^{-1}\begin{bmatrix}
        1 \\ -1 \\ 1
    \end{bmatrix}.
\end{align*}
Solving this $3\times 3$ system and extracting the first coordinate gives 
\begin{align*}
    \gamma_i := (\mY_i)_{11} =  \frac{1+\beta}{2\eta\lambda_i (1-\beta^2-\eta\lambda_i)}.
\end{align*}
This concludes the proof of (c).
\end{proof}

\begin{lemma}\label{lem:dual_infeasibility}
    Let $\gM_i$ denote a linear map defined as in Lemma~\ref{lem:local}. For any $c>0$, if $\rho(c\gM_i)\geq 1$, then for every $\alpha > 0$, there does not exist $\mW\succeq 0$ such that
    \begin{align*}
        (\mathrm{Id}-c\gM_i) (\mW) \succeq \alpha \mQ\;,
    \end{align*}
    where $\mQ$ is defined in \cref{thm:cov-operator}.
\end{lemma}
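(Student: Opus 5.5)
The plan is a duality argument based on the Krein--Rutman theorem applied to the \emph{adjoint} of $c\gM_i$. We use that $\sS^2_+$ is self-dual under the trace inner product $\langle \mX,\mZ\rangle=\Tr(\mX\mZ)$.

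\textbf{Step 1: the adjoint and its Perron eigenvector.} The adjoint of $\gM_i$ with respect to the trace inner product is
\[
\gM_i^*(\mZ)=\mA_i^\top \mZ\mA_i+\eta^2\lambda_i^2\langle \mQ,\mZ\rangle\,\ve_1\ve_1^\top .
\]
This map is $\sS^2_+$-preserving, because $\langle\mQ,\mZ\rangle\ge 0$ for $\mZ\succeq0$. Since $\rho(c\gM_i^*)=\rho(c\gM_i)=:r\ge 1$, Lemma~\ref{lemma:Krein--Rutman} applied to $c\gM_i^*$ on the closed, pointed, solid cone $\sS^2_+$ gives some $\mZ\in\sS^2_+\setminus\{\vzero\}$ with $c\gM_i^*(\mZ)=r\mZ$.

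\textbf{Step 2: pairing with a putative solution.} Suppose $\mW\succeq0$ satisfies $(\mathrm{Id}-c\gM_i)(\mW)\succeq\alpha\mQ$. Pairing both sides with $\mZ\succeq0$ preserves the inequality. The left side becomes
\[
\langle \mZ,(\mathrm{Id}-c\gM_i)\mW\rangle=\langle(\mathrm{Id}-c\gM_i^*)\mZ,\mW\rangle=(1-r)\langle\mZ,\mW\rangle\le 0,
\]
because $r\ge1$ and $\langle\mZ,\mW\rangle\ge0$. Hence $0\ge\alpha\langle\mZ,\mQ\rangle$. This is a contradiction as soon as $\langle\mZ,\mQ\rangle>0$.

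\textbf{Step 3 (main obstacle): showing $\langle \mZ,\mQ\rangle>0$.} Write $\mQ=\vv\vv^\top$ with $\vv=(1,-1)^\top$, so $\langle\mZ,\mQ\rangle=\vv^\top\mZ\vv\ge0$. Suppose instead that $\vv^\top\mZ\vv=0$. Since $\mZ\succeq0$, this forces $\mZ\vv=0$, so $\mZ=z\,\vw\vw^\top$ with $\vw=(1,1)^\top$ and $z>0$.

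The eigen-equation then reduces to $c\,\mA_i^\top\vw\vw^\top\mA_i=r\,\vw\vw^\top$, because the coupling term vanishes when $\langle\mQ,\mZ\rangle=0$. Comparing ranges, $\mA_i^\top\vw$ must be a multiple of $\vw$. A direct computation gives
\[
\mA_i^\top\vw=\begin{bmatrix}1-\eta\lambda_i+\eta\lambda_i\\-\beta+\beta\end{bmatrix}=\begin{bmatrix}1\\0\end{bmatrix}.
\]
This vector is nonzero and not parallel to $\vw$. Moreover, the left side of the eigen-equation then equals $c\,\ve_1\ve_1^\top\neq r\,\vw\vw^\top$, since $r\ge1>0$. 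This contradiction shows $\langle\mZ,\mQ\rangle>0$.

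Combining Steps 2 and 3 gives $0\ge\alpha\langle\mZ,\mQ\rangle>0$, which is impossible. So no $\mW\succeq0$ with $(\mathrm{Id}-c\gM_i)(\mW)\succeq\alpha\mQ$ exists, for any $\alpha>0$.
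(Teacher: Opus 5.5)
Your proof is correct and is essentially the paper's argument. Krein--Rutman applied to the adjoint $c\gM_i^*$ on the self-dual cone $\sS^2_+$ gives an eigenvector $\mY\succeq 0$, $\mY\neq 0$; the identity $\mA_i^\top(1,1)^\top=(1,0)^\top$ rules out $\langle \mY,\mQ\rangle=0$; and pairing $\mY$ with the putative inequality yields the contradiction $0\ge(1-\rho(c\gM_i))\langle \mY,\mW\rangle\ge\alpha\langle\mY,\mQ\rangle>0$.
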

\begin{proof}[Proof of \cref{lem:dual_infeasibility}]
    Define $\gN_i:=c\gM_i$. Since $\gM_i$ is $\sS^2_+$-preserving, so is $\gN_i$. Hence its adjoint $\gN_i^*$ preserves the dual cone, which is again $\sS^2_+$.
    Since $\sS^2_+ \subset \sS^2$ is a closed, convex, pointed cone with nonempty interior, the Krein--Rutman Theorem (see also Lemma~\ref{lemma:Krein--Rutman}) implies that there exists $\mY \succeq 0$ with $\mY \neq \vzero$ such that
    \begin{align*}
        \gN_i^* (\mY) = \rho(\gN_i) \mY.
    \end{align*}
    Define $\gL_i := \mathrm{Id} - \gN_i$. Then, its adjoint $\gL_i^* = \mathrm{Id} - \gN_i^*$ satisfies
    \begin{align*}
        \gL_i^* (\mY) = (1 - \rho(\gN_i)) \mY \preceq 0,
    \end{align*}
    since $\rho(\gN_i) \geq  1$. Since $\mY \succeq 0$ and $\mQ \succeq 0$, we have $\langle \mY, \mQ \rangle \geq 0$. 
    
    Now, we show that $\langle \mY, \mQ \rangle > 0$. Assume the contrary that $\langle \mY, \mQ \rangle = 0$. Then,
    \begin{align*}
        \langle \mY, \mQ \rangle = \Tr (\mY \mQ) = \Tr (\mY^{1/2}\mQ\mY^{1/2}) = 0.
    \end{align*}
    Since $\mY^{1/2}\mQ\mY^{1/2} \succeq 0$ and $\Tr (\mY^{1/2}\mQ\mY^{1/2}) = 0$, we have $\mY^{1/2}\mQ\mY^{1/2} = \vzero$ and thus $\mQ\mY = \vzero$. Note that $\mQ = \begin{bmatrix} 1 & -1\end{bmatrix}^\top \begin{bmatrix} 1 & -1\end{bmatrix}$ has null space $\mathrm{Null}(\mQ) = \mathrm{span}(\begin{bmatrix} 1 & 1\end{bmatrix}^\top)$. Since $\mQ\mY = \vzero$ and $\mY\neq \vzero$, we have
    \begin{align*}
        \mathrm{Range}(\mY) \subseteq \mathrm{Null}(\mQ) = \mathrm{span}\left(\begin{bmatrix} 1 \\ 1\end{bmatrix} \right),\quad \text{and hence}\quad \mY = a \begin{bmatrix} 1 \\ 1\end{bmatrix}\begin{bmatrix} 1 & 1\end{bmatrix} = a \begin{bmatrix} 1 & 1 \\ 1 & 1 \end{bmatrix}\text{ for some } a > 0.
    \end{align*}
    Then, it holds that
    \begin{align*}
        \gN_i^* (\mY) & = c\mA_i^\top \mY \mA_i + c\eta^2 \lambda_i^2 \langle \mY, \mQ\rangle \begin{bmatrix}
            1 & 0 \\ 0 & 0
        \end{bmatrix} \\
        & = c\mA_i^\top \mY \mA_i  \\
        & = ca \left(\mA_i^\top \begin{bmatrix} 1 \\ 1\end{bmatrix}\right) \left(\mA_i^\top \begin{bmatrix} 1 \\ 1\end{bmatrix}\right)^\top \\
        & = ca \begin{bmatrix}
            1 & 0 \\ 0 & 0
        \end{bmatrix}
    \end{align*}
    However,
    \begin{align*}
        ca \begin{bmatrix}
            1 & 0 \\ 0 & 0
        \end{bmatrix} = \gN_i^* (\mY) = \rho(\gN_i) \mY = \rho(\gN_i) a \begin{bmatrix} 1 & 1 \\ 1 & 1 \end{bmatrix}
    \end{align*}
    gives a contradiction. Hence, $\langle \mY, \mQ \rangle > 0$.

    Finally, assume for contradiction that there exists $\mW\succeq 0$ such that 
    \begin{align*}
        (\mathrm{Id}-c\gM_i) (\mW) \succeq \alpha \mQ.
    \end{align*}
    Then,
    \begin{align*}
        \langle \mY, (\mathrm{Id}-c\gM_i) (\mW) \rangle \geq \alpha \langle \mY, \mQ \rangle > 0.
    \end{align*}
    However, since $\mW\succeq 0$ and $\gL_i^* (\mY) \preceq 0$, it holds that
    \begin{align*}
        \langle \mY, (\mathrm{Id}-c\gM_i) (\mW) \rangle = \langle (\mathrm{Id}-c\gM_i^*) (\mY), \mW \rangle = \langle \gL_i^* (\mY), \mW \rangle \leq 0,
    \end{align*}
    a contradiction. This concludes the proof of Lemma~\ref{lem:dual_infeasibility}.
\end{proof}

\subsection{Proof of \cref{thm:zo-gdm}}\label{appendix:proof_zo-gdm}
We prove \cref{thm:zo-gdm} by characterizing mean-square linear stability of the
linearized ZO-GDM dynamics.
Specifically, if
\begin{align}\label{eq:zogdm-spectral-condition}
\eta\lambda_{\max}(\mH)<1-\beta^2
\quad\text{and}\quad
\sum_{i=1}^d 
\frac{\eta\lambda_i}{2(1-\beta)\Bigl(1-\frac{\eta\lambda_i}{1-\beta^2}\Bigr)}
< 1,
\end{align}
then the linearized dynamics is mean-square stable.
Conversely, mean-square linear stability implies
\begin{align}\label{eq:zogdm-spectral-condition2}
\eta\lambda_{\max}(\mH)<1-\beta^2
\quad\text{and}\quad
\sum_{i=1}^d 
\frac{\eta\lambda_i}{2(1-\beta)\Bigl(1-\frac{\eta\lambda_i}{1-\beta^2}\Bigr)}
\le 1.
\end{align}

By definition, the mean-square critical step size $\eta^\star_{\mathrm{ms}}$ is therefore the
unique $\eta>0$ satisfying
\begin{align}\label{eq:zogdm-critical-eta}
\eta \lambda_{\max}(\mH) < 1-\beta^2
\quad\text{and}\quad
\sum_{i=1}^d 
\frac{\eta\lambda_i}{2(1-\beta)\Bigl(1-\frac{\eta\lambda_i}{1-\beta^2}\Bigr)}
= 1.
\end{align}
Moreover, $\eta^\star_{\mathrm{ms}}$ obeys the bounds
\begin{align}\label{eq:zogdm-critical-eta-bounds}
\frac{2(1-\beta)}{\Tr(\mH)+\frac{2\lambda_{\max}(\mH)}{1+\beta}}
\le
\eta^\star_{\mathrm{ms}}
\le
\frac{2(1-\beta)}{\Tr(\mH)}.
\end{align}

\begin{proof}[Proof of \cref{thm:zo-gdm}]
Without loss of generality, we assume $\vx^\star=\vzero$ and diagonalize the Hessian as $\mH=\bm\Lambda=\mathrm{diag}(\lambda_1,\ldots,\lambda_d)$ with $\lambda_1\ge\cdots\ge\lambda_d\ge 0$.

By Lemma~\ref{lem:zgdm-cov-operator}, the covariance blocks evolve according to
$\gT(\mW_{1,t},\ldots,\mW_{d,t})=(\mW_{1,t+1},\ldots,\mW_{d,t+1})$, and
\[
\E\|\vx_t\|^2
=
\sum_{i=1}^d (\mW_{i,t})_{11}.
\]
Thus mean-square linear stability is equivalent to uniform boundedness of
$\sum_{i=1}^d (\mW_{i,t})_{11}$ over time, for every initialization.

\paragraph{Reduction to strictly positive eigenvalues.}
Let $P:=\{i:\lambda_i>0\}$ and $Z:=\{i:\lambda_i=0\}$.
The coordinates indexed by $P$ form an autonomous subsystem.
Define the restricted operator $\gT_P:(\mathbb{S}^2)^{|P|}\to(\mathbb{S}^2)^{|P|}$ by
\begin{align}\label{eq:TP_def}
(\gT_P(\mW))_i
=
\mA_i \mW_i \mA_i^\top
+
\eta^2\Bigl(
\lambda_i^2(\mW_i)_{11}
+
\sum_{j\in P}\lambda_j^2(\mW_j)_{11}
\Bigr)\mQ,
\qquad i\in P.
\end{align}

Since all $\lambda_i>0$ for $i\in P$, Theorem~\ref{thm:cov-operator} applies directly to
$\gT_P$.
In particular,
\begin{align}
\rho(\gT_P)<1
\;\Longleftrightarrow\;
\eta\lambda_{\max}(\mH)<1-\beta^2
\ \text{and}\
\sum_{i\in P}
\frac{\eta\lambda_i}{2(1-\beta)\Bigl(1-\frac{\eta\lambda_i}{1-\beta^2}\Bigr)}
<1,
\end{align}
with the corresponding statement for $\rho(\gT_P)\le 1$.

It therefore suffices to relate $\rho(\gT_P)$ to uniform boundedness of $\sum_{i=1}^d(\mW_{i,t})_{11}$ over time, for every initialization.
In particular, it suffices to prove that: 
\begin{align*}
    \rho(\gT_P)<1 \implies \sup_{t\ge 0} \sum_{i=1}^d (\mW_{i,t})_{11} < \infty \; \text{ for every initialization,}\;
\end{align*}
and 
\begin{align*}
    \sup_{t\ge 0} \sum_{i=1}^d (\mW_{i,t})_{11} < \infty \;\text{ for every initialization}\implies\rho(\gT_P)\le1\;.
\end{align*}

\textbf{(i)} Proof of $(\sup_{t\ge 0} \sum_{i=1}^d (\mW_{i,t})_{11} < \infty \text{ for every initialization} \Rightarrow  \rho(\gT_P)\le 1 )$.
We prove the contrapositive: if $\rho(\gT_P)>1$, then there exists an initialization for which $\sum_{i=1}^d(\mW_{i,t})_{11}$ is unbounded.
Assume $\rho(\gT_P)>1$ and set $r:=\rho(\gT_P)$.
Recall that by \cref{thm:cov-operator}(a), there exists $\mW^\star \in \gK_P\setminus\{\vzero\}$ such that
\begin{align}
   \gT_P(\mW^\star) =  r \mW^\star \quad\text{and}\quad \sum_{i\in P} (\mW_{i}^\star)_{11} > 0\;.
\label{eq:positive_mass}
\end{align}
Extend $\mW^\star$ to $\widebar{\mW}^\star\in\gK$ by setting $(\widebar{\mW}^\star)_i=\mW_i^\star$ for $i\in P$ and $(\widebar{\mW}^\star)_i=\vzero$ for $i\in Z$.
Since the $P$-subsystem is autonomous, by \eqref{eq:positive_mass} it holds that
\begin{align}
(\gT^t(\widebar{\mW}^\star))_i = r^t \,\mW^\star_i
\end{align}
for all $t\ge0$ and coordinate $i\in P$. Consequently, we have
\begin{align}
\gT^t(\widebar{\mW}^\star) \succeq r^t \,\widebar{\mW}^\star
\label{eq:r^t}
\end{align}
for all $t\ge 0$.

Initialize $\vm_0=\vzero$ and $\vx_0=(x_{0,0},x_{1,0},\ldots,x_{d,0})\in\R^d$ such that
\begin{align}
\mW_{i,0}
=
\begin{bmatrix}
x_{i,0}^2 & 0\\
0 & 0
\end{bmatrix},
\qquad
x_{i,0}^2:=(\widebar{\mW}_i^\star)_{11}.
\end{align}
Denote $s_0:=\sum_{i=1}^d\eta^2\lambda_i^2(\mW_{i,0})_{11} = \sum_{i\in P} \eta^2\lambda_i^2 (\mW_i^\star)_{11}.$ Note that $s_0>0$ by \eqref{eq:positive_mass}.
Moreover, it holds that
\begin{align}
\mW_{i,1}
=
\mA_i\mW_{i,0}\mA_i^\top + \eta^2 \lambda_i^2 (\mW_{i,0})_{11} \mQ+ s_0\mQ
\succeq
s_0\mQ,
\qquad
i=1,\ldots,d.
\end{align}
Equivalently,
\begin{align}
\mW_1\succeq s_0\,\widebar{\mQ},
\qquad
\widebar{\mQ}:=(\mQ,\ldots,\mQ)\in\gK.
\label{eq:W1_lower}
\end{align}

Since $\widebar{\mW}^\star\in\gK$ and each block $\mQ$ is positive definite on its support, there exists $\alpha>0$ such that
\begin{align}
\alpha\,\widebar{\mW}^\star \preceq \widebar{\mQ}.
\label{eq:alpha_dom}
\end{align}
Combining \eqref{eq:W1_lower} and \eqref{eq:alpha_dom}, we have
\begin{align}
\mW_1\succeq s_0\alpha\,\widebar{\mW}^\star.
\end{align}
Applying $\gT^{t-1}$ and using positivity and linearity of $\gT$ with \eqref{eq:r^t}, we obtain for all $t\ge 1$,
\begin{align}
\mW_t
=
\gT^{t-1}(\mW_1)
\succeq
s_0\alpha\,\gT^{t-1}(\widebar{\mW}^\star)
\succeq
s_0\alpha\,r^{t-1}\widebar{\mW}^\star.
\end{align}

Define $\ell(\mW):=\sum_{i=1}^d(\mW_i)_{11}$, which is monotone on $\gK$.
Applying $\ell$ gives
\begin{align}
\sum_{i=1}^d(\mW_{i,t})_{11}
\ge
s_0\alpha\,r^{t-1}\sum_{i=1}^d(\widebar{\mW}_i^\star)_{11}
\xrightarrow[t\to\infty]{}\infty,
\end{align}
since $r>1$.
Thus, when $\rho(\gT_P)>1$, boundedness of $\sum_{i=1}^d(\mW_{i,t})_{11}$ fails for the constructed initialization.
This proves the contrapositive, and therefore $\rho(\gT_P)\le 1$ whenever $\sum_{i=1}^d(\mW_{i,t})_{11}$ is bounded for all initializations.

\textbf{(ii)} Proof of $(\rho(\gT_P)< 1 \ \Rightarrow \sup_{t\ge 0} \sum_{i=1}^d (\mW_{i,t})_{11} < \infty \textrm{ for every initialization})$.
Assume $\rho(\gT_P)<1$.
Fix any initial condition $(\mW_{i,0})_{i\in[d]}\in\gK$ and let $(\mW_{i,t})_{i\in[d]}:=\gT^t(\mW_{1,0},\ldots,\mW_{d,0})$.
Define the coupling scalar
\begin{align}
s_t:=\sum_{j=1}^d \lambda_j^2(\mW_{j,t})_{11}=\sum_{j\in P}\lambda_j^2(\mW_{j,t})_{11}.
\end{align}
Then the $P$-coordinates evolve autonomously according to \eqref{eq:TP_def} with initial condition $(\mW_{i,0})_{i\in P}$, hence
\begin{align}
(\mW_{i,t})_{i\in P}=\gT_P^t\bigl((\mW_{i,0})_{i\in P}\bigr).
\end{align}
Since $\rho(\gT_P)<1$, there exist $C<\infty$ and $\rho\in(0,1)$ such that
\begin{align}\label{eq:P_geometric}
\sum_{i\in P}\| \mW_{i,t}\|
\le
C\rho^t \sum_{i\in P}\|\mW_{i,0}\|,
\qquad \forall\,t\ge 0,
\end{align}
and in particular $s_t\le \lambda_{\max}^2\sum_{i\in P}\| \mW_{i,t}\|$ is summable:
\begin{align}\label{eq:s_summable_short}
\sum_{t=0}^\infty s_t<\infty.
\end{align}
For $i\in Z$, we have $\lambda_i=0$ and $\mA_i=\mA_0:=\begin{bmatrix}1&-\beta\\0&\beta\end{bmatrix}$, so
\begin{align}
\mW_{i,t+1}=\mA_0\mW_{i,t}\mA_0^\top+\eta^2 s_t \mQ.
\end{align}
Unrolling gives
\begin{align}\label{eq:Z_unroll_short}
\mW_{i,t}
=
\mA_0^t \mW_{i,0} (\mA_0^\top)^t
+
\eta^2\sum_{k=0}^{t-1}\mA_0^{t-1-k} (s_k\mQ)(\mA_0^\top)^{t-1-k}.
\end{align}
Since $\beta\in[0,1)$, $\sup_{t\ge 0}\|\mA_0^t\|<\infty$ and $\sup_{r\ge 0}\|\mA_0^{r}\mQ(\mA_0^\top)^{r}\|<\infty$.
Combining these bounds with \eqref{eq:s_summable_short} and \eqref{eq:Z_unroll_short} yields $\sup_{t\ge 0}\|\mW_{i,t}\|<\infty$ for each $i\in Z$.
Together with \eqref{eq:P_geometric} and finiteness of $Z$, we obtain $\sup_{t\ge 0}\|\gT^t(\mW_{1,0},\ldots,\mW_{d,0})\|<\infty$ for every initial condition. Therefore, $\sup_{t\ge 0} \sum_{i=1}^d (\mW_{i,t})_{11}$ is bounded for all initializations.

This proves the explicit mean-square stability conditions \eqref{eq:zogdm-spectral-condition} and \eqref{eq:zogdm-spectral-condition2}.

\paragraph{Critical step size and bounds.}
Define
\begin{align}
S(\eta,\beta):=\sum_{i=1}^d \frac{\eta \lambda_i}{2(1-\beta)\left(1-\tfrac{\eta\lambda_i}{1-\beta^2}\right)}.
\end{align}
On $\eta\in\left(0,\frac{1-\beta^2}{\lambda_{\max}(\mH)}\right)$, each summand is strictly increasing in $\eta$, hence so is $S(\eta,\beta)$.
Therefore $\eta^\star_{\mathrm{ms}}$ is the unique $\eta>0$ satisfying \eqref{eq:zogdm-critical-eta}.

For the upper bound in \eqref{eq:zogdm-critical-eta-bounds}, using $1-\frac{\eta^\star_{\mathrm{ms}}\lambda_i}{1-\beta^2}\le 1$ gives
\begin{align}
1
=
\sum_{i=1}^d \frac{\eta^\star_{\mathrm{ms}}\lambda_i}{2(1-\beta)\Bigl(1-\frac{\eta^\star_{\mathrm{ms}}\lambda_i}{1-\beta^2}\Bigr)}
\ge
\sum_{i=1}^d \frac{\eta^\star_{\mathrm{ms}}\lambda_i}{2(1-\beta)}
=
\frac{\eta^\star_{\mathrm{ms}}\Tr(\mH)}{2(1-\beta)},
\end{align}
so $\eta^\star_{\mathrm{ms}}\le \frac{2(1-\beta)}{\Tr(\mH)}$.

For the lower bound, since $\lambda_i\le \lambda_{\max}(\mH)$,
\begin{align}
1-\frac{\eta^\star_{\mathrm{ms}}\lambda_i}{1-\beta^2}
\ge
1-\frac{\eta^\star_{\mathrm{ms}}\lambda_{\max}(\mH)}{1-\beta^2},
\end{align}
and hence
\begin{align}
1
=
\sum_{i=1}^d \frac{\eta^\star_{\mathrm{ms}}\lambda_i}{2(1-\beta)\Bigl(1-\frac{\eta^\star_{\mathrm{ms}}\lambda_i}{1-\beta^2}\Bigr)}
\le
\frac{\eta^\star_{\mathrm{ms}}\Tr(\mH)}{2(1-\beta)\Bigl(1-\frac{\eta^\star_{\mathrm{ms}}\lambda_{\max}(\mH)}{1-\beta^2}\Bigr)}.
\end{align}
Rearranging, we obtain
\begin{align}
\eta^\star_{\mathrm{ms}}
\ge
\frac{2(1-\beta)}{\Tr(\mH)+\frac{2\lambda_{\max}(\mH)}{1+\beta}},
\end{align}
which completes the proof.
\end{proof}

\subsection{Proof of \cref{thm:frozen-zo-adam}}\label{appendix:proof_frozen-zo-adam}
We prove \cref{thm:frozen-zo-adam} by characterizing mean-square linear stability of the
linearized Frozen ZO-Adam dynamics.
Specifically, if
We prove the following implication for mean-square stability:
if
\begin{align}\label{eq:frozen-zoadam-spectral-condition}
\eta\lambda_{\max}(\mP^{-1}\mH)<1+\beta_1
\quad\text{and}\quad
\sum_{i=1}^d 
\frac{\eta\tilde\lambda_i}{2\Bigl(1-\frac{\eta\tilde\lambda_i}{1+\beta_1}\Bigr)}
< 1,
\end{align}
then the linearized dynamics of frozen ZO-Adam is mean-square linearly stable.
Conversely, if the linearized dynamics is mean-square linearly stable, then
\begin{align}\label{eq:frozen-zoadam-spectral-condition2}
\eta\lambda_{\max}(\mP^{-1}\mH)<1+\beta_1
\quad\text{and}\quad
\sum_{i=1}^d 
\frac{\eta\tilde\lambda_i}{2\Bigl(1-\frac{\eta\tilde\lambda_i}{1+\beta_1}\Bigr)}
\le 1.
\end{align}
By definition of the mean-square critical step size $\eta^\star_{\mathrm{ms}}$, it follows that $\eta^\star_{\mathrm{ms}}$ is the unique $\eta>0$ satisfying
\begin{align}\label{eq:frozen-zoadam-critical-eta}
\eta \lambda_{\max}(\mP^{-1}\mH) < 1+\beta_1
\quad\text{and}\quad
\sum_{i=1}^d 
\frac{\eta\tilde\lambda_i}{2\Bigl(1-\frac{\eta\tilde\lambda_i}{1+\beta_1}\Bigr)}
= 1.
\end{align}
Moreover, $\eta^\star_{\mathrm{ms}}$ satisfies
\begin{align}\label{eq:frozen-zoadam-critical-eta-bounds}
\frac{2}{\Tr(\mP^{-1}\mH)+\frac{2\lambda_{\max}(\mP^{-1}\mH)}{1+\beta_1}}
\le
\eta^\star_{\mathrm{ms}}
\le
\frac{2}{\Tr(\mP^{-1}\mH)}.
\end{align}

\paragraph{Why the commutativity assumption $\mP\mH=\mH\mP$ is needed.}
The mean-square analysis for ZO methods closes on the coordinatewise $2\times2$ covariance blocks only after rotating to a basis where the quadratic is diagonal: the global coupling term is $\Tr(\mH \mX_t \mH)=\sum_i \lambda_i^2(\mX_t)_{ii}$, and this structure is what yields the rank-one coupling operator in \cref{thm:cov-operator}.
For frozen ZO-Adam, the update involves both $\mH$ and the fixed preconditioner $\mP^{-1}$; to reduce the dynamics to independent eigencoordinates with a single scalar coupling, we need a basis that diagonalizes $\mH$ and $\mP$ simultaneously.
The condition $\mP\mH=\mH\mP$ (with $\mP\succ0$ and $\mH\succeq 0$) guarantees such a simultaneous orthogonal diagonalization, so that the recursion depends only on the eigenvalues $\{\tilde\lambda_i\}$ of $\mP^{-1}\mH$ and matches the ZO-GDM covariance-operator form after a change of variables.
Without commutativity, $\mP^{-1/2}\mH\mP^{-1/2}$ need not be diagonalizable in the same basis as the random rank-one factor, and the diagonal-block closure used in \cref{thm:cov-operator} generally fails.

\begin{proof}[Proof of \cref{thm:frozen-zo-adam}]
We assume $\vx^\star=\vzero$ without loss of generality, so
$f(\vx)=\tfrac12\vx^\top \mH \vx$ and
$\widehat\nabla f(\vx_t)=\vu_t\vu_t^\top \mH \vx_t$ with $\vu_t\sim\gN(\vzero,\mI)$ i.i.d.
Frozen ZO-Adam is
\begin{align}
\vm_{t+1} &= \beta_1 \vm_t + (1-\beta_1)\,\vu_t\vu_t^\top \mH \vx_t, \label{eq:frozen_zoadam_update_m_app}\\
\vx_{t+1} &= \vx_t - \eta\,\mP^{-1}\vm_{t+1}. \label{eq:frozen_zoadam_update_x_app}
\end{align}

\paragraph{Preconditioned coordinates.}
Define $\tilde\eta:=(1-\beta_1)\eta$ and the change of variables
\[
\tilde \vx_t:=\mP^{1/2}\vx_t,
\qquad
\tilde \vm_t:=\mP^{-1/2}\vm_t,
\qquad
\tilde \mH:=\mP^{-1/2}\mH\mP^{-1/2}.
\]
Multiplying \eqref{eq:frozen_zoadam_update_m_app} by $\mP^{-1/2}$ and
\eqref{eq:frozen_zoadam_update_x_app} by $\mP^{1/2}$ gives
\begin{align}
\tilde \vm_{t+1}
&=
\beta_1 \tilde \vm_t
+
(1-\beta_1)\,\mP^{-1/2}\vu_t\vu_t^\top \mP^{1/2}\,\tilde \mH\,\tilde \vx_t,
\label{eq:tilde_m_app}\\
\tilde \vx_{t+1}
&=
\tilde \vx_t-\eta\,\tilde \vm_{t+1}
=
\tilde \vx_t-\tilde\eta\,\hat\vm_{t+1},
\label{eq:tilde_x_app}
\end{align}
where we introduced the scaled momentum $\hat\vm_t:=\tilde\vm_t/(1-\beta_1)$, so that
\begin{align}\label{eq:hat_system_app}
\hat \vm_{t+1}
&=
\beta_1 \hat \vm_t
+
\mP^{-1/2}\vu_t\vu_t^\top \mP^{1/2}\,\tilde \mH\,\tilde \vx_t,
\qquad
\tilde \vx_{t+1}
=
\tilde \vx_t-\tilde\eta\,\hat \vm_{t+1}.
\end{align}
The linear map $(\vx_t,\vm_t)\leftrightarrow(\tilde\vx_t,\hat\vm_t)$ is invertible, hence
$\sup_t \E\|\vx_t\|^2<\infty$ is equivalent to $\sup_t \E\|\tilde\vx_t\|^2<\infty$.

\paragraph{Simultaneous diagonalization.}
Since $\mP\mH=\mH\mP$ with $\mP\succ0$ and $\mH\succeq 0$ symmetric, there exists an orthogonal $\mU$ and diagonal matrices
$\bm\Sigma=\mathrm{diag}(\sigma_1,\ldots,\sigma_d)$ with $\sigma_i>0$ and
$\bm\Lambda=\mathrm{diag}(\lambda_1,\ldots,\lambda_d)$ with $\lambda_i\ge 0$ such that
\[
\mP=\mU\bm\Sigma \mU^\top,
\qquad
\mH=\mU\bm\Lambda \mU^\top.
\]
Consequently,
\[
\tilde\mH=\mP^{-1/2}\mH\mP^{-1/2}
=
\mU\,\mathrm{diag}\!\left(\tfrac{\lambda_1}{\sigma_1},\ldots,\tfrac{\lambda_d}{\sigma_d}\right)\mU^\top
=:\mU\bm{\tilde\Lambda}\mU^\top,
\]
so $\tilde\lambda_i:=\lambda_i/\sigma_i$ are the eigenvalues of $\mP^{-1}\mH$.

Let $\bar\vx_t:=\mU^\top \tilde\vx_t$, $\bar\vm_t:=\mU^\top \hat\vm_t$, and $\bar\vu_t:=\mU^\top \vu_t$.
Then $\bar\vu_t\sim\gN(\vzero,\mI)$ i.i.d., and \eqref{eq:hat_system_app} becomes
\begin{align}\label{eq:hat_system_diag_app}
\bar \vm_{t+1}
&=
\beta_1 \bar \vm_t
+
\bm\Sigma^{-1/2}\bar\vu_t\bar\vu_t^\top \bm\Sigma^{1/2}\bm{\tilde\Lambda}\,\bar\vx_t,
\qquad
\bar \vx_{t+1}
=
\bar \vx_t-\tilde\eta\,\bar \vm_{t+1}.
\end{align}
Since $\|\tilde\vx_t\|=\|\bar\vx_t\|$ and $\E\|\vx_t\|^2\le \lambda_{\min}(\mP)^{-1}\E\|\tilde\vx_t\|^2$,
it suffices to analyze mean-square boundedness of $\bar\vx_t$.

\paragraph{Second-moment recursion and covariance operator.}
Define
\[
\mX_t:=\E[\bar\vx_t\bar\vx_t^\top],
\qquad
\mY_t:=\E[\tilde\eta^{\,2}\bar\vm_t\bar\vm_t^\top],
\qquad
\mC_t:=\E[\tilde\eta\,\bar\vx_t\bar\vm_t^\top].
\]
Applying Lemma~\ref{lemma:gaussian_identity_precond} with $\mP=\bm\Sigma$ and using independence of $\bar\vu_t$
from $(\bar\vx_t,\bar\vm_t)$ yields a closed recursion on the diagonal entries.
Equivalently, for each $i$, define the diagonal $2\times 2$ covariance block
\[
\mW_{i,t}:=
\begin{bmatrix}
(\mX_t)_{ii} & (\mC_t)_{ii}\\
(\mC_t)_{ii} & (\mY_t)_{ii}
\end{bmatrix}\in\sS^2_+.
\]
Let $\gX:=(\sS^2)^d$ and $\gK:=(\sS^2_+)^d$.
Then there exists a linear operator $\gT_{\mathrm{adam}}:\gX\to\gX$ such that
\[
(\mW_{1,t+1},\ldots,\mW_{d,t+1})=\gT_{\mathrm{adam}}(\mW_{1,t},\ldots,\mW_{d,t}),
\]
and $\gT_{\mathrm{adam}}$ is exactly the operator in \cref{thm:cov-operator-adam}, i.e.,
for $\mW=(\mW_1,\ldots,\mW_d)\in\gX$,
\[
(\gT_{\mathrm{adam}}(\mW))_i
=
\mA_i \mW_i \mA_i^\top
+
\tilde\eta^{\,2}\left(
\tilde\lambda_i^2(\mW_i)_{11}
+
\sigma_i^{-1}\sum_{j=1}^d \sigma_j\tilde\lambda_j^2(\mW_j)_{11}
\right)\mQ,
\]
where
\[
\mA_i:=
\begin{bmatrix}
1-\tilde\eta\,\tilde\lambda_i & -\beta_1\\
\tilde\eta\,\tilde\lambda_i & \beta_1
\end{bmatrix},
\qquad
\mQ:=
\begin{bmatrix}
1 & -1\\
-1 & 1
\end{bmatrix}.
\]
Moreover,
\[
\E\|\bar\vx_t\|^2=\Tr(\mX_t)=\sum_{i=1}^d (\mW_{i,t})_{11},
\]
so mean-square stability reduces to boundedness of $\sum_i(\mW_{i,t})_{11}$.

In \cref{thm:cov-operator-adam}, we show that the analogous statement to \cref{thm:cov-operator} also holds for $\gT_{\mathrm{adam}}$. Hence, the remainder of the proof follows by the same argument as \cref{thm:zo-gdm}.
After reduction to strictly positive eigenvalues as in as \cref{thm:zo-gdm} and applying \cref{thm:cov-operator-adam}, we obtain that: if
\begin{align}
\eta\lambda_{\max}(\mP^{-1}\mH)<1+\beta_1
\quad\text{and}\quad
\sum_{i=1}^d 
\frac{\eta\tilde\lambda_i}{2\Bigl(1-\frac{\eta\tilde\lambda_i}{1+\beta_1}\Bigr)}
< 1,
\end{align}
then the linearized dynamics of frozen ZO-Adam is mean-square linearly stable.
Conversely, if the linearized dynamics is mean-square linearly stable, then
\begin{align}
\eta\lambda_{\max}(\mP^{-1}\mH)<1+\beta_1
\quad\text{and}\quad
\sum_{i=1}^d 
\frac{\eta\tilde\lambda_i}{2\Bigl(1-\frac{\eta\tilde\lambda_i}{1+\beta_1}\Bigr)}
\le 1.
\end{align}
Hence, the mean-square critical step size $\eta^\star_{\mathrm{ms}}$ is the unique $\eta>0$ satisfying
\begin{align}
\eta\,\tilde\lambda_{\max}<1+\beta_1,
\qquad
\sum_{i=1}^d \frac{\eta\tilde \lambda_i}{2\left(1-\frac{\eta\tilde \lambda_i}{1+\beta_1}\right)} = 1,
\end{align}
since the left-hand side is strictly increasing in $\eta$ on $\left(0,\frac{1+\beta_1}{\tilde\lambda_{\max}}\right)$.

\paragraph{Bounds on $\eta^\star_{\mathrm{ms}}$.}
At $\eta=\eta^\star_{\mathrm{ms}}$,
\begin{align}
1=\sum_{i=1}^d \frac{\eta^\star_{\mathrm{ms}}\tilde \lambda_i}{2\left(1-\frac{\eta^\star_{\mathrm{ms}}\tilde \lambda_i}{1+\beta_1}\right)}
\ge
\sum_{i=1}^d \frac{\eta^\star_{\mathrm{ms}}\tilde \lambda_i}{2}
=
\frac{\eta^\star_{\mathrm{ms}}\Tr(\mP^{-1}\mH)}{2},
\end{align}
which gives $\eta^\star_{\mathrm{ms}}\le \frac{2}{\Tr(\mP^{-1}\mH)}$.
For the lower bound, use $\tilde\lambda_i\le \tilde\lambda_{\max}$ to get
\begin{align}
1-\frac{\eta^\star_{\mathrm{ms}}\tilde\lambda_i}{1+\beta_1}
\ge
1-\frac{\eta^\star_{\mathrm{ms}}\tilde\lambda_{\max}}{1+\beta_1},
\end{align}
hence
\begin{align}
1
=
\sum_{i=1}^d \frac{\eta^\star_{\mathrm{ms}}\tilde \lambda_i}{2\left(1-\frac{\eta^\star_{\mathrm{ms}}\tilde \lambda_i}{1+\beta_1}\right)}
\le
\frac{\eta^\star_{\mathrm{ms}}\Tr(\mP^{-1}\mH)}{2\left(1-\frac{\eta^\star_{\mathrm{ms}}\tilde\lambda_{\max}}{1+\beta_1}\right)}.
\end{align}
Rearranging, we obtain
\begin{align}
\eta^\star_{\mathrm{ms}}
\ge
\frac{2}{\Tr(\mP^{-1}\mH)+\frac{2\tilde\lambda_{\max}}{1+\beta_1}}.
\end{align}
\end{proof}

\subsection{Spectral analysis of Frozen ZO-Adam covariance operator}
\begin{theorem}[Spectral characterization of the frozen ZO-Adam covariance operator]
\label{thm:cov-operator-adam}
Assume $\mP\succ0$ and $\mH\succeq 0$ are symmetric and commute: $\mP\mH=\mH\mP$.
Let $\tilde\lambda_1\ge\cdots\ge \tilde\lambda_d> 0$ be the eigenvalues of $\mP^{-1}\mH$.
Fix $\eta>0$ and $\beta_1\in[0,1)$, and define $\tilde\eta:=(1-\beta_1)\eta$.
Let $\gX:=(\sS^2)^d$ and $\gK:=(\sS^2_+)^d$.
In the simultaneous diagonalization basis $\mP=\mathrm{diag}(\sigma_1,\ldots,\sigma_d)$ and
$\tilde \mH=\mP^{-1/2}\mH\mP^{-1/2}=\mathrm{diag}(\tilde\lambda_1,\ldots,\tilde\lambda_d)$, define
$\gT_{\mathrm{adam}}:\gX\to\gX$ blockwise by
\begin{align}
(\gT_{\mathrm{adam}}(\mW))_i
&=
\mA_i \mW_i \mA_i^\top
+
\tilde\eta^{\,2}\left(
\tilde\lambda_i^2(\mW_i)_{11}
+
\sigma_i^{-1}\sum_{j=1}^d \sigma_j\tilde\lambda_j^2(\mW_j)_{11}
\right)\mQ,
\qquad i=1,\ldots,d, \label{eq:T_adam_def_thm}\\
\mA_i&:=
\begin{bmatrix}
1-\tilde\eta\,\tilde\lambda_i & -\beta_1\\
\tilde\eta\,\tilde\lambda_i & \beta_1
\end{bmatrix},
\qquad
\mQ:=
\begin{bmatrix}
1 & -1\\
-1 & 1
\end{bmatrix}. \nonumber
\end{align}
Define
\begin{align}
S_{\mathrm{adam}}(\eta,\beta_1)
:=
\sum_{i=1}^d
\frac{\eta\,\tilde\lambda_i}{2\left(1-\tfrac{\eta\,\tilde\lambda_i}{1+\beta_1}\right)}.
\label{eq:S_adam_def}
\end{align}

Then $\gT_{\mathrm{adam}}$ satisfies:

\begin{enumerate}
\item[(a)] \textbf{(Leading eigenvalue in the cone)}
$\gT_{\mathrm{adam}}(\gK)\subseteq\gK$.
Moreover, $\gT_{\mathrm{adam}}$ has an eigenvalue equal to its spectral radius $\rho(\gT_{\mathrm{adam}})$,
with an associated eigenvector $\mW^\star\in\gK\setminus\{\vzero\}$ satisfying
\[
\rho(\gT_{\mathrm{adam}})\ge \tilde\eta^{\,2}\sum_{i=1}^d \tilde\lambda_i^2>0
\qquad\text{and}\qquad
\sum_{i=1}^d (\mW_i^\star)_{11}>0.
\]

\item[(b)] \textbf{(Critical case)}
\[
\rho(\gT_{\mathrm{adam}})=1
\quad\Longleftrightarrow\quad
\eta\,\tilde\lambda_1<1+\beta_1
\;\;\text{and}\;\;
S_{\mathrm{adam}}(\eta,\beta_1)=1.
\]

\item[(c)] \textbf{(Subcritical case)}
\[
\rho(\gT_{\mathrm{adam}})<1
\quad\Longleftrightarrow\quad
\eta\,\tilde\lambda_1<1+\beta_1
\;\;\text{and}\;\;
S_{\mathrm{adam}}(\eta,\beta_1)<1.
\]
\end{enumerate}
\end{theorem}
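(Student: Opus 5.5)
The plan is to reuse the proof of \cref{thm:cov-operator} essentially verbatim. The only structural change is that the rank-one coupling direction is now weighted by $\sigma_i^{-1}$. Define the local maps $\gM_i(\mX):=\mA_i\mX\mA_i^\top+\tilde\eta^{\,2}\tilde\lambda_i^2(\mX)_{11}\mQ$, the coupling scalar $s_\sigma(\mW):=\tilde\eta^{\,2}\sum_j\sigma_j\tilde\lambda_j^2(\mW_j)_{11}\ge 0$, and the weighted direction $\widebar{\mQ}_\sigma:=(\sigma_1^{-1}\mQ,\ldots,\sigma_d^{-1}\mQ)\in\gK$. Then
\[
\gT_{\mathrm{adam}}(\mW)=\gM(\mW)+s_\sigma(\mW)\,\widebar{\mQ}_\sigma,
\]
which has exactly the local--global form \eqref{eq:T-decomp-block}, with $\widebar{\mQ}$ replaced by $\widebar{\mQ}_\sigma$ and $s$ replaced by $s_\sigma$. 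Each $\gM_i$ is the ZO-GDM local map with the substitutions $(\eta,\lambda_i,\beta)\mapsto(\tilde\eta,\tilde\lambda_i,\beta_1)$.

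\textbf{Part (a).} Cone preservation is immediate because $\sigma_i>0$, and Krein--Rutman (Lemma~\ref{lemma:Krein--Rutman}) gives a leading eigenvector $\mW^\star\in\gK\setminus\{\vzero\}$. For the lower bound, I would test on $\widebar{\mQ}_\sigma$. Since $s_\sigma(\widebar{\mQ}_\sigma)=\tilde\eta^{\,2}\sum_j\sigma_j\sigma_j^{-1}\tilde\lambda_j^2=\tilde\eta^{\,2}\sum_j\tilde\lambda_j^2=:c$, we get $\gT_{\mathrm{adam}}(\widebar{\mQ}_\sigma)\succeq c\,\widebar{\mQ}_\sigma$, and Gelfand's formula yields $\rho\ge c$. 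The argument that $\sum_i(\mW_i^\star)_{11}>0$ is unchanged. If all $(1,1)$ entries vanish, the coupling term is zero and $\mA_i\,\mathrm{diag}(0,y_i)\,\mA_i^\top=\beta_1^2y_i\mQ$, which has equal diagonal entries. This forces $y_i=0$. Consequently $s_\sigma(\mW^\star)>0$.

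\textbf{Parts (b) and (c).} Lemma~\ref{lem:local} and Lemma~\ref{lem:dual_infeasibility} apply to $\gM_i$ after the substitution, since they never used the coupling. Hence $\rho(\gM_i)<1$ iff $\tilde\eta\tilde\lambda_i<1-\beta_1^2$. Because $\tilde\eta=(1-\beta_1)\eta$, this is equivalent to $\eta\tilde\lambda_i<1+\beta_1$. In that case $\mY_i:=(\mathrm{Id}-\gM_i)^{-1}\mQ\succeq0$ with
\[
\gamma_i=\frac{1+\beta_1}{2\tilde\eta\tilde\lambda_i(1-\beta_1^2-\tilde\eta\tilde\lambda_i)}.
\]
The eigen-equation now reads $(\mathrm{Id}-\tfrac1r\gM_i)\mW_i^\star=\tfrac{s^\star}{r}\sigma_i^{-1}\mQ$, and Lemma~\ref{lem:dual_infeasibility} still applies with $\alpha=s^\star/(r\sigma_i)>0$. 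Solving gives $\mW_i^\star=\tfrac{s^\star}{r}\sigma_i^{-1}(\mathrm{Id}-\tfrac1r\gM_i)^{-1}\mQ$. Applying $s_\sigma$ gives the fixed-point identity
\[
r=\tilde\eta^{\,2}\sum_i\sigma_i\sigma_i^{-1}\tilde\lambda_i^2\bigl((\mathrm{Id}-\tfrac1r\gM_i)^{-1}\mQ\bigr)_{11}.
\]
The $\sigma$ weights cancel exactly, which is the key point. The Neumann-series monotonicity in $r$ then goes through as before, comparing against $r=1$ where the right-hand side equals $\sum_i\tilde\eta^{\,2}\tilde\lambda_i^2\gamma_i$. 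For the converse directions I use the explicit construction $(\sigma_i^{-1}\mY_i)_i$, which is a fixed point when that sum equals $1$.

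The remaining, purely algebraic step is to identify this sum with $S_{\mathrm{adam}}$:
\[
\tilde\eta^{\,2}\tilde\lambda_i^2\gamma_i=\frac{(1+\beta_1)\tilde\eta\tilde\lambda_i}{2(1-\beta_1^2-\tilde\eta\tilde\lambda_i)}
=\frac{\eta\tilde\lambda_i}{2\bigl(1-\frac{\eta\tilde\lambda_i}{1+\beta_1}\bigr)},
\]
obtained by substituting $\tilde\eta=(1-\beta_1)\eta$ and dividing numerator and denominator by $1-\beta_1^2$. I expect the main obstacle to be bookkeeping rather than a new idea. One has to check that each step of \cref{thm:cov-operator} used only positivity and nonnegativity of the coupling direction and functional, not the specific equal weighting. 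The delicate spots are the positivity of $\alpha$ in the dual-infeasibility step and the $\sigma$ cancellation in the fixed-point identity, both of which hold because every $\sigma_i>0$.
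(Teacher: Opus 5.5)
Your proposal is correct and follows the paper's proof essentially step for step. It uses the same decomposition $\gT_{\mathrm{adam}}(\mW)=\gM(\mW)+\tilde\eta^{2}s_\Sigma(\mW)\,\widebar{\mQ}_\Sigma$ with the $\sigma_i^{-1}$-weighted injection direction, the same test vector for the lower bound on $\rho(\gT_{\mathrm{adam}})$, and \cref{lem:local} and \cref{lem:dual_infeasibility} under $(\eta,\beta,\lambda_i)\leftarrow(\tilde\eta,\beta_1,\tilde\lambda_i)$; as in the paper, the $\sigma$-weights cancel in the fixed-point identity and the converse uses the explicit fixed point $(\sigma_i^{-1}\mY_i)_i$. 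Absorbing $\tilde\eta^{2}$ into the coupling functional is only cosmetic, and your positivity argument, which compares both diagonal entries of $\beta_1^2 y_i\mQ$, is slightly cleaner than the paper's: the paper uses only the $(1,1)$-entry and so tacitly needs $\beta_1\neq 0$ to conclude $y_i=0$.
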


\begin{proof}
We work on $\gX=(\sS^2)^d$ with cone $\gK=(\sS_+^2)^d$ and order $\preceq$.

\paragraph{Proof of (a).}
If $\mW\in\gK$, then $\mA_i\mW_i\mA_i^\top\succeq 0$, $(\mW_i)_{11}\ge 0$, and
$\sigma_i^{-1}\sum_j \sigma_j\tilde\lambda_j^2(\mW_j)_{11}\ge 0$.
Since $\mQ\succeq 0$, \eqref{eq:T_adam_def_thm} implies $(\gT_{\mathrm{adam}}(\mW))_i\succeq 0$ for all $i$, hence
$\gT_{\mathrm{adam}}(\gK)\subseteq\gK$.

Since $\gX$ is finite dimensional and $\gK$ is a closed, convex, pointed cone with nonempty interior,
the Krein--Rutman theorem implies that $\gT_{\mathrm{adam}}$ admits an eigenvalue equal to
$\rho(\gT_{\mathrm{adam}})$ with an eigenvector $\mW^\star\in\gK\setminus\{\vzero\}$.

To lower bound $\rho(\gT_{\mathrm{adam}})$, define $\widebar{\mQ}_\Sigma:=(\sigma_1^{-1}\mQ,\ldots,\sigma_d^{-1}\mQ)\in\gK\setminus\{\vzero\}$.
For each $i$,
\begin{align*}
(\gT_{\mathrm{adam}}(\widebar{\mQ}_\Sigma))_i
&=
\mA_i(\sigma_i^{-1}\mQ)\mA_i^\top
+\tilde\eta^{\,2}\left(\tilde\lambda_i^2(\sigma_i^{-1}\mQ)_{11}
+\sigma_i^{-1}\sum_{j=1}^d\sigma_j\tilde\lambda_j^2(\sigma_j^{-1}\mQ)_{11}\right)\mQ\\
&\succeq
\tilde\eta^{\,2}\left(\sigma_i^{-1}\sum_{j=1}^d \tilde\lambda_j^2\right)\mQ
=
c\,(\widebar{\mQ}_\Sigma)_i,
\end{align*}
where $c:=\tilde\eta^{\,2}\sum_{j=1}^d \tilde\lambda_j^2$.
Thus $\gT_{\mathrm{adam}}(\widebar{\mQ}_\Sigma)\succeq c\,\widebar{\mQ}_\Sigma$, so
$\gT_{\mathrm{adam}}^t(\widebar{\mQ}_\Sigma)\succeq c^t\widebar{\mQ}_\Sigma$ for all $t\ge 1$.
By Gelfand's formula, $\rho(\gT_{\mathrm{adam}})\ge c$.

Finally, if $(\mW_i^\star)_{11}=0$ for all $i$, then each $\mW_i^\star=\begin{bsmallmatrix}0&0\\0&y_i\end{bsmallmatrix}$ with $y_i\ge 0$.
In \eqref{eq:T_adam_def_thm}, the coupling term vanishes, and
$(\gT_{\mathrm{adam}}(\mW^\star))_i=\mA_i\mW_i^\star\mA_i^\top$ has $(1,1)$-entry equal to $\beta_1^2 y_i$.
Since $\rho(\gT_{\mathrm{adam}})>0$ and $\gT_{\mathrm{adam}}(\mW^\star)=\rho(\gT_{\mathrm{adam}})\mW^\star$, we get $\beta_1^2 y_i=0$ for all $i$, hence $y_i=0$ and $\mW^\star=\vzero$, a contradiction.
Therefore $\sum_i(\mW_i^\star)_{11}>0$.

\paragraph{Local--global decomposition.}
For each $i$, define $\gM_i:\sS^2\to\sS^2$ by
\[
\gM_i(\mX):=\mA_i\mX\mA_i^\top+\tilde\eta^{\,2}\tilde\lambda_i^2(\mX)_{11}\mQ,
\]
and define the block-diagonal map $\gM:\gX\to\gX$ by $(\gM(\mW))_i:=\gM_i(\mW_i)$.
Define the nonnegative linear functional $s_\Sigma:\gX\to\R$ and the injection direction $\widebar{\mQ}_\Sigma\in\gK$ by
\[
s_\Sigma(\mW):=\sum_{j=1}^d \sigma_j\tilde\lambda_j^2(\mW_j)_{11},
\qquad
\widebar{\mQ}_\Sigma:=(\sigma_1^{-1}\mQ,\ldots,\sigma_d^{-1}\mQ).
\]
Then \eqref{eq:T_adam_def_thm} is equivalently
\begin{align}\label{eq:T_adam_decomp}
\gT_{\mathrm{adam}}(\mW)=\gM(\mW)+\tilde\eta^{\,2}\,s_\Sigma(\mW)\,\widebar{\mQ}_\Sigma.
\end{align}

\paragraph{Key local facts.}
All statements below use the ZO-GDM local lemmas with the substitutions $(\eta,\beta,\lambda_i)\leftarrow(\tilde\eta,\beta_1,\tilde\lambda_i)$:
\begin{itemize}
\item \cref{lem:local}(a): $\rho(\gM_i)<1$ if and only if $\tilde\eta\,\tilde\lambda_i<1-\beta_1^2$.
\item \cref{lem:local}(b),(c): Under $\tilde\eta\,\tilde\lambda_i<1-\beta_1^2$, $(\mathrm{Id}-\gM_i)^{-1}$ exists, is $\sS_+^2$-preserving, and
$\mY_i:=(\mathrm{Id}-\gM_i)^{-1}\mQ\succeq 0$ satisfies
\begin{align}\label{eq:gamma_i_adam}
\gamma_i:=(\mY_i)_{11}=\frac{1+\beta_1}{2\tilde\eta\,\tilde\lambda_i(1-\beta_1^2-\tilde\eta\,\tilde\lambda_i)}.
\end{align}
\item \cref{lem:dual_infeasibility}: For any $c>0$, if $\rho(c\gM_i)\ge 1$, then for any $\alpha>0$ there is no $\mW\succeq 0$ with
$(\mathrm{Id}-c\gM_i)(\mW)\succeq \alpha\mQ$.
\end{itemize}

\paragraph{Proof of (b).}
($\Rightarrow$) Assume $\rho(\gT_{\mathrm{adam}})=1$.
Let $\mW^\star\in\gK\setminus\{\vzero\}$ satisfy $\gT_{\mathrm{adam}}(\mW^\star)=\mW^\star$, and set
$s^\star:=s_\Sigma(\mW^\star)>0$.
From \eqref{eq:T_adam_decomp}, for each $i$,
\[
(\mathrm{Id}-\gM_i)(\mW_i^\star)=\tilde\eta^{\,2}\frac{s^\star}{\sigma_i}\mQ.
\]
Since $\tilde\eta^{\,2}s^\star/\sigma_i>0$, \cref{lem:dual_infeasibility} forces $\rho(\gM_i)<1$ for all $i$ and hence
$\tilde\eta\,\tilde\lambda_1<1-\beta_1^2$, i.e.\ $\eta\,\tilde\lambda_1<1+\beta_1$.

Moreover, $\mW_i^\star=\tilde\eta^{\,2}(s^\star/\sigma_i)\mY_i$, so
$(\mW_i^\star)_{11}=\tilde\eta^{\,2}(s^\star/\sigma_i)\gamma_i$.
Plugging into $s^\star=s_\Sigma(\mW^\star)$ yields
\[
s^\star=\sum_{i=1}^d \sigma_i\tilde\lambda_i^2(\mW_i^\star)_{11}
=\tilde\eta^{\,2}s^\star\sum_{i=1}^d \tilde\lambda_i^2\gamma_i,
\]
and cancelling $s^\star>0$ gives
\begin{align}\label{eq:scalar_identity_adam}
1=\tilde\eta^{\,2}\sum_{i=1}^d \tilde\lambda_i^2\gamma_i.
\end{align}
Using \eqref{eq:gamma_i_adam},
\[
\tilde\eta^{\,2}\tilde\lambda_i^2\gamma_i
=
\frac{\tilde\eta\,\tilde\lambda_i}{2(1-\beta_1)\left(1-\tfrac{\tilde\eta\,\tilde\lambda_i}{1-\beta_1^2}\right)}.
\]
Since $\tilde\eta=(1-\beta_1)\eta$ and $1-\beta_1^2=(1-\beta_1)(1+\beta_1)$, the right-hand side becomes
$\frac{\eta\tilde\lambda_i}{2\left(1-\tfrac{\eta\tilde\lambda_i}{1+\beta_1}\right)}$.
Thus \eqref{eq:scalar_identity_adam} is equivalent to $S_{\mathrm{adam}}(\eta,\beta_1)=1$.

($\Leftarrow$) Assume $\eta\,\tilde\lambda_1<1+\beta_1$ and $S_{\mathrm{adam}}(\eta,\beta_1)=1$.
Equivalently, $\tilde\eta\,\tilde\lambda_1<1-\beta_1^2$ and $\sum_i \tilde\eta^{\,2}\tilde\lambda_i^2\gamma_i=1$.
Define $\mY_i:=(\mathrm{Id}-\gM_i)^{-1}\mQ\succeq 0$ and set $\widebar\mY:=(\mY_1/\sigma_1,\ldots,\mY_d/\sigma_d)\in\gK\setminus\{\vzero\}$.
Then
\[
(\gT_{\mathrm{adam}}(\widebar \mY))_i
=\gM_i(\mY_i/\sigma_i)+\tilde\eta^{\,2}s_\Sigma(\widebar \mY)\sigma_i^{-1}\mQ.
\]
But
\[
s_\Sigma(\widebar \mY)=\sum_{i=1}^d \sigma_i\tilde\lambda_i^2\sigma_i^{-1}(\mY_i)_{11}
=\sum_{i=1}^d \tilde\lambda_i^2\gamma_i,
\qquad\text{and}\qquad
\tilde\eta^{\,2}\sum_{i=1}^d \tilde\lambda_i^2\gamma_i=1,
\]
so that $\tilde\eta^{\,2}s_\Sigma(\widebar\mY)=1$.
Also, by definition, $\gM_i(\mY_i/\sigma_i)= \sigma_i^{-1} (\mY_i-\mQ)$.
Hence $(\gT_{\mathrm{adam}}(\widebar \mY))_i=(\mY_i-\mQ)/\sigma_i+\sigma_i^{-1}\mQ=\mY_i/\sigma_i = \widebar \mY_i$, i.e.\ $\gT_{\mathrm{adam}}(\widebar\mY)=\bar\mY$.
Therefore $1$ is an eigenvalue, so $\rho(\gT_{\mathrm{adam}})\ge 1$.

To show $\rho(\gT_{\mathrm{adam}})\le 1$, let $r:=\rho(\gT_{\mathrm{adam}})\ge 1$ and take $\mW^\star\in\gK\setminus\{\vzero\}$ with
$\gT_{\mathrm{adam}}(\mW^\star)=r\mW^\star$ and $s^\star:=s_\Sigma(\mW^\star)>0$.
From \eqref{eq:T_adam_decomp},
\[
\Bigl(\mathrm{Id}-\tfrac{1}{r}\gM\Bigr)\mW^\star=\tfrac{\tilde\eta^{\,2}s^\star}{r}\,\widebar{\mQ}_\Sigma.
\]
Since $\tilde\eta\,\tilde\lambda_1<1-\beta_1^2$, we have $\rho(\gM)<1$, hence $\mathrm{Id}-\tfrac{1}{r}\gM$ is invertible and
\[
\mW^\star=\tfrac{\tilde\eta^{\,2}s^\star}{r}\Bigl(\mathrm{Id}-\tfrac{1}{r}\gM\Bigr)^{-1}\widebar{\mQ}_\Sigma.
\]
Applying $s_\Sigma(\cdot)$ and cancelling $s^\star>0$ gives the fixed-point identity
\begin{align}\label{eq:r_fp_adam}
r=\tilde\eta^{\,2}\,s_\Sigma\!\left(\Bigl(\mathrm{Id}-\tfrac{1}{r}\gM\Bigr)^{-1}\widebar{\mQ}_\Sigma\right).
\end{align}
Using the Neumann series and $r\ge 1$,
\[
s_\Sigma\!\left(\Bigl(\mathrm{Id}-\tfrac{1}{r}\gM\Bigr)^{-1}\widebar{\mQ}_\Sigma\right)
=
\sum_{k=0}^\infty r^{-k}\,s_\Sigma(\gM^k(\widebar{\mQ}_\Sigma))
\le
\sum_{k=0}^\infty s_\Sigma(\gM^k(\widebar{\mQ}_\Sigma))
=
s_\Sigma\!\left((\mathrm{Id}-\gM)^{-1}\widebar{\mQ}_\Sigma\right).
\]
Because $(\mathrm{Id}-\gM)^{-1}$ is block-diagonal with blocks $(\mathrm{Id}-\gM_i)^{-1}$ and
$(\widebar{\mQ}_\Sigma)_i=\sigma_i^{-1}\mQ$, we have
\[
\bigl((\mathrm{Id}-\gM)^{-1}\widebar{\mQ}_\Sigma\bigr)_i
=\sigma_i^{-1}(\mathrm{Id}-\gM_i)^{-1}\mQ=\sigma_i^{-1}\mY_i,
\]
and therefore
\[
s_\Sigma\!\left((\mathrm{Id}-\gM)^{-1}\widebar{\mQ}_\Sigma\right)
=\sum_{i=1}^d \sigma_i\tilde\lambda_i^2(\sigma_i^{-1}\mY_i)_{11}
=\sum_{i=1}^d \tilde\lambda_i^2\gamma_i.
\]
Plugging into \eqref{eq:r_fp_adam} yields
\[
r \le \tilde\eta^{\,2}\sum_{i=1}^d \tilde\lambda_i^2\gamma_i = 1,
\]
so $r\le 1$ and hence $\rho(\gT_{\mathrm{adam}})=1$.

\paragraph{Proof of (c).}
($\Rightarrow$) Assume $\rho(\gT_{\mathrm{adam}})<1$ and set $r:=\rho(\gT_{\mathrm{adam}})$.
Let $\mW^\star\in\gK\setminus\{\vzero\}$ satisfy $\gT_{\mathrm{adam}}(\mW^\star)=r\mW^\star$, and set $s^\star:=s_\Sigma(\mW^\star)>0$.
From \eqref{eq:T_adam_decomp}, for each $i$,
\[
\Bigl(\mathrm{Id}-\tfrac{1}{r}\gM_i\Bigr)(\mW_i^\star)
=
\tfrac{\tilde\eta^{\,2}s^\star}{r\sigma_i}\mQ.
\]
Since $\tilde\eta^{\,2}s^\star/(r\sigma_i)>0$ and $\mW_i^\star\succeq 0$, \cref{lem:dual_infeasibility} with $c=\tfrac{1}{r}$ implies $\rho(\tfrac{1}{r}\gM_i)<1$ for all $i$.
Thus $\rho(\gM_i)<r<1$, and \cref{lem:local}(a) gives $\tilde\eta\,\tilde\lambda_1<1-\beta_1^2$.
The bounds $\rho(\tfrac{1}{r}\gM_i)<1$ imply $\rho(\tfrac{1}{r}\gM)<1$, so $\mathrm{Id}-\tfrac{1}{r}\gM$ is invertible by the Neumann series and
\[
\mW^\star
=
\tfrac{\tilde\eta^{\,2}s^\star}{r}
\Bigl(\mathrm{Id}-\tfrac{1}{r}\gM\Bigr)^{-1}\widebar{\mQ}_\Sigma.
\]
Applying $s_\Sigma(\cdot)$ and cancelling $s^\star>0$ gives
\[
r=\tilde\eta^{\,2}\,
s_\Sigma\!\left(\Bigl(\mathrm{Id}-\tfrac{1}{r}\gM\Bigr)^{-1}\widebar{\mQ}_\Sigma\right).
\]
Using the Neumann series and $r<1$,
\[
\tilde\eta^{\,2}\,
s_\Sigma\!\left(\Bigl(\mathrm{Id}-\tfrac{1}{r}\gM\Bigr)^{-1}\widebar{\mQ}_\Sigma\right)
\ge
\tilde\eta^{\,2}\,
s_\Sigma\!\left((\mathrm{Id}-\gM)^{-1}\widebar{\mQ}_\Sigma\right)
=
\tilde\eta^{\,2}\sum_{i=1}^d \tilde\lambda_i^2\gamma_i
=
S_{\mathrm{adam}}(\eta,\beta_1).
\]
Hence $S_{\mathrm{adam}}(\eta,\beta_1)\le r<1$.

($\Leftarrow$) Assume $\tilde\eta\,\tilde\lambda_1<1-\beta_1^2$ and $S_{\mathrm{adam}}(\eta,\beta_1)<1$.
Suppose for contradiction that $r:=\rho(\gT_{\mathrm{adam}})\ge 1$, and take $\mW^\star\in\gK\setminus\{\vzero\}$ with $\gT_{\mathrm{adam}}(\mW^\star)=r\mW^\star$ and $s^\star:=s_\Sigma(\mW^\star)>0$.
Since $\tilde\eta\,\tilde\lambda_1<1-\beta_1^2$, \cref{lem:local}(a) gives $\rho(\gM)<1$, and therefore $\mathrm{Id}-\tfrac{1}{r}\gM$ is invertible. As above,
\[
r=\tilde\eta^{\,2}\,
s_\Sigma\!\left(\Bigl(\mathrm{Id}-\tfrac{1}{r}\gM\Bigr)^{-1}\widebar{\mQ}_\Sigma\right).
\]
Using the Neumann series and $r\ge 1$,
\[
r
\le
\tilde\eta^{\,2}\,
s_\Sigma\!\left((\mathrm{Id}-\gM)^{-1}\widebar{\mQ}_\Sigma\right)
=
\tilde\eta^{\,2}\sum_{i=1}^d\tilde\lambda_i^2\gamma_i
=
S_{\mathrm{adam}}(\eta,\beta_1)
<1,
\]
which contradicts $r\ge 1$.
Therefore $\rho(\gT_{\mathrm{adam}})<1$.
This completes (c) and the proof.
\end{proof}

\subsection{Technical Lemmas}

\begin{lemma}\label{lemma:gaussian_identity}
    Let $\vu\sim \gN(\vzero, \mI)$. Then, the following identities hold by the Isserlis' Theorem.
    \begin{enumerate}
        \item [(a)] $\E [\vu \vu^\top] = \mI$.
        \item [(b)] $\E[\vu\vu^\top \mA \vu \vu^\top] = \mA + \mA^\top + \Tr (\mA) \mI$ for any matrix $\mA$.
    \end{enumerate}
\end{lemma}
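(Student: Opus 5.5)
We will prove both identities by expanding entrywise and evaluating the resulting Gaussian moments, using only that the coordinates $u_1,\ldots,u_d$ of $\vu$ are i.i.d.\ standard normal. Part (a) is immediate: $\E[u_iu_j]=\delta_{ij}$, so $\E[\vu\vu^\top]=\mI$.

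For part (b), we first write the $(i,l)$ entry of $\vu\vu^\top\mA\vu\vu^\top$ as $\sum_{j,k} u_i u_j A_{jk} u_k u_l$. By linearity of expectation,
\begin{align}
\bigl(\E[\vu\vu^\top \mA\vu\vu^\top]\bigr)_{il}=\sum_{j,k} A_{jk}\,\E[u_iu_ju_ku_l].
\end{align}
Next, Isserlis' theorem for a centered Gaussian vector with identity covariance gives the fourth moment as a sum over the three pairings:
\begin{align}
\E[u_iu_ju_ku_l]=\delta_{ij}\delta_{kl}+\delta_{ik}\delta_{jl}+\delta_{il}\delta_{jk}.
\end{align}
Substituting and summing each pairing separately:
\begin{itemize}
\item the pairing $\delta_{ij}\delta_{kl}$ contributes $A_{il}$;
\item the pairing $\delta_{ik}\delta_{jl}$ contributes $A_{li}$;
\item the pairing $\delta_{il}\delta_{jk}$ contributes $\delta_{il}\sum_j A_{jj}=\delta_{il}\Tr(\mA)$.
\end{itemize}
Collecting terms yields $\mA+\mA^\top+\Tr(\mA)\mI$. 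No symmetry of $\mA$ is needed, which matters because the lemma is applied to non-symmetric products such as $\mX_t\bm\Lambda$.

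There is no real obstacle; the only care needed is matching the pairings to the correct index pattern. One can sanity-check the result with $\mA=\mI$: the left side is $\E[\|\vu\|^2\vu\vu^\top]=(d+2)\mI$, and the right side is $2\mI+d\,\mI$, which agrees. The fourth-moment formula itself can also be checked by cases: $\E[u_i^4]=3$ matches the three pairings all being $1$, $\E[u_i^2u_j^2]=1$ for $i\neq j$ matches exactly one pairing being $1$, and all other index patterns vanish.
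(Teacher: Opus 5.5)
Your proof is correct and follows the paper's own argument: you expand the $(i,l)$ entry as $\sum_{j,k}A_{jk}\E[u_iu_ju_ku_l]$, apply Isserlis' formula, and read off $A_{il}$, $A_{li}$, and $\delta_{il}\Tr(\mA)$ from the three pairings. One small aside: in the paper, part (b) is only ever applied to the symmetric matrix $\bm\Lambda\mX_t\bm\Lambda$, since the terms involving $\mX_t\bm\Lambda$ or $\mC_t$ need only part (a); this does not affect your proof, and handling general $\mA$ is still correct.
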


\begin{proof}[Proof]
Let $\vu=(u_1,\ldots,u_d)^\top\sim\gN(\vzero,\mI)$.

\paragraph{(a)}
Let $[d]:=\{1,2,\ldots,d\}$. For $i,j\in[d]$, $\E[u_i u_j]=\delta_{ij}$ since the coordinates are centered, independent, and $\Var(u_i)=1$.
Thus $\E[\vu\vu^\top]=\mI$.

\paragraph{(b)}
Let $\mA\in\R^{d\times d}$ be arbitrary. For $p,q\in[d]$, the $(p,q)$-entry of $\E[\vu\vu^\top \mA \vu \vu^\top]$ equals
\[
\E\!\left[ u_p\,(\vu^\top \mA \vu)\,u_q\right]
=\sum_{i,j=1}^d A_{ij}\,\E[u_p u_i u_j u_q].
\]
By the Isserlis' Theorem (Wick's formula) for centered Gaussians,
\[
\E[u_p u_i u_j u_q]
=
\E[u_pu_i]\E[u_ju_q]+\E[u_pu_j]\E[u_i u_q]+\E[u_pu_q]\E[u_i u_j]
=
\delta_{pi}\delta_{jq}+\delta_{pj}\delta_{iq}+\delta_{pq}\delta_{ij}.
\]
Substituting back gives
\[
\sum_{i,j} A_{ij}\delta_{pi}\delta_{jq} = A_{pq},\qquad
\sum_{i,j} A_{ij}\delta_{pj}\delta_{iq} = A_{qp},\qquad
\sum_{i,j} A_{ij}\delta_{pq}\delta_{ij} = \delta_{pq}\Tr(\mA).
\]
Hence,
\[
\bigl(\E[\vu\vu^\top \mA \vu \vu^\top]\bigr)_{pq}
=
A_{pq}+A_{qp}+\Tr(\mA)\delta_{pq},
\]
which is exactly $\mA+\mA^\top+\Tr(\mA)\mI$.
\end{proof}

\begin{lemma}\label{lemma:gaussian_identity_precond}
    Let $\bm\Sigma\succ0$ be a fixed diagonal matrix and let $\vu\sim \gN(\vzero, \mI)$. Then, the following identities hold by the Isserlis' Theorem.
    \begin{enumerate}
        \item [(a)] $\E [\bm\Sigma^{-1/2}\vu \vu^\top \bm\Sigma^{1/2}] = \mI$.
        \item [(b)] $\E[\bm\Sigma^{-1/2}\vu\vu^\top \bm\Sigma^{1/2} \mA \bm\Sigma^{1/2} \vu \vu^\top\bm\Sigma^{-1/2}] = \mA + \mA^\top + \Tr (\bm\Sigma\mA) \bm\Sigma^{-1}$ for any matrix $\mA$.
    \end{enumerate}
\end{lemma}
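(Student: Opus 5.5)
Both identities reduce to entrywise Gaussian moment computations. Conjugating by the diagonal matrix $\bm\Sigma^{\pm1/2}$ only rescales rows and columns, so the plan is to rewrite each expression in the form of Lemma~\ref{lemma:gaussian_identity} and then undo the scaling.

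\textbf{Part (a).} By linearity of expectation and Lemma~\ref{lemma:gaussian_identity}(a),
\[
\E[\bm\Sigma^{-1/2}\vu\vu^\top\bm\Sigma^{1/2}]=\bm\Sigma^{-1/2}\,\E[\vu\vu^\top]\,\bm\Sigma^{1/2}=\bm\Sigma^{-1/2}\bm\Sigma^{1/2}=\mI .
\]

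\textbf{Part (b).} Pull the deterministic outer factors out of the expectation and set $\mB:=\bm\Sigma^{1/2}\mA\bm\Sigma^{1/2}$. This gives
\[
\E[\bm\Sigma^{-1/2}\vu\vu^\top \mB \vu\vu^\top\bm\Sigma^{-1/2}]
=\bm\Sigma^{-1/2}\,\E[\vu\vu^\top\mB\vu\vu^\top]\,\bm\Sigma^{-1/2}.
\]
Lemma~\ref{lemma:gaussian_identity}(b), which holds for an arbitrary (not necessarily symmetric) matrix, evaluates the inner expectation as $\mB+\mB^\top+\Tr(\mB)\mI$. Substituting back, we simplify each of the three terms separately:
\[
\bm\Sigma^{-1/2}\mB\bm\Sigma^{-1/2}=\mA,\qquad
\bm\Sigma^{-1/2}\mB^\top\bm\Sigma^{-1/2}=\mA^\top,\qquad
\Tr(\mB)\,\bm\Sigma^{-1/2}\mI\bm\Sigma^{-1/2}=\Tr(\mB)\,\bm\Sigma^{-1}.
\]
The second uses $\mB^\top=\bm\Sigma^{1/2}\mA^\top\bm\Sigma^{1/2}$. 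For the trace term, cyclicity gives $\Tr(\mB)=\Tr(\bm\Sigma^{1/2}\mA\bm\Sigma^{1/2})=\Tr(\bm\Sigma\mA)$. Summing the three terms yields $\mA+\mA^\top+\Tr(\bm\Sigma\mA)\bm\Sigma^{-1}$, which is the claimed identity.

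The only step that needs care is the bookkeeping of transposes and of the trace under the diagonal conjugation. Since $\bm\Sigma$ is diagonal, and hence symmetric, $(\bm\Sigma^{1/2})^\top=\bm\Sigma^{1/2}$, and both simplifications above go through. Diagonality of $\bm\Sigma$ is also what later lets the coupling term $\Tr(\bm\Sigma\mA)$ reduce to $\sum_j\sigma_j(\cdot)_{jj}$ in the ZO-Adam recursion.
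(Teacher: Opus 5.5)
Your proposal is correct and follows the paper's proof essentially step for step. Like the paper, you pull the deterministic factors $\bm\Sigma^{\pm 1/2}$ out of the expectation and apply Lemma~\ref{lemma:gaussian_identity}(b) to the conjugated matrix $\bm\Sigma^{1/2}\mA\bm\Sigma^{1/2}$; you then undo the conjugation and use cyclicity of the trace to obtain $\Tr(\bm\Sigma\mA)$.
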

\begin{proof}
Let $\vu\sim\gN(\vzero,\mI)$ and let $\bm\Sigma\succ 0$ be diagonal. Write $\mD:=\bm\Sigma^{1/2}$, so $\mD$ is invertible and $\bm\Sigma^{-1/2}=\mD^{-1}$.

\paragraph{(a)}
We have
\[
\E[\bm\Sigma^{-1/2}\vu\vu^\top \bm\Sigma^{1/2}]
=
\mD^{-1}\,\E[\vu\vu^\top]\,\mD
=
\mD^{-1}\mI\mD
=
\mI,
\]
where we used \cref{lemma:gaussian_identity}(a).

\paragraph{(b)}
Let $\mA$ be arbitrary and define $\widetilde{\mA}:=\mD\mA\mD$.
Then
\[
\bm\Sigma^{-1/2}\vu\vu^\top \bm\Sigma^{1/2}\mA\bm\Sigma^{1/2}\vu\vu^\top\bm\Sigma^{-1/2}
=
\mD^{-1}\,\vu\vu^\top\,\widetilde{\mA}\,\vu\vu^\top\,\mD^{-1}.
\]
Taking expectation and applying \cref{lemma:gaussian_identity}(b) to $\widetilde{\mA}$ yields
\[
\E[\vu\vu^\top\,\widetilde{\mA}\,\vu\vu^\top]
=
\widetilde{\mA}+\widetilde{\mA}^\top+\Tr(\widetilde{\mA})\mI.
\]
Therefore,
\begin{align*}
\E[\mD^{-1}\vu\vu^\top\,\widetilde{\mA}\,\vu\vu^\top\mD^{-1}]
&=
\mD^{-1}\widetilde{\mA}\mD^{-1}
+
\mD^{-1}\widetilde{\mA}^\top\mD^{-1}
+
\Tr(\widetilde{\mA})\,\mD^{-1}\mI\mD^{-1}.
\end{align*}
Using $\mD^{-1}\widetilde{\mA}\mD^{-1}=\mA$, $\mD^{-1}\widetilde{\mA}^\top\mD^{-1}=\mA^\top$, and
\[
\Tr(\widetilde{\mA})=\Tr(\mD\mA\mD)=\Tr(\mA\mD^2)=\Tr(\mA\bm\Sigma),
\qquad
\mD^{-1}\mI\mD^{-1}=\mD^{-2}=\bm\Sigma^{-1},
\]
we conclude that
\[
\E[\bm\Sigma^{-1/2}\vu\vu^\top \bm\Sigma^{1/2} \mA \bm\Sigma^{1/2}\vu\vu^\top\bm\Sigma^{-1/2}]
=
\mA+\mA^\top+\Tr(\mA\bm\Sigma)\,\bm\Sigma^{-1}.
\]
\end{proof}

\begin{lemma}[The Krein--Rutman Theorem~\citep{krein1948linear}; see also Theorem~19.2 in \citet{deimling1985nonlinear}]\label{lemma:Krein--Rutman}
    Let $\gX$ be a finite-dimensional vector space and $\gK \subset \gX$ be a closed, convex, pointed cone with nonempty interior. Let $\gT: \gX\to \gX$ be a linear opearator with $\gT(\gK) \subseteq \gK$. Then, there exists $\vx\in \gK\setminus \{\vzero\}$ such that
    \begin{align*}
        \gT(\vx) \;=\; \rho(\gT)\, \vx\;.
    \end{align*}
    This is a standard corollary of the Krein--Rutman Theorem in finite dimension.
\end{lemma}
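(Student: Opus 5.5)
The statement is the finite-dimensional Krein--Rutman corollary: a linear map on $\gX$ preserving a closed, convex, pointed, solid cone $\gK$ has an eigenvector in $\gK\setminus\{\vzero\}$ with eigenvalue $\rho(\gT)$. The plan is to prove it directly by a resolvent/perturbation argument. The only facts used are that $\gK$ is solid and closed, and that $\gK\cap(-\gK)=\{\vzero\}$.

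\emph{Step 1: the strictly positive case.} Fix an interior point $\ve\in\operatorname{int}\gK$ and set $\gT_\epsilon:=\gT+\epsilon\,\ve\,\phi(\cdot)$. Here $\phi$ is a linear functional that is strictly positive on $\gK\setminus\{\vzero\}$. Such a $\phi$ exists because $\gK$ is pointed and closed, so its dual cone has nonempty interior. The map $\gT_\epsilon$ sends $\gK\setminus\{\vzero\}$ into $\operatorname{int}\gK$. Choose $r>\rho(\gT_\epsilon)$; then the resolvent $(r\,\mathrm{Id}-\gT_\epsilon)^{-1}=\sum_k r^{-k-1}\gT_\epsilon^k$ maps $\gK$ into $\gK$.

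\emph{Step 2: the eigenvalue $\rho(\gT_\epsilon)$ has a cone eigenvector.} I would argue by contradiction. Suppose no $\vx\in\gK\setminus\{\vzero\}$ satisfies $\gT_\epsilon\vx=\rho(\gT_\epsilon)\vx$. Take $\vy\in\operatorname{int}\gK$ and consider $\vx_r:=(r\,\mathrm{Id}-\gT_\epsilon)^{-1}\vy\in\gK$ as $r\downarrow\rho(\gT_\epsilon)$. The standard fact to use is that the Neumann series has radius of convergence exactly $\rho$ in the cone-dominated direction: because $\vy$ is interior, $\gK$-positivity forces $\|\vx_r\|\to\infty$ as $r\downarrow\rho$. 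To justify this, dominate any vector by multiples of $\vy$; then finiteness of $\sup_r\|\vx_r\|$ would give convergence of $\sum r^{-k}\gT_\epsilon^k$ at $r=\rho$ on all of $\gX$, contradicting the definition of the spectral radius. Normalize $\vz_r:=\vx_r/\|\vx_r\|\in\gK$ and extract a limit $\vz$. This limit lies in $\gK$, is nonzero, and satisfies $(\rho\,\mathrm{Id}-\gT_\epsilon)\vz=\lim \vy/\|\vx_r\|+(\rho-r)\vz_r=\vzero$, which is the required eigenvector.

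\emph{Step 3: limit $\epsilon\to0$.} Normalized eigenvectors $\vz_\epsilon\in\gK$, $\|\vz_\epsilon\|=1$, have a convergent subsequence with limit $\vz\in\gK$ and $\|\vz\|=1$. By continuity of the spectral radius (eigenvalues depend continuously on the matrix entries), $\rho(\gT_\epsilon)\to\rho(\gT)$, and passing to the limit gives $\gT\vz=\rho(\gT)\vz$.

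The main obstacle is the blow-up claim in Step 2. Its proof needs solidity to bound every vector by a multiple of the interior vector $\vy$, and it needs closedness to pass to limits. Pointedness is what guarantees the functional $\phi$ in Step 1 exists. Alternatively, one can simply cite the standard statement, e.g.\ Theorem~19.2 of \citet{deimling1985nonlinear}, which is what the paper does.
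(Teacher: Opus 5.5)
The paper does not prove this lemma. It cites the Krein--Rutman theorem (Deimling, Thm.~19.2) and treats the finite-dimensional statement as a standard corollary. You instead give a self-contained finite-dimensional proof, and it is correct: you perturb to the strongly positive map $\gT_\epsilon=\gT+\epsilon\,\bm{e}\,\phi(\cdot)$, show a Pringsheim-type blow-up of $\vx_r=(r\,\mathrm{Id}-\gT_\epsilon)^{-1}\bm{y}$ as $r\downarrow\rho(\gT_\epsilon)$, and pass to $\epsilon\to0$ by continuity of the spectrum. The citation buys brevity. Your route buys an elementary argument that also covers $\rho(\gT)=0$. The lemma as stated includes that case, but the usual Banach-space form of Krein--Rutman assumes a positive spectral radius.

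Two steps should be made explicit.

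First, the inference from $\sup_r\|\vx_r\|<\infty$ to convergence at $r=\rho$ uses normality of $\gK$, i.e.\ pointedness, not only solidity and closedness. By compactness, your $\phi$ satisfies $\phi(\vz)\ge c\|\vz\|$ on $\gK$ for some $c>0$. Then $\sum_{k\le N}\rho^{-k-1}\phi(\gT_\epsilon^k\bm{y})\le\|\phi\|\sup_r\|\vx_r\|$ gives $\sum_k\rho^{-k}\|\gT_\epsilon^k\bm{y}\|<\infty$. Writing $t\bm{y}\pm\vz\in\gK$ yields $\|\gT_\epsilon^k\vz\|\le (t/c)\,\phi(\gT_\epsilon^k\bm{y})$. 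Hence $\rho^{-k}\gT_\epsilon^k\to0$, contradicting $\rho(\rho^{-1}\gT_\epsilon)=1$. Without pointedness this step genuinely fails: for $\gK=\gX=\R^2$ and a rotation, $\vx_r$ stays bounded.

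Second, this argument needs $\rho(\gT_\epsilon)>0$. That follows from $\gT_\epsilon\bm{e}\succeq\epsilon\phi(\bm{e})\bm{e}$.

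Finally, the contradiction hypothesis in Step~2 is never used, and neither is strict positivity of $\gT_\epsilon$. The same resolvent argument run directly on $\gT$ already proves the lemma, with $\rho(\gT)=0$ handled by $\vx_r\succeq r^{-1}\bm{y}$. So the perturbation in Steps~1 and~3 is optional, although $\phi$ (for normality) is still needed.
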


\section{Effect of the gradient estimator}
\label{appendix:estimator_choice}

Our main analysis studies the Gaussian symmetric two-point estimator
\begin{align}
    \widehat{\nabla} f(\vx;\vu)
    :=
    \frac{f(\vx+\mu \vu)-f(\vx-\mu \vu)}{2\mu}\,\vu,
    \qquad \vu\sim \gN(\vzero,\mI),
\end{align}
which is widely used in practice, including in MeZO-style methods for large language model fine-tuning~\citep{malladi2023finetuning}.
In this section, we explain how the same mean-square linear-stability framework changes under other estimator choices.
For clarity, we focus on ZO-GD applied to the centered quadratic loss
\begin{align}
    f(\vx)=\frac{1}{2}\vx^\top \mH\vx,
    \qquad \mH\succ 0.
\end{align}

For the symmetric two-point estimator, the quadratic update is
\begin{align}
    \vx_{t+1}
    =
    (\mI-\eta \vu_t\vu_t^\top\mH)\vx_t.
\end{align}
Defining the second moment $\mSigma_t:=\E[\vx_t\vx_t^\top]$, we obtain the linear covariance recursion
\begin{align}
    \mSigma_{t+1}=\gT_{\mH}(\mSigma_t),
\end{align}
where
\begin{align}
    \gT_{\mH}(\mSigma)
    &:=
    \E_{\vu}\!\left[
        (\mI-\eta \vu\vu^\top\mH)\mSigma(\mI-\eta \mH\vu\vu^\top)
    \right] \\
    &=
    \mSigma-\eta(\mH\mSigma+\mSigma\mH)
    +\eta^2\!\left(
        2\mH\mSigma\mH+\Tr(\mH\mSigma\mH)\mI
    \right).
\end{align}
Thus, mean-square linear stability is governed by the spectral radius of the covariance operator: stability holds when $\rho(\gT_{\mH})<1$, and the corresponding mean-square stability boundary is $\rho(\gT_{\mH})=1$.

\subsection{Forward finite-difference estimator}

Consider the one-sided forward estimator
\begin{align}
    \widehat{\nabla} f(\vx;\vu)
    :=
    \frac{f(\vx+\mu \vu)-f(\vx)}{\mu}\,\vu,
    \qquad \vu\sim \gN(\vzero,\mI).
\end{align}
Under the same quadratic loss,
\begin{align}
    \widehat{\nabla} f(\vx;\vu)
    =
    \vu\vu^\top\mH\vx
    +\frac{\mu}{2}(\vu^\top\mH\vu)\vu,
\end{align}
and therefore
\begin{align}
    \vx_{t+1}
    =
    (\mI-\eta \vu_t\vu_t^\top\mH)\vx_t
    -
    \frac{\eta\mu}{2}(\vu_t^\top\mH\vu_t)\vu_t.
\end{align}
The second term is independent of $\vx_t$ and has zero mean. Since the Gaussian direction distribution is symmetric, the cross terms in the second-moment recursion vanish, yielding
\begin{align}
    \mSigma_{t+1}
    =
    \gT_{\mH}(\mSigma_t)+\eta^2\mu^2 \mQ_{\mH},
    \qquad
    \mQ_{\mH}
    :=
    \frac{1}{4}\E_{\vu}\!\left[(\vu^\top\mH\vu)^2\vu\vu^\top\right].
\end{align}
Hence the homogeneous stability boundary is unchanged: it is still determined by $\rho(\gT_{\mH})=1$.
What changes is the behavior inside the stable regime.
When $\rho(\gT_{\mH})<1$, the covariance converges to a nonzero fixed point
\begin{align}
    \mSigma_\infty
    =
    (\mathrm{Id}-\gT_{\mH})^{-1}(\eta^2\mu^2\mQ_{\mH}).
\end{align}
Thus the forward estimator has the same linear homogeneous stability threshold as the symmetric two-point estimator, but it introduces a finite-$\mu$ covariance floor of order $O(\mu^2)$.
The symmetric two-point estimator avoids this effect on exact quadratics because it has no additive term at the minimizer.

\subsection{Non-Gaussian search directions}

Next suppose the symmetric two-point estimator uses a search direction $\vu\sim P$ with finite fourth moments and
\begin{align}
    \E[\vu\vu^\top]=\mI.
\end{align}
The quadratic update remains
\begin{align}
    \vx_{t+1}
    =
    (\mI-\eta \vu_t\vu_t^\top\mH)\vx_t,
\end{align}
but the covariance operator becomes
\begin{align}
    \mSigma_{t+1}=\gT_{\mH}^{P}(\mSigma_t),
\end{align}
where
\begin{align}
    \gT_{\mH}^{P}(\mSigma)
    :=
    \mSigma-\eta(\mH\mSigma+\mSigma\mH)
    +\eta^2\E_{\vu\sim P}\!\left[\vu\vu^\top\mH\mSigma\mH\vu\vu^\top\right].
\end{align}
Accordingly, the mean-square stability boundary is $\rho(\gT_{\mH}^{P})=1$.
This makes explicit that changing the direction distribution changes the fourth-moment term in the covariance recursion, and hence can change the mean-square critical step size.

As a concrete example, let $P$ be the uniform distribution on the sphere of radius $\sqrt{d}$.
Then $\E[\vu\vu^\top]=\mI$, and the fourth-moment identity for the sphere gives
\begin{align}
    \gT_{\mH}^{P}(\mSigma)
    =
    \mSigma-\eta(\mH\mSigma+\mSigma\mH)
    +\eta^2\frac{d}{d+2}
    \left(
        2\mH\mSigma\mH+\Tr(\mH\mSigma\mH)\mI
    \right).
\end{align}
Compared with the Gaussian case, the stochastic quadratic term is multiplied by $d/(d+2)<1$, which weakens the mean-square noise amplification and enlarges the corresponding mean-square stable region.

\subsection{Multiple Gaussian queries per iteration}

Finally, consider averaging $n$ independent Gaussian two-point estimates:
\begin{align}
    \widehat{\nabla} f_n(\vx)
    :=
    \frac{1}{n}\sum_{k=1}^n
    \frac{f(\vx+\mu \vu_k)-f(\vx-\mu \vu_k)}{2\mu}\,\vu_k,
    \qquad
    \vu_k \overset{\mathrm{i.i.d.}}{\sim}\gN(\vzero,\mI).
\end{align}
Under the quadratic loss,
\begin{align}
    \widehat{\nabla} f_n(\vx)
    =
    \mS_n\mH\vx,
    \qquad
    \mS_n:=\frac{1}{n}\sum_{k=1}^n \vu_k\vu_k^\top,
\end{align}
so the update is $\vx_{t+1}=(\mI-\eta \mS_n\mH)\vx_t$ and
\begin{align}
    \mSigma_{t+1}
    =
    \gT_{\mH}^{(n)}(\mSigma_t),
    \qquad
    \gT_{\mH}^{(n)}(\mSigma)
    :=
    \E\!\left[(\mI-\eta \mS_n\mH)\mSigma(\mI-\eta \mH\mS_n)\right].
\end{align}
For Gaussian directions and any symmetric matrix $\mA$,
\begin{align}
    \E[\mS_n\mA\mS_n]
    =
    \left(1+\frac{1}{n}\right)\mA
    +\frac{1}{n}\Tr(\mA)\mI.
\end{align}
Applying this identity with $\mA=\mH\mSigma\mH$ yields
\begin{align}
    \gT_{\mH}^{(n)}(\mSigma)
    =
    \mSigma-\eta(\mH\mSigma+\mSigma\mH)
    +\eta^2
    \left[
        \left(1+\frac{1}{n}\right)\mH\mSigma\mH
        +\frac{1}{n}\Tr(\mH\mSigma\mH)\mI
    \right].
\end{align}
Thus the mean-square stability boundary becomes $\rho(\gT_{\mH}^{(n)})=1$.
Increasing $n$ weakens the stochastic coupling terms in the covariance recursion.
In the limit $n\to\infty$,
\begin{align}
    \gT_{\mH}^{(n)}(\mSigma)
    \to
    (\mI-\eta\mH)\mSigma(\mI-\eta\mH),
\end{align}
which is exactly the deterministic first-order GD covariance operator.
Consequently, the mean-square critical step size approaches the first-order threshold $2/\lambda_{\max}(\mH)$ as $n\to\infty$.

\section{Experimental details}
\label{appendix:exp_details}

In this section, we provide additional experimental details. Our dataset construction, preprocessing, and model architectures follow the setup of \citet{cohen2025understanding}.

\paragraph{Dataset.}
We train on a subset of CIFAR-10 consisting of $1{,}000$ training examples drawn from the first four CIFAR-10 classes.
We apply standard preprocessing by subtracting the dataset-wide channel-wise mean and dividing by the dataset-wide channel-wise standard deviation.
For the squared-loss objective, we use one-hot targets: the ground-truth class is encoded as $1$ and the remaining classes are encoded as $0$.

\paragraph{Architectures.}
We evaluate three representative vision architectures:
\begin{itemize}[topsep=0pt,itemsep=2pt,parsep=2pt,leftmargin=12pt]
    \item \textbf{CNN.} A four-layer convolutional network with initial channel width $32$ and $3\times 3$ convolutional kernels. We use GeLU activations, average pooling, and a linear readout layer.
    \item \textbf{ResNet.} A 20-layer ResNet~\citep{he2016deep} with GeLU activations and GroupNorm~\citep{wu2018group}.
    \item \textbf{Vision Transformer (ViT).} A Vision Transformer~\citep{dosovitskiy2021an} with depth $3$, embedding dimension $64$, $8$ attention heads, MLP dimension $256$, and patch size $4$.
\end{itemize}

\paragraph{Top eigenvalue and trace estimation.}
During training, we log curvature statistics every $1{,}000$ iterations.
Specifically, we compute the largest eigenvalue and trace of the Hessian (or the preconditioned Hessian $\mP_t^{-1}\mH_t$ for ZO-Adam) using matrix-free procedures, without explicitly forming the Hessian.
We estimate the top eigenvalue via power iteration with $50$ iterations, and estimate the trace via Hutchinson's method with $500$ probe vectors.
Both computations rely on Hessian--vector products.

\paragraph{Compute.}
All experiments are run on a single NVIDIA H100 GPU.

\paragraph{Sequence sorting experiments.}
For the sequence experiments in Appendix~\ref{appendix:sequence_sorting}, we use the synthetic sorting task described in \citet{karpathy2020mingpt} and follow the setup of \citet[Appendix~B.3]{cohen2025understanding}. The network is fed a sequence of numbers and trained with a language-modeling loss to return the numbers in sorted order. We use numbers $1$ through $4$ and sequences of length $8$, with $1{,}000$ training examples for LSTM and $250$ training examples for Mamba. We train under the same full-batch ZO protocol used in the vision experiments and track the same mean-square stability quantities for ZO-GD, ZO-GDM, and ZO-Adam.

\section{Additional experiments}

\subsection{Sequence sorting tasks}
\label{appendix:sequence_sorting}

To test whether mean-square EoS is specific to vision architectures, we run additional full-batch ZO experiments on the synthetic sorting task described in \citet{karpathy2020mingpt}, using the setup adopted in \citet[Appendix~B.3]{cohen2025understanding}. \Cref{fig:lstm_sorting,fig:mamba_sorting} show the stability-band curves for LSTM and Mamba sequence models. Across ZO-GD, ZO-GDM, and ZO-Adam, the trace-based curvature terms again stabilize near the predicted mean-square stability thresholds, matching the qualitative behavior observed on the CIFAR-10 vision tasks.

\begin{figure*}[htbp]
    \centering
    \includegraphics[width=\linewidth]{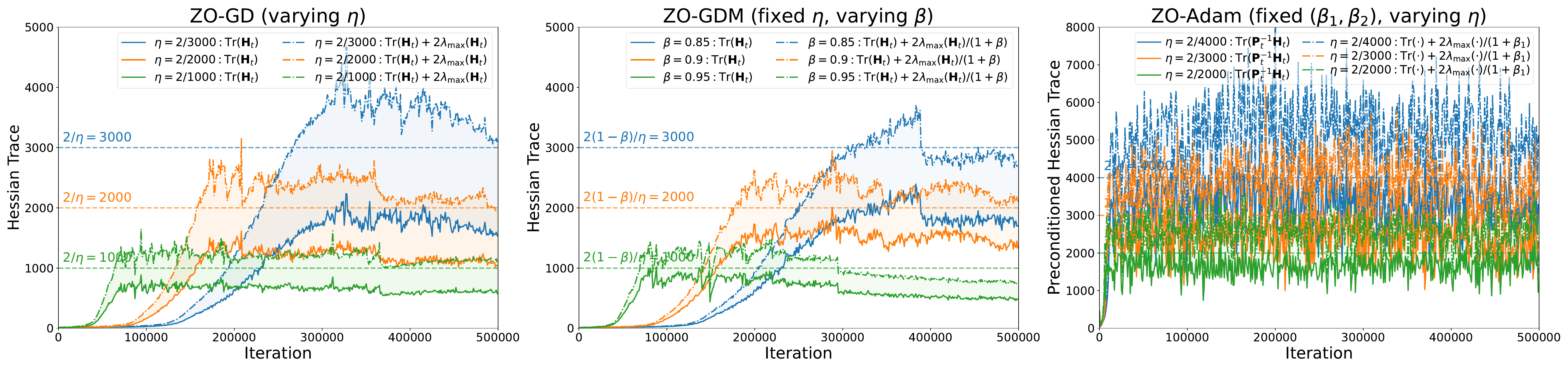}
    \caption{\textbf{Mean-square EoS on a synthetic sorting task with an LSTM.}
    On the synthetic sorting task described in \citet{karpathy2020mingpt}, using the setup adopted by \citet[Appendix~B.3]{cohen2025understanding}, we train full-batch ZO-GD, ZO-GDM, and ZO-Adam and track the corresponding mean-square stability quantities. The trace-based curvature terms stabilize near the predicted thresholds, mirroring the behavior observed in the vision experiments.}
    \label{fig:lstm_sorting}
\end{figure*}

\begin{figure*}[htbp]
    \centering
    \includegraphics[width=\linewidth]{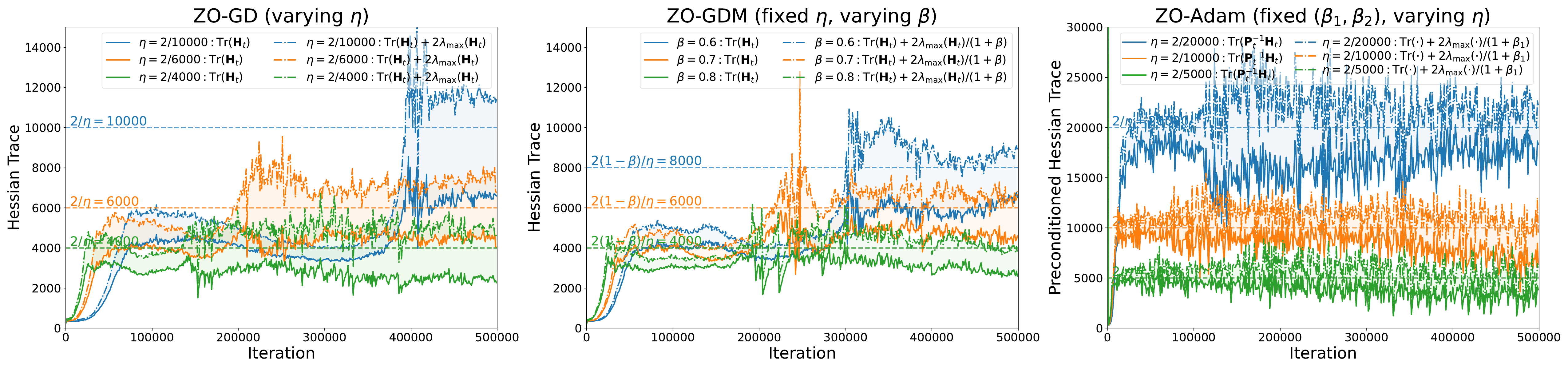}
    \caption{\textbf{Mean-square EoS on a synthetic sorting task with Mamba.}
    On the synthetic sorting task described in \citet{karpathy2020mingpt}, using the setup adopted by \citet[Appendix~B.3]{cohen2025understanding}, we train full-batch ZO-GD, ZO-GDM, and ZO-Adam and track the corresponding mean-square stability quantities. The trace-based curvature terms stabilize near the predicted thresholds, mirroring the behavior observed in the vision experiments.}
    \label{fig:mamba_sorting}
\end{figure*}

\subsection{Effect of $\beta_1$ in ZO-Adam}
\label{appendix:zo-adam_beta}

\cref{fig:zo-adam_sweep_beta} reports full-batch ZO-Adam training of a CNN on CIFAR-10 while sweeping both $\beta_1$ and the step size~$\eta$ with fixed $\beta_2=0.999$. Across $\beta_1\in\{0.1,0.5,0.9\}$, we observe that the preconditioned trace $\Tr(\mP_t^{-1}\mH_t)$ stabilizes near the stability threshold $2/\eta$, suggesting that the effective stability boundary is primarily controlled by $\eta$ and is largely insensitive to~$\beta_1$.

\subsection{Empirical verification of the commutativity approximation in ZO-Adam}
\label{appendix:commutativity}

We empirically assess the commutativity approximation $\mP_t\mH_t\approx \mH_t\mP_t$ used in \cref{thm:frozen-zo-adam}.

\paragraph{Metric and estimator.}
We measure the \emph{relative commutator Frobenius norm}
\begin{align}
\mathrm{RelComm}_F(\mP_t,\mH_t)
:=\frac{\|[\mP_t,\mH_t]\|_F}{\|\mP_t\mH_t\|_F},
\qquad [\mP_t,\mH_t]=\mP_t\mH_t-\mH_t\mP_t.
\label{eq:relcomm_def}
\end{align}
Using the identity $\|\mA\|_F^2=\mathbb{E}\|\mA\vz\|_2^2$ for $\vz$ with i.i.d.\ Rademacher entries, we estimate both $\|[\mP_t,\mH_t]\|_F$ and $\|\mP_t\mH_t\|_F$ via Hutchinson probes using only Hessian--vector products. Each probe requires two HVPs, namely $\mH_t\vz$ and $\mH_t(\mP_t\vz)$, together with a diagonal scaling by~$\mP_t$.

\paragraph{Setup.}
We use the same CNN and CIFAR-10 setup as in \cref{fig:zo-adam_sweep_beta} and log $\mathrm{RelComm}_F(\mP_t,\mH_t)$ at the same frequency as the curvature statistics. We use $50$ probes per checkpoint.

\paragraph{Results.}
Across all tested $(\beta_1,\eta)$, $\mathrm{RelComm}_F(\mP_t,\mH_t)$ decreases rapidly from the value $0.8$--$0.9$ at initialization to below $0.05$ and remains below $0.05$ throughout training (bottom row of \cref{fig:zo-adam_sweep_beta}), supporting $\mP_t\mH_t\approx \mH_t\mP_t$ as a reasonable approximation in this setting.

\paragraph{Why do $\mP_t$ and $\mH_t$ approximately commute in deep learning?}
For ZO-Adam, $\mP_t$ is diagonal in the parameter basis. Writing entries explicitly,
\[
([\mP_t,\mH_t])_{ij} = (p_{t,i}-p_{t,j})(\mH_t)_{ij},
\]
so non-commutativity is driven by off-diagonal Hessian couplings between coordinates that receive different preconditioning. Consequently, $\mP_t$ and $\mH_t$ are expected to approximately commute when ($i$) $\mH_t$ is close to block-diagonal under a natural parameter partition (e.g., by layer blocks), so cross-block couplings are small, and ($ii$) the preconditioner varies primarily across such blocks (or is slowly varying within blocks). This heuristic is consistent with empirical observations that neural network Hessians exhibit a near-block-diagonal structure, and with blockwise second-moment approximations studied in recent work (e.g., \citet{zhang2025adammini}).

\begin{figure}[htbp]
    \centering
    \includegraphics[width=\linewidth]{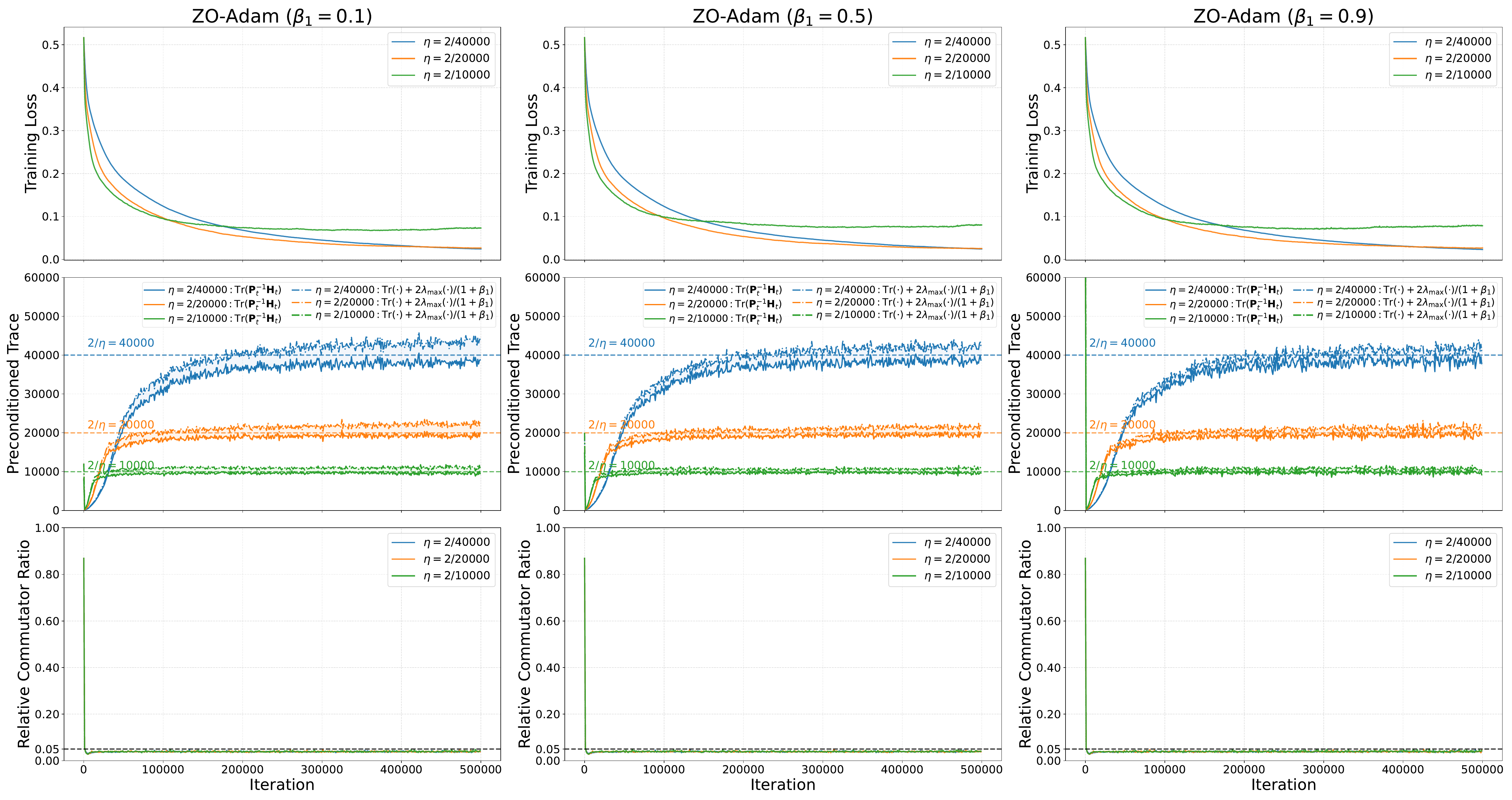}
    \caption{\textbf{ZO-Adam sweep over $\beta_1$ and $\eta$.} Full-batch ZO-Adam on a CNN trained on CIFAR-10 with $\beta_1\in\{0.1,0.5,0.9\}$ (left to right) and multiple step sizes~$\eta$ per setting. 
    \emph{Top:} training loss.
    \emph{Middle:} preconditioned curvature statistics $\Tr(\mP_t^{-1}\mH_t)$ and $\Tr(\mP_t^{-1}\mH_t)+\frac{2}{1+\beta_1}\lambda_{\max}(\mP_t^{-1}\mH_t)$.
    \emph{Bottom:} relative commutator ratio $\|[\mP_t,\mH_t]\|_F/\|\mP_t\mH_t\|_F$ (cf.\ Appendix~\ref{appendix:commutativity}).}
\label{fig:zo-adam_sweep_beta}
\end{figure}

\begin{figure}[htbp]
    \centering
    \includegraphics[width=\linewidth]{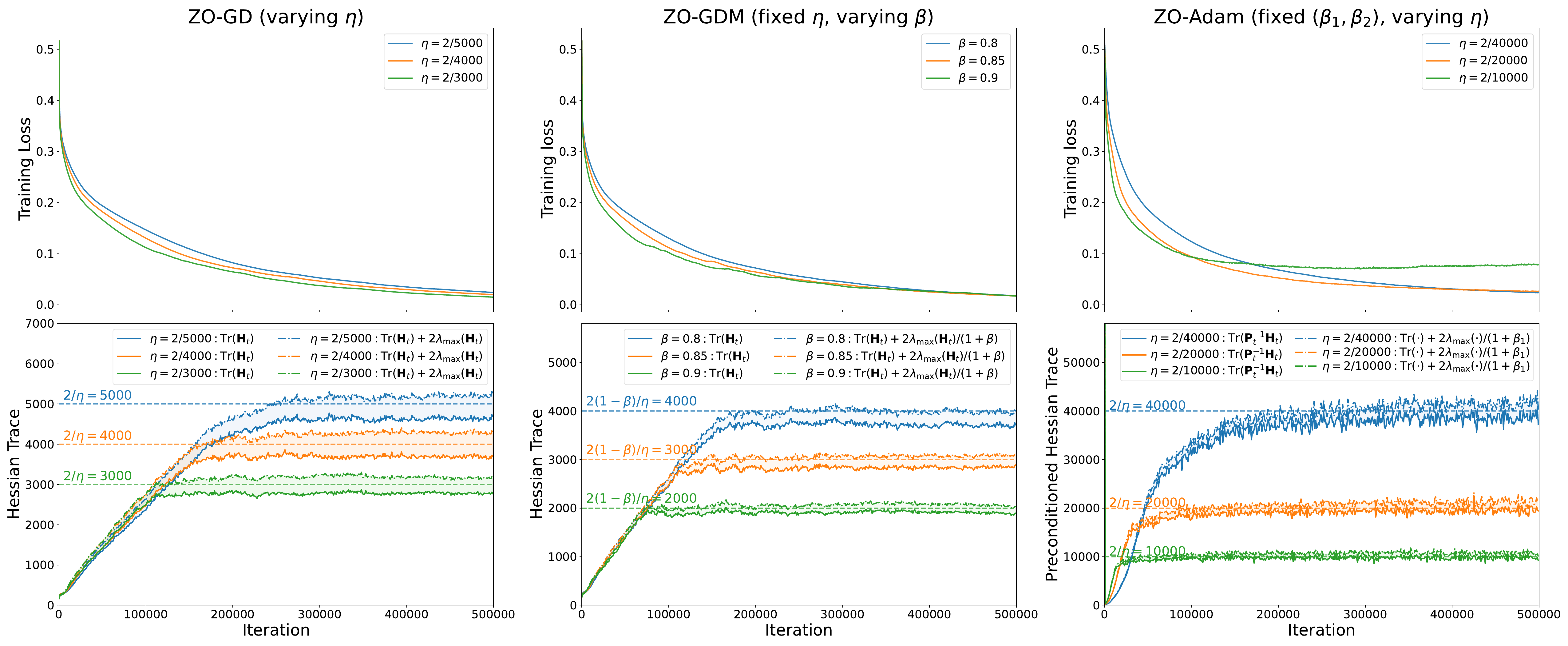}
    \caption{\textbf{CNN: same experiments as \cref{fig:main}, with training loss.}
\emph{Top:} training loss.
\emph{Bottom:} the stability-band plots from \cref{fig:main} for ZO-GD (left), ZO-GDM (middle), and ZO-Adam (right).}
\label{fig:main_with_loss}
\end{figure}

\begin{figure}[htbp]
    \centering
    \includegraphics[width=\linewidth]{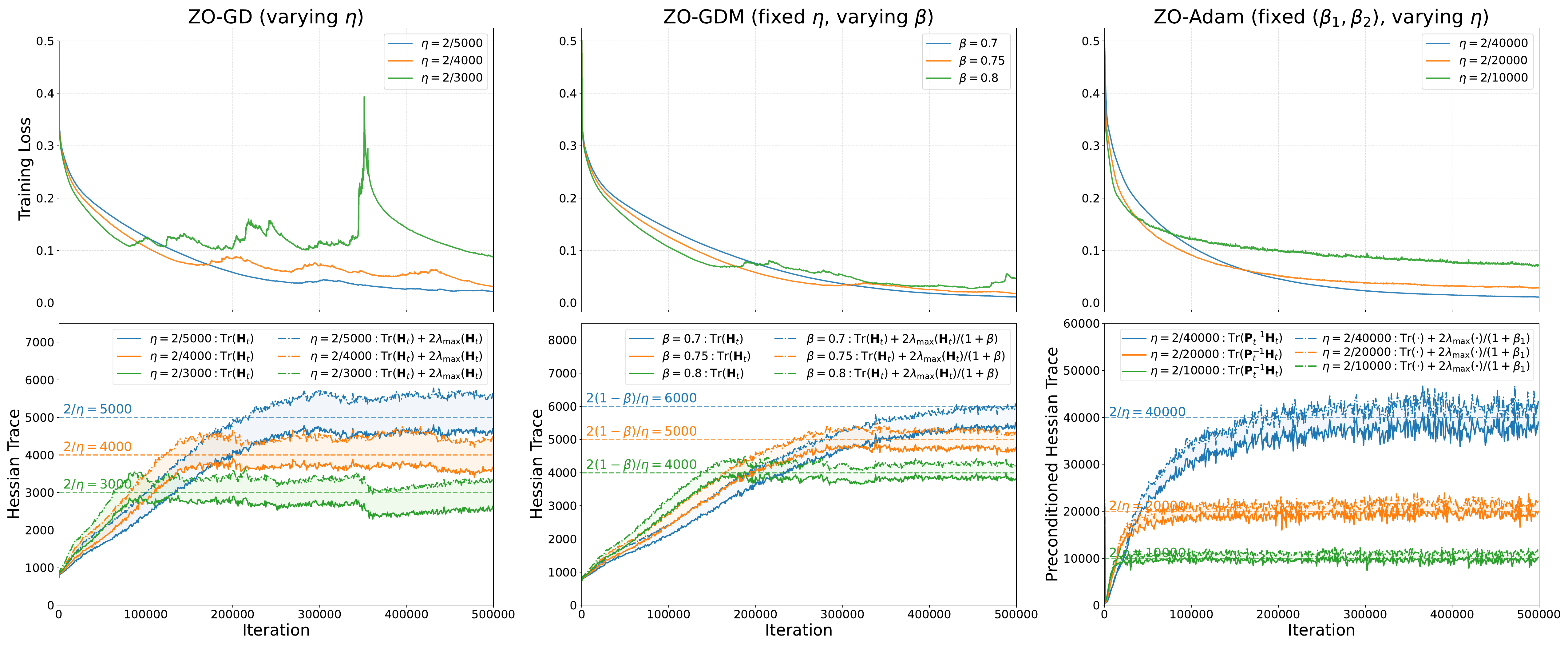}
    \caption{\textbf{ResNet: same experiments as \cref{fig:main_resnet}, with training loss.}
\emph{Top:} training loss.
\emph{Bottom:} the stability-band plots from \cref{fig:main_resnet} for ZO-GD (left), ZO-GDM (middle), and ZO-Adam (right).}
\label{fig:main_resnet_with_loss}
\end{figure}

\begin{figure}[htbp]
    \centering
    \includegraphics[width=\linewidth]{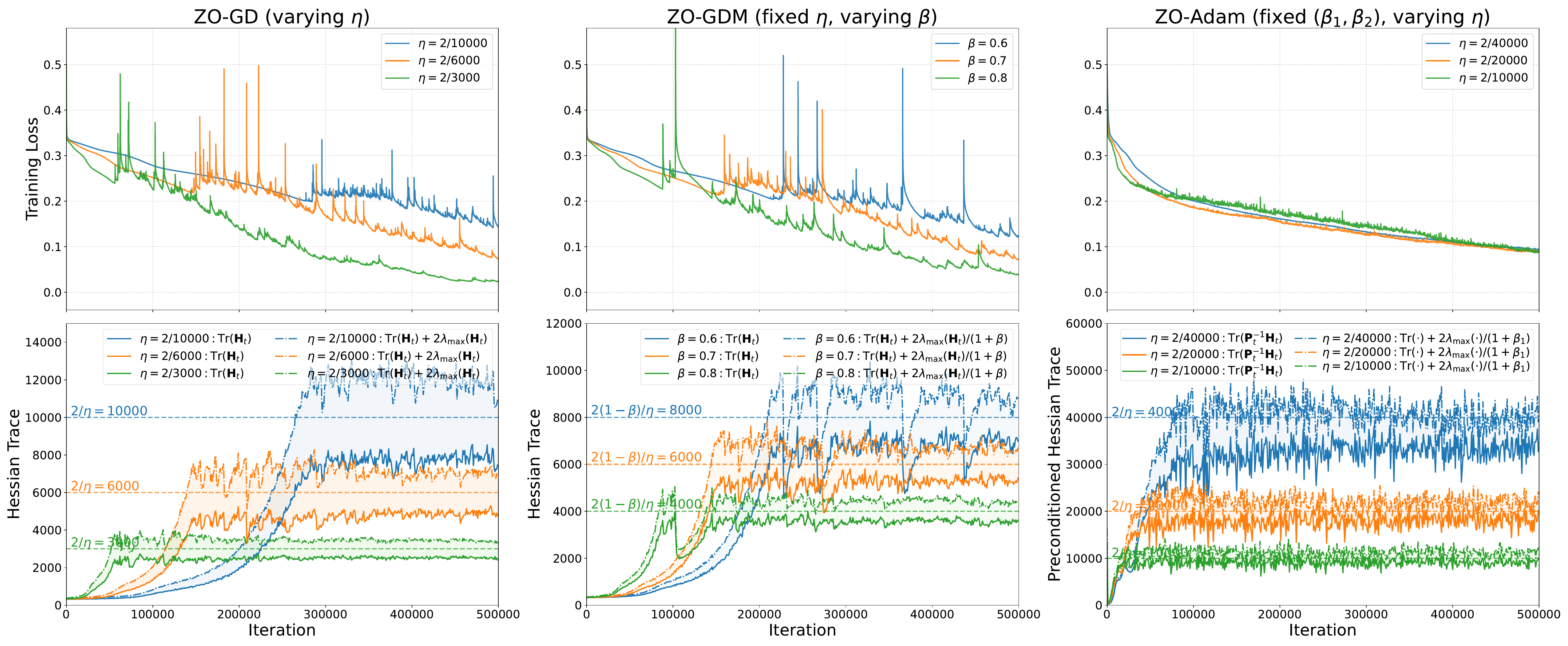}
    \caption{\textbf{Vision Transformer: same experiments as \cref{fig:main_vit}, with training loss.}
\emph{Top:} training loss.
\emph{Bottom:} the stability-band plots from \cref{fig:main_vit} for ZO-GD (left), ZO-GDM (middle), and ZO-Adam (right).}
\label{fig:main_vit_with_loss}
\end{figure}

\end{document}

%% file: math_commands.tex
\usepackage{amsmath,amsfonts,bm}

\def\eqref#1{equation~\ref{#1}}

\def\1{\bm{1}}

\def\vzero{{\bm{0}}}

\def\vm{{\bm{m}}}

\def\vu{{\bm{u}}}

\def\vx{{\bm{x}}}

\def\vz{{\bm{z}}}

\def\mA{{\bm{A}}}

\def\mC{{\bm{C}}}
\def\mD{{\bm{D}}}

\def\mH{{\bm{H}}}
\def\mI{{\bm{I}}}

\def\mK{{\bm{K}}}

\def\mP{{\bm{P}}}
\def\mQ{{\bm{Q}}}

\def\mS{{\bm{S}}}

\def\mU{{\bm{U}}}

\def\mW{{\bm{W}}}
\def\mX{{\bm{X}}}
\def\mY{{\bm{Y}}}

\def\mSigma{{\bm{\Sigma}}}

\DeclareMathAlphabet{\mathsfit}{\encodingdefault}{\sfdefault}{m}{sl}
\SetMathAlphabet{\mathsfit}{bold}{\encodingdefault}{\sfdefault}{bx}{n}

\def\gK{{\mathcal{K}}}
\def\gL{{\mathcal{L}}}
\def\gM{{\mathcal{M}}}
\def\gN{{\mathcal{N}}}

\def\gT{{\mathcal{T}}}

\def\gX{{\mathcal{X}}}

\def\sS{{\mathbb{S}}}

\newcommand{\E}{\mathbb{E}}

\newcommand{\R}{\mathbb{R}}

\newcommand{\Var}{\mathrm{Var}}

\DeclareMathOperator{\Tr}{Tr}